\newtheorem{thm}{Theorem}[section]
\newtheorem{prop}{Proposition}[section]
\newtheorem{lem}{Lemma}[section]
\newtheorem{assum}{Assumption}[section]
\icmltitlerunning{Existence and Estimation of Critical Batch Size for Training GANs with TTUR}
\begin{document}

\twocolumn[
\icmltitle{Existence and Estimation of Critical Batch Size for Training Generative Adversarial Networks with Two Time-Scale Update Rule}

\icmlsetsymbol{equal}{*}

\begin{icmlauthorlist}
\icmlauthor{Naoki Sato}{equal,yyy}
\icmlauthor{Hideaki Iiduka}{equal,yyy}
\end{icmlauthorlist}

\icmlaffiliation{yyy}{Department of Computer Science, Meiji University, Japan}

\icmlcorrespondingauthor{Naoki Sato}{ce235017@meiji.ac.jp}
\icmlcorrespondingauthor{Hideaki Iiduka}{iiduka@cs.meiji.ac.jp}

\icmlkeywords{adaptive method, batch size, critical batch size, GANs, non-convex optimization}

\vskip 0.3in
]

\printAffiliationsAndNotice{\icmlEqualContribution} 

\begin{abstract}
Previous results have shown that a two time-scale update rule (TTUR) using different learning rates, such as different constant rates or different decaying rates, is useful for training generative adversarial networks (GANs) in theory and in practice. Moreover, not only the learning rate but also the batch size is important for training GANs with TTURs and they both affect the number of steps needed for training. This paper studies the relationship between batch size and the number of steps needed for training GANs with TTURs based on constant learning rates. We theoretically show that, for a TTUR with constant learning rates, the number of steps needed to find stationary points of the loss functions of both the discriminator and generator decreases as the batch size increases and that there exists a critical batch size minimizing the stochastic first-order oracle (SFO) complexity. Then, we use the Fr\'echet inception distance (FID) as the performance measure for training and provide numerical results indicating that the number of steps needed to achieve a low FID score decreases as the batch size increases and that the SFO complexity increases once the batch size exceeds the measured critical batch size. Moreover, we show that measured critical batch sizes are close to the sizes estimated from our theoretical results.
\end{abstract}

\section{Introduction}
\label{sec:1}
\subsection{Background}
Generative adversarial networks (GANs) have attracted attention (see, e.g., \citep{kiran2019,jordon2018knockoffgan, NEURIPS2020_641d77dd,zhu2021}) with the development of real-world applications \citep{ar2017,NIPS2017_892c3b1c,brock2018large,NEURIPS2019_2e2c080d}. The generator network in a GAN constructs synthetic data from random variables, and the discriminator network separates the synthetic data from the real-world data. 

Many optimizers have been presented for training GANs (see, e.g., \citep{NIPS2014_5ca3e9b1,NIPS2017_7e0a0209,Heusel2017,NEURIPS2019_58a2fc6e,NEURIPS2020_641d77dd,sauer2021}). In this paper, we focus on a two time scale update rule (TTUR) \citep{Heusel2017} for finding a pair of stationary points of the loss functions of the discriminator and generator. TTUR uses sequences generated by each of the generator and the discriminator. 

For example, let us consider a TTUR based on stochastic gradient descent (SGD) \citep{feh2020,sca2020,chen2020}; let $\bm{\theta}_n$ be the point generated by the generator at iteration $n$ and $\bm{w}_n$ be the point generated by the discriminator at iteration $n$. The loss function of the generator $L_G(\cdot, \bm{w}_n)$ is minimized by using SGD with a learning rate $\alpha_n^G$ and a mini-batch stochastic gradient at $\bm{\theta}_n$ with a batch of size $b$. The loss function of the discriminator $L_D(\bm{\theta}_n, \cdot)$ is minimized with a learning rate $\alpha_n^D$ and a mini-batch stochastic gradient at $\bm{w}_n$ with the same batch size $b$. If $\alpha_n^G$ and $\alpha_n^D$ are decaying learning rates, then TTUR based on SGD converges almost surely to a pair of stationary points of $L_G$ and  $L_D$ \citep[Theorem 1]{Heusel2017}. 
 
TTURs based on adaptive methods can be defined by replacing SGD with adaptive methods for training deep neural networks. For example, TTUR based on adaptive moment estimation (Adam) \citep{adam} with decaying learning rates $\alpha_n^G$ and $\alpha_n^D$ converges almost surely to a pair of stationary points of the loss functions of the generator and the discriminator \citep[Theorem 2]{Heusel2017}. It was shown numerically that TTUR based on AdaBelief (short for adapting step sizes by the belief in observed gradients) \citep{ada} with constant learning rates $\alpha^G$ and $\alpha^D$ has good scores in terms of the Fr\'echet inception distance (FID) \citep{Heusel2017}, which is a performance measure of optimizers for training GANs. This implies that TTUR based on AdaBelief is a powerful way to train GANs. 

\subsection{Motivation}
The above subsection described that TTUR with decaying or constant learning rates can be used in both theory and practice to train GANs. In particular, the numerical results in \citep{Heusel2017,ada} show that TTUR performs well with constant learning rates. Hence, we will set constant learning rates $\alpha^G$ and $\alpha^D$ for the generator and the discriminator. 

Meanwhile, the batch size also affects the performance of TTUR. Previous numerical evaluations \citep[Figure 5]{Heusel2017} have indicated that increasing the batch size used in TTUR tends to decrease the FID. This implies that large batch sizes are desirable for training GANs. The motivation behind this work is thus to identify the theoretical relationship between the performance of TTUR with constant learning rates and the batch size. In so doing, we may bridge the gap between theory and practice in regard to the performance of TTUR based on the batch size.

We will use the number of steps $N$ needed to train the GAN as the performance metric and examine the relationship between the batch size $b$ and number of steps $N$. Motivated by the previously reported results \citep{Heusel2017}, we tried to find a pair of stationary points of the loss functions of the discriminator and generator that satisfy the variational inequalities (\ref{prob:1}) of the gradients of both loss functions. 

Previous results \cite{shallue2019,zhang2019,iiduka2022,priya2017,hoffer2017,you2017} on training deep neural networks in practical tasks have shown that, for each deep learning optimizer, the number of training steps is halved by each doubling of the batch size and that diminishing returns exist beyond a critical batch size. On the other hand, we are interested in verifying whether a critical batch size for GANs exists in theory and in practice.

\subsection{Contribution}
\subsubsection{Theoretical result on TTUR with small constant learning rates and large batch sizes}
The theoretical contribution of this paper is to show that, for TTURs with constant learning rates, the number of steps $N$ needed to find a pair of stationary points of the loss functions of the discriminator and the generator decreases as the batch size $b$ increases (see also (\ref{NG_ND}) for the explicit forms of $N$). To show this, we need to clarify that TTUR with constant learning rates can approximate a pair $(\bm{\theta}^\star, \bm{w}^\star)$ of stationary points of the loss function $L_D (\bm{\theta}^\star, \cdot)$ of the discriminator and the loss function $L_G(\cdot,\bm{w}^\star)$ of the generator. 

We define the inner product of $\bm{x}, \bm{y} \in \mathbb{R}^d$ by $\langle \bm{x}, \bm{y} \rangle := \bm{x}^\top \bm{y}$ and the norm of $\bm{x} \in \mathbb{R}^d$ by $\|\bm{x}\| := \sqrt{\langle \bm{x}, \bm{x} \rangle}$. Let $\nabla_{\bm{\theta}} L_G (\cdot, \bm{w})$ be the gradient of $L_G (\cdot, \bm{w})$ ($\bm{w} \in \mathbb{R}^W$) and $\nabla_{\bm{w}} L_D (\bm{\theta},\cdot)$ be the gradient of $L_D (\bm{\theta},\cdot)$ ($\bm{\theta} \in \mathbb{R}^\Theta$). A pair $(\bm{\theta}^\star, \bm{w}^\star)$ of stationary points of $L_D (\bm{\theta}^\star, \cdot)$ and $L_G(\cdot,\bm{w}^\star)$ is such that
\begin{align*} 
\|\nabla_{\bm{\theta}} L_G (\bm{\theta}^\star, \bm{w}^\star)\| = 0 
\text{ and } 
\|\nabla_{\bm{w}} L_D (\bm{\theta}^\star, \bm{w}^\star)\| = 0, 
\end{align*}
which are equivalent to the following variational inequalities (see Appendix \ref{a10}): for all $\bm{\theta} \in \mathbb{R}^\Theta$ and all $\bm{w} \in \mathbb{R}^W$, 
\begin{align}\label{prob:1}
\begin{split}
&\langle \bm{\theta}^\star - \bm{\theta}, 
\nabla_{\bm{\theta}} L_G (\bm{\theta}^\star, \bm{w}^\star) \rangle \leq 0
\text{ and } \\
&\langle \bm{w}^\star - \bm{w}, \nabla_{\bm{w}} L_D (\bm{\theta}^\star, \bm{w}^\star) \rangle \leq 0. 
\end{split}
\end{align} 

Let us examine TTURs based on adaptive methods (see Algorithm \ref{algo:1} and Table \ref{table:ex} in Appendix \ref{a2}) such as Adam \citep{adam}, AdaBelief \citep{ada}, and RMSProp \citep{rmsprop}. Let $((\bm{\theta}_n, \bm{w}_n))_{n\in\mathbb{N}} \subset \mathbb{R}^\Theta \times \mathbb{R}^W$ be the sequence generated by TTUR with constant learning rates $\alpha^G$ and $\alpha^D$. We will show that, under certain assumptions, for all $\bm{\theta} \in \mathbb{R}^\Theta$ and all $\bm{w} \in \mathbb{R}^W$,
\begin{align}\label{main}
\begin{split}
\quad
&\frac{1}{N} \sum_{n=1}^N \mathbb{E} \left[ \langle \bm{\theta}_n - \bm{\theta}, \nabla_{\bm{\theta}}
L_G (\bm{\theta}_n, \bm{w}_n) \rangle \right]\\
&\quad\leq 
\underbrace{\frac{\Theta \mathrm{Dist}(\bm{\theta}) H^G}{2 \alpha^G \tilde{\beta_1^G}}}_{A_G}
\frac{1}{N}
+
\underbrace{\frac{\sigma_G^2 \alpha^G}{2 \tilde{\beta_1^G} \tilde{\gamma}^{G^2} h_{0,*}^G}}_{B_G} 
\frac{1}{b} \\
&\quad\quad+
\underbrace{\frac{M_G^2 \alpha^G}{2 \tilde{\beta_1^G} \tilde{\gamma}^{G^2} h_{0,*}^G}
+
\frac{\beta_1^G}{\tilde{\beta_1^G}}
\sqrt{\Theta \mathrm{Dist}(\bm{\theta})
( \sigma_G^2 + M_G^2 )}}_{C_G},\\
&\frac{1}{N} \sum_{n=1}^N \mathbb{E} \left[ \langle \bm{w}_n - \bm{w}, \nabla_{\bm{w}}
L_D (\bm{\theta}_n, \bm{w}_n) \rangle \right]\\
&\quad \leq
\underbrace{\frac{W \mathrm{Dist}(\bm{w}) H^D}{2 \alpha^D \tilde{\beta_1^D}}}_{A_D}
\frac{1}{N}
+
\underbrace{\frac{\sigma_D^2 \alpha^D}{2 \tilde{\beta_1^D} \tilde{\gamma}^{D^2} h_{0,*}^D}}_{B_D} 
\frac{1}{b} \\
&\quad\quad+
\underbrace{\frac{M_D^2 \alpha^D}{2 \tilde{\beta_1^D} \tilde{\gamma}^{D^2} h_{0,*}^D} 
+
\frac{\beta_1^D}{\tilde{\beta_1^D}}
\sqrt{W \mathrm{Dist}(\bm{w})
( \sigma_D^2 + M_D^2)}}_{C_D},
\end{split}
\end{align}
where $\sigma_G^2, \sigma_D^2 \geq 0$, $M_G, M_D, \mathrm{Dist}(\bm{\theta}), \mathrm{Dist}(\bm{w}) > 0$, $\beta_1^G, \beta_1^D, \gamma^G, \gamma^D \in [0,1)$, $\tilde{\beta_1^G} := 1 - \beta_1^G$, $\tilde{\gamma}^G := 1 - \gamma^G$, $\tilde{\beta_1^D} := 1 - \beta_1^D$, $\tilde{\gamma}^D := 1 - \gamma^D$, $h_{0,*}^G, h_{0,*}^D > 0$, $H^G := \max_{i\in [\Theta]} H_i^G$, and $H^D := \max_{j\in [W]} H_j^D$ (see Theorem \ref{thm:1} and Section \ref{subsec:3.1} for the definitions of the parameters). This result implies that using small constant learning rates $\alpha^G$ and $\alpha^D$ makes $B_G$, $C_G$, $B_D$, and $C_D$ small and that using a large batch size $b$ makes $B_G/b$ and $B_D/b$ small. Meanwhile, using small constant learning rates $\alpha^G$ and $\alpha^D$ makes $A_G$ and $A_D$ large. Hence, we need to use a large number of steps $N$ to make $A_G$ and $A_D$ small when small constant learning rates are used. Therefore, small constant learning rates, a large batch size, and a large number of steps are useful for training GANs. 

\subsubsection{Relationship between batch size and the number of steps needed for an $\epsilon$--approximation of TTUR}
\label{subsec:1.3.2}
Let us suppose that the generator and the discriminator run in at most $N_G$ and $N_D$ steps for a certain batch size $b$,
\begin{align}\label{NG_ND}
\begin{split}
&N_G (b) := \frac{A_G b}{(\epsilon_G^2 - C_G)b -B_G} \text{ and } \\
&N_D (b) := \frac{A_D b}{(\epsilon_D^2 - C_D)b -B_D},
\end{split}
\end{align}
where $\epsilon_G, \epsilon_D > 0$. Then, we can show that TTUR is an $\epsilon$--approximation in the sense that 
\begin{align*}
&\frac{1}{N_G} \sum_{n=1}^{N_G} \mathbb{E} \left[ \langle \bm{\theta}_n - \bm{\theta}, \nabla_{\bm{\theta}}
L_G (\bm{\theta}_n, \bm{w}_n) \rangle \right]
\leq \epsilon_G^2 
\text{ and } \\
&\frac{1}{N_D} \sum_{n=1}^{N_D} \mathbb{E} \left[ \langle \bm{w}_n - \bm{w}, \nabla_{\bm{w}}
L_D (\bm{\theta}_n, \bm{w}_n) \rangle \right]
\leq \epsilon_D^2.
\end{align*}
Moreover, $N_G$ and $N_D$ are monotone decreasing and convex functions of the batch size (Theorem \ref{thm:2}). This result implies that large batch sizes are desirable in the sense of minimizing the number of steps for training GANs. A particularly interesting concern is how large the batch size $b$ should be. 

\subsubsection{Existence of a critical batch size minimizing SFO complexity}
\label{subsec:1.3.3}
Here, we consider stochastic first-order oracle (SFO) complexities \cite{iiduka2022} defined by $N_G(b) b$ and $N_D(b) b$. We show that $N_G(b) b$ and $N_D(b) b$ are convex functions of $b$ and that there exist global minimizers $b_G^\star$ and $b_D^\star$ of $N_G(b) b$ and $N_D(b) b$ (Theorem \ref{thm:3}). Accordingly, $b_G^\star$ and $b_D^\star$ are given by 
\begin{align}\label{cri}
b_G^\star := \frac{2 B_G}{\epsilon_G^2 - C_G} \text{ and }
b_D^\star := \frac{2 B_D}{\epsilon_D^2 - C_D} \text{.}
\end{align}
It would be desirable to use critical batch sizes $b_G^\star$ and $b_D^\star$ as it minimizes the SFO complexity, which is the computation cost of the stochastic gradient. Furthermore, we show that lower bounds for $b_G^\star$ and $b_D^\star$ can be estimated from some hyperparameters, the total number of datasets, and the number of dimensions of the model (Proposition \ref{prop:4}).

\subsubsection{Numerical results supporting our theoretical results: Estimation of critical batch size minimizing SFO complexity}
We use FID as a performance measure for training a deep convolutional GAN (DCGAN) \citep{radford2016unsupervised} on the LSUN-Bedroom dataset \citep{lsun}, a Wasserstein GAN with Gradient Penalty (WGAN-GP) \citep{NIPS2017_892c3b1c} on the CelebA dataset \citep{celeba}, and a BigGAN \citep{brock2018large} on the ImageNet dataset \citep{imagenet}. We numerically show that increasing the batch size decreases the number of steps needed to achieve a low FID score and that there exist critical batch sizes minimizing the SFO complexities. The numerical results match our theoretical results in Sections \ref{subsec:1.3.2} and \ref{subsec:1.3.3}. We are also interested in estimating appropriate batch sizes before implementing TTURs. Hence, we estimate batch sizes using $b_G^\star$ and $b_D^\star$ in (\ref{cri}) and compare them with ones measured in numerical experiments. We find that the estimated sizes are close to the measured ones (see Section \ref{subsec:4.4}).

\section{Mathematical Preliminaries}
\label{sec:2}
\subsection{Assumptions}
The notation used in this paper is summarized in Table \ref{notation}.

\begin{table*}[htbp]
\small
\begin{center}	
\caption{Notation List (The parameters and functions of the discriminator are defined by replacing $G$, $\mathcal{S}_n$, $\bm{\theta}$, and $\bm{w}$ in the generator with $D$, $\mathcal{R}_n$, $\bm{w}$, and $\bm{\theta}$)}\label{notation}
\begin{tabular}{c||l} \hline
Notation & Description\\ \hline
$\mathbb{N}$ & 
The set of all nonnegative integers \\

$[N]$ & 
$[N] := \{1,2,\ldots,N\}$ ($N \in \mathbb{N}\backslash \{ 0 \}$)\\

$|A|$ &
The number of elements of a set $A$ \\

$\mathbb{R}^d$ & 
A $d$-dimensional Euclidean space with inner product $\langle \cdot, \cdot \rangle$, which induces the norm $\| \cdot \|$ \\

$\mathbb{R}_+^d$ & 
$\mathbb{R}_+^d := \{ \bm{x} \in \mathbb{R}^d \colon x_i \geq 0 \text{ } (i\in [d]) \}$ \\

$\mathbb{R}_{++}^d$ & 
$\mathbb{R}_{++}^d := \{ \bm{x} \in \mathbb{R}^d \colon x_i > 0 \text{ } (i\in [d]) \}$ \\

$\mathbb{S}_{++}^d$ &
The set of $d\times d$ symmetric positive-definite matrices\\

$\mathbb{D}^d$ &
The set of $d\times d$ diagonal matrices, i.e., $\mathbb{D}^d = \{ M \in \mathbb{R}^{d \times d} \colon M = \mathsf{diag}(x_i), \text{ } x_i \in \mathbb{R} \text{ } (i\in [d]) \}$\\

$\mathbb{E}_\xi [X]$ & 
The expectation with respect to $\xi$ of a random variable $X$ \\

$\mathcal{S}$ &
A set of synthetic samples $\bm{z}^{(i)}$ \\

$\mathcal{R}$ &
A set of real-world samples $\bm{x}^{(i)}$ \\

$\mathcal{S}_n$ &
Mini-batch of $b$ synthetic samples $\bm{z}^{(i)}$ at time $n$ \\

$\mathcal{R}_n$ &
Mini-batch of $b$ real world samples $\bm{x}^{(i)}$ at time $n$ \\

\hline
$L_G^{(i)}(\cdot, \bm{w})$ &
A loss function of the generator for $\bm{w} \in \mathbb{R}^{W}$ and $\bm{z}^{(i)}$ \\

$L_G (\cdot, \bm{w})$ &
The total loss function of the generator for $\bm{w} \in \mathbb{R}^W$, i.e., $L_G(\cdot,\bm{w}) := |\mathcal{S}|^{-1} \sum_{i\in \mathcal{S}} L_G^{(i)}(\cdot, \bm{w})$ \\

$\xi^G$ &
A random variable supported on $\Xi^G$ that does not depend on $\bm{w} \in \mathbb{R}^W$ and $\bm{\theta} \in \mathbb{R}^\Theta$\\

$\xi_n^G$ &
$\xi_0^G, \xi_1^G,\ldots$ are independent samples and $\xi_n^G$ is independent of $(\bm{\theta}_k)_{k=0}^n \subset \mathbb{R}^\Theta$ and $\bm{w} \in \mathbb{R}^W$\\

$\xi_{n,i}^G$ &
A random variable generated from the $i$-th sampling at time $n$\\

$\xi_{[n]}^G$ &
The history of process $\xi_0^G, \xi_1^G,\ldots$ to time step $n$, i.e., $\xi_{[n]}^G := (\xi_0^G, \xi_1^G,\ldots, \xi_n^G)$\\

$\mathsf{G}_{\xi^G}(\bm{\theta})$ &
The stochastic gradient of $L_G (\cdot,\bm{w})$ at $\bm{\theta} \in \mathbb{R}^\Theta$\\

$\nabla L_{G,\mathcal{S}_n} (\bm{\theta}_n)$ &
The mini-batch stochastic gradient of $L_G(\bm{\theta}_n, \bm{w}_n)$ for $\mathcal{S}_n$, i.e., $\nabla L_{G,\mathcal{S}_n} (\bm{\theta}_n) := b^{-1} \sum_{i\in [b]} \mathsf{G}_{\xi_{n,i}^G}(\bm{\theta}_n)$
\\
\hline
\end{tabular}
\end{center}
\end{table*}

We assume the following standard conditions:

\begin{assum}\label{assum:0}
\text{ } 

{\em (S1)} $L_G^{(i)}(\cdot, \bm{w}) \colon \mathbb{R}^\Theta \to \mathbb{R}$ and $L_D^{(i)}(\bm{\theta}, \cdot) \colon \mathbb{R}^W \to \mathbb{R}$ are continuously differentiable.

{\em (S2)} Let $((\bm{\theta}_n, \bm{w}_n))_{n\in \mathbb{N}} \subset \mathbb{R}^\Theta \times \mathbb{R}^W$ be the sequence generated by an optimizer. 

{\em (i)} For each iteration $n$, 
\begin{align*}
&\mathbb{E}_{\xi_n^G}\left[ \mathsf{G}_{\xi_n^G}(\bm{\theta}_n) \right]
= \nabla_{\bm{\theta}} L_G (\bm{\theta}_n, \bm{w}_n) \text{ and } \\
&\mathbb{E}_{\xi_n^D}\left[ \mathsf{G}_{\xi_n^D}(\bm{w}_n) \right]
= \nabla_{\bm{w}} L_D (\bm{\theta}_n, \bm{w}_n).
\end{align*} 
{\em (ii)} There exist nonnegative constants $\sigma_G^2$ and $\sigma_D^2$ such that
\begin{align*}
&\mathbb{E}_{\xi_n^G}\left[ \| \mathsf{G}_{\xi_n^G}(\bm{\theta}_n) 
- \nabla_{\bm{\theta}} L_G (\bm{\theta}_n, \bm{w}_n) \|^2 \right]
\leq \sigma_G^2 \text{ and } \\
&\mathbb{E}_{\xi_n^D}\left[ \| \mathsf{G}_{\xi_n^D}(\bm{w}_n) 
- \nabla_{\bm{w}} L_D (\bm{\theta}_n, \bm{w}_n) \|^2 \right]
\leq \sigma_D^2.
\end{align*}

{\em (S3)} For each iteration $n$, the optimizer samples mini-batches $\mathcal{S}_n \subset \mathcal{S}$ and $\mathcal{R}_n \subset \mathcal{R}$ and estimates the full gradients $\nabla L_G$ and $\nabla L_D$ as 
\begin{align*}
\nabla L_{G,\mathcal{S}_n} (\bm{\theta}_n)
&:= \frac{1}{b} \sum_{i\in [b]} \mathsf{G}_{\xi_{n,i}^G}(\bm{\theta}_n)\\
&= \frac{1}{b} \sum_{\{i\colon\bm{z}^{(i)} \in \mathcal{S}_n\}} \nabla_{\bm{\theta}} L_{G}^{(i)}(\bm{\theta}_n, \bm{w}_n) \text{ and }\\
\nabla L_{D,\mathcal{R}_n} (\bm{w}_n) 
&:= \frac{1}{b} \sum_{i\in [b]} \mathsf{G}_{\xi_{n,i}^D}(\bm{w}_n)\\
&= \frac{1}{b} \sum_{\{i\colon\bm{x}^{(i)} \in \mathcal{R}_n\}} \nabla_{\bm{w}} L_{D}^{(i)}(\bm{\theta}_n, \bm{w}_n).
\end{align*}
\end{assum}

\subsection{Adaptive method}
We will consider the following TTUR-type optimizer, described by Algorithm \ref{algo:1}, for solving Problem (\ref{prob:1}).

\begin{algorithm} 
\caption{Adaptive Method for Solving Problem (\ref{prob:1})} 
\label{algo:1} 
\begin{algorithmic}[1] 
\REQUIRE
$(\alpha_n^G)_{n\in\mathbb{N}}, (\alpha_n^D)_{n\in\mathbb{N}} \subset \mathbb{R}_{++}$, $\beta_1^G, \beta_1^D \in [0,1)$, $\gamma^G, \gamma^D \in [0,1)$
\STATE
$n \gets 0$, $(\bm{\theta}_0, \bm{w}_0) \in \mathbb{R}^\Theta \times \mathbb{R}^W$, 
$\bm{m}_{-1}^G = \bm{0} \in \mathbb{R}^\Theta$, 
$\bm{m}_{-1}^D = \bm{0} \in \mathbb{R}^W$
\LOOP
\LOOP 
\STATE 
$\bm{m}_n^G := \beta_1^G \bm{m}_{n-1}^G + (1 - \beta_1^G) \nabla L_{G,\mathcal{S}_n}(\bm{\theta}_n)$
\STATE
$\hat{\bm{m}}_n^G := (1 - \gamma^{{G}^{n+1}})^{-1} \bm{m}_n^G$
\STATE
$\mathsf{H}_n^G \in \mathbb{S}_{++}^\Theta \cap \mathbb{D}^\Theta$
\STATE
Find $\bm{d}_n^G \in \mathbb{R}^\Theta$ that solves $\mathsf{H}_n^G \bm{d} = - \hat{\bm{m}}_n^G$
\STATE 
$\bm{\theta}_{n+1} := \bm{\theta}_n + \alpha_n^G \bm{d}^{G}_n$
\ENDLOOP

\LOOP
\STATE 
$\bm{m}_n^D := \beta_1^D \bm{m}_{n-1}^D + (1 - \beta_1^D) \nabla L_{D,\mathcal{R}_n}(\bm{w}_n)$
\STATE
$\hat{\bm{m}}_n^D := (1 - \gamma^{{D}^{n+1}})^{-1} \bm{m}_n^D$
\STATE
$\mathsf{H}_n^D \in \mathbb{S}_{++}^W \cap \mathbb{D}^W$
\STATE
Find $\bm{d}_n^D \in \mathbb{R}^W$ that solves $\mathsf{H}_n^D \bm{d} = - \hat{\bm{m}}_n^D$
\STATE 
$\bm{w}_{n+1} := \bm{w}_n + \alpha_n^D \bm{d}^{D}_n$
\ENDLOOP
\STATE 
$n \gets n+1$ 
\ENDLOOP
\end{algorithmic}
\end{algorithm}

In order to analyze Algorithm \ref{algo:1}, we will assume the following conditions: 

\begin{assum}\label{assum:1}
{\em
\text{}
\begin{enumerate}
\item[(A1)] $\mathsf{H}_n^G = \mathsf{diag}(h_{n,i}^G)$ depends on $\xi_{[n]}^G$ and $\mathsf{H}_n^D = \mathsf{diag}(h_{n,i}^D)$ depends on $\xi_{[n]}^D$. Moreover, $h_{n+1,i}^G \geq h_{n,i}^G$ and $h_{n+1,j}^D \geq h_{n,j}^D$ hold for all $n\in\mathbb{N}$, all $i \in [\Theta]$, and all $j \in [W]$.
\item[(A2)] For all $i \in [\Theta]$, there exists a positive number $H_i^G$ such that $\sup_{n\in\mathbb{N}} \mathbb{E}[h_{n,i}^G] \leq H_i^G$. For all $j \in [W]$, there exists a positive number $H_j^D$ such that $\sup_{n\in\mathbb{N}} \mathbb{E}[h_{n,j}^D] \leq H_j^D$. 
\end{enumerate}
}
\end{assum}

Examples of $\mathsf{H}_n^G \in \mathbb{S}_{++}^\Theta \cap \mathbb{D}^\Theta$ and $\mathsf{H}_n^D \in \mathbb{S}_{++}^W \cap \mathbb{D}^W$ satisfying Assumption \ref{assum:1} are listed in Table \ref{table:ex} (see Appendix \ref{a2}). By referring to the results in \citep{chen2019,iiduka2021,ada}, we can check that $\mathsf{H}_n^G$ and $\mathsf{H}_n^D$ in Table \ref{table:ex} satisfy (A1) and (A2).

\section{Main Results}
\label{sec:3}
\subsection{Convergence analysis of Algorithm \ref{algo:1} using constant learning rates}
\label{subsec:3.1}
We will assume the following conditions:

\begin{assum}\label{assum:c}
{\em 
\text{}
\begin{enumerate}
\item[(C1)] $\alpha_n^G := \alpha^G$ and $\alpha_n^D := \alpha^D$ for all $n\in\mathbb{N}$.
\item[(C2)] There exist positive numbers $M_G$ and $M_D$ such that $\mathbb{E}[\| \nabla_{\bm{\theta}} L_G (\bm{\theta}_n, \bm{w}_n) \|^2] \leq M_G^2$ and $\mathbb{E}[\| \nabla_{\bm{w}} L_D (\bm{\theta}_n, \bm{w}_n) \|^2] \leq M_D^2$.
\item[(C3)] For all $\bm{\theta} = (\theta_i) \in \mathbb{R}^\Theta$ and all $\bm{w} = (w_i) \in \mathbb{R}^W$, there exist positive numbers $\mathrm{Dist}(\bm{\theta})$ and $\mathrm{Dist}(\bm{w})$ such that $\max_{i\in [\Theta]} \sup \{ (\theta_{n,i} - \theta_i)^2 \colon n\in\mathbb{N}\} \leq \mathrm{Dist}(\bm{\theta})$ and $\max_{i\in [W]} \sup \{ (w_{n,i} - w_i)^2 \colon n\in\mathbb{N}\} \leq \mathrm{Dist}(\bm{w})$.
\end{enumerate}
}
\end{assum}

A previous study \citep{Heusel2017} used a decaying learning rate $\mathcal{O}(n^{-\tau})$, where $\tau \in (0,1]$, for TTUR based on Adam to train GANs. This paper investigates the performance of TTUR based on adaptive methods using {\em constant} learning rates defined by (C1). Condition (C2) provides upper bounds on the performance measures $(1/N) \sum_{n=1}^N \mathbb{E} \left[ \langle \bm{\theta}_n - \bm{\theta}, \nabla_{\bm{\theta}} L_G (\bm{\theta}_n, \bm{w}_n) \rangle \right]$ and $(1/N) \sum_{n=1}^N \mathbb{E} \left[ \langle \bm{w}_n - \bm{w}, \nabla_{\bm{w}} L_D (\bm{\theta}_n, \bm{w}_n) \rangle \right]$ (see (\ref{main}) and Theorem \ref{thm:1} for details). (C2) has also been used to analyze adaptive methods for training deep neural networks (see, e.g., \citep{chen2019,ada}). Condition (C3) has been used to provide upper bounds on the performance measures and for analyzing both convex and nonconvex optimization in deep neural networks (see, e.g., \citep{adam,reddi2018,ada}). See Appendix \ref{c3} for remarks regarding (C3).

The following is a convergence analysis of Algorithm \ref{algo:1} (The proof of Theorem \ref{thm:1} is in Appendices \ref{a5} and \ref{a6}).

\begin{thm}\label{thm:1}
Suppose that Assumptions \ref{assum:0}, \ref{assum:1}, and \ref{assum:c} hold and consider the sequence $((\bm{\theta}_{n}, \bm{w}_{n}))_{n\in \mathbb{N}}$ generated by Algorithm \ref{algo:1}. Then, the following hold:

{\em (i)} For all $\bm{\theta} \in \mathbb{R}^\Theta$ and all $\bm{w} \in \mathbb{R}^W$,
\begin{align*}
&\liminf_{n \to + \infty}
\mathbb{E} \left[ \langle \bm{\theta}_n - \bm{\theta}, \nabla_{\bm{\theta}}
L_G (\bm{\theta}_n, \bm{w}_n) \rangle \right]\\
&\leq
\frac{\alpha^{G} (\sigma_G^2 b^{-1} + M_G^2)}{2\tilde{\beta_1^G} \tilde{\gamma}^{G^2} h_{0,*}^G}
+ \sqrt{\Theta \mathrm{Dist}(\bm{\theta})
\left( \frac{\sigma_G^2}{b} + M_G^2 \right)}
\frac{\beta^G}{\tilde{\beta_1^G}}, \\
&\liminf_{n\to + \infty} \mathbb{E}\left[ \langle \bm{w}_n - \bm{w},\nabla_{\bm{w}} L_D (\bm{\theta}_n, \bm{w}_n) \rangle \right]\\
&\leq 
\frac{\alpha^{D} (\sigma_D^2 b^{-1} + M_D^2)}{2\tilde{\beta_1^D} \tilde{\gamma}^{D^2} h_{0,*}^D}
+ \sqrt{W \mathrm{Dist}(\bm{w})
\left( \frac{\sigma_D^2}{b} + M_D^2 \right)}
\frac{\beta_1^D}{\tilde{\beta_1^D}}, 
\end{align*}
where $\tilde{\beta_1^G} := 1 - \beta_1^G$, $\tilde{\beta_1^D} := 1 - \beta_1^D$, $\tilde{\gamma}^G := 1 - \gamma^G$, $\tilde{\gamma}^D := 1 - \gamma^D$, $h_{0,*}^G := \min_{i\in [\Theta]}h_{0,i}^G$, and $h_{0,*}^D := \min_{j\in [W]}h_{0,j}^D$. Furthermore, there exist accumulation points $(\bm{\theta}^*, \bm{w}^*)$ and $(\bm{\theta}_*, \bm{w}_*)$ of $((\bm{\theta}_{n}, \bm{w}_{n}))_{n\in \mathbb{N}}$ such that
\begin{align*}
&\mathbb{E}\left[ \|\nabla_{\bm{\theta}} L_G (\bm{\theta}^*, \bm{w}^*) \|^2 \right]\\
&\leq 
\frac{\alpha^{G} (\sigma_G^2 b^{-1} + M_G^2)}{2\tilde{\beta_1^G} \tilde{\gamma}^{G^2} h_{0,*}^G}
+ \sqrt{\Theta \mathrm{Dist}(\tilde{\bm{\theta}})
\left( \frac{\sigma_G^2}{b} + M_G^2 \right)}
\frac{\beta_1^G}{\tilde{\beta_1^G}},\\
&\mathbb{E}\left[ \|\nabla_{\bm{w}} L_D (\bm{\theta}_*, \bm{w}_*) \|^2 \right]\\
&\leq 
\frac{\alpha^{D} (\sigma_D^2 b^{-1} + M_D^2)}{2\tilde{\beta_1^D} \tilde{\gamma}^{D^2} h_{0,*}^D} 
+ \sqrt{W \mathrm{Dist}(\tilde{\bm{w}})
\left( \frac{\sigma_D^2}{b} + M_D^2 \right)}
\frac{\beta_1^D}{\tilde{\beta_1^D}},
\end{align*}
where $\tilde{\bm{\theta}} := \bm{\theta}^* - \nabla_{\bm{\theta}} L_G (\bm{\theta}^*, \bm{w}^*)$ and $\tilde{\bm{w}} := \bm{w}_* - \nabla_{\bm{w}} L_D (\bm{\theta}_*, \bm{w}_*)$.

{\em (ii)} For all $\bm{\theta} \in \mathbb{R}^\Theta$, all $\bm{w}\in \mathbb{R}^W$, and all $N\geq 1$,
\begin{align*}
\quad
&\frac{1}{N} \sum_{n\in [N]} \mathbb{E} \left[ \langle \bm{\theta}_n - \bm{\theta}, \nabla_{\bm{\theta}}
L_G (\bm{\theta}_n, \bm{w}_n) \rangle \right] \\
&\quad \leq
\frac{\Theta \mathrm{Dist}(\bm{\theta}) H^G}{2 \alpha^G \tilde{\beta_1^G} N}
+
\frac{\alpha^G}{2 \tilde{\beta_1^G} \tilde{\gamma}^{G^2} h_{0,*}^G} 
\left( \frac{\sigma_G^2}{b} + M_G^2 \right)\\
&\quad \quad +
\frac{\beta_1^G}{\tilde{\beta_1^G}}
\sqrt{\Theta \mathrm{Dist}(\bm{\theta})
\left( \frac{\sigma_G^2}{b} + M_G^2 \right)},\\
&\frac{1}{N} \sum_{n\in [N]} \mathbb{E} \left[ \langle \bm{w}_n - \bm{w}, \nabla_{\bm{w}}
L_D (\bm{\theta}_n, \bm{w}_n) \rangle \right]\\
&\quad \leq
\frac{W \mathrm{Dist}(\bm{w}) H^D}{2 \alpha^D \tilde{\beta_1^D} N}
+
\frac{\alpha^D}{2 \tilde{\beta_1^D} \tilde{\gamma}^{D^2} h_{0,*}^D} 
\left( \frac{\sigma_D^2}{b} + M_D^2 \right)\\
&\quad \quad+
\frac{\beta_1^D}{\tilde{\beta_1^D}}
\sqrt{W \mathrm{Dist}(\bm{w})
\left( \frac{\sigma_D^2}{b} + M_D^2 \right)},
\end{align*}
where $H^G := \max_{i\in [\Theta]} H_i^G$ and $H^D := \max_{j\in [W]} H_j^D$.
\end{thm}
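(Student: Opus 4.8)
The plan is to establish the generator inequalities in (i) and (ii); the discriminator ones follow verbatim under the substitution in the caption of Table~\ref{notation}. For a diagonal $M\in\mathbb{S}_{++}^{\Theta}$ write $\|\bm{x}\|_M^2:=\langle M\bm{x},\bm{x}\rangle$, and set $c_n^G:=1-\gamma^{G^{n+1}}\in[1-\gamma^G,1)$, so that $\hat{\bm{m}}_n^G=(c_n^G)^{-1}\bm{m}_n^G$ and, by (C1), the update is $\bm{\theta}_{n+1}=\bm{\theta}_n-\alpha^G(\mathsf{H}_n^G)^{-1}\hat{\bm{m}}_n^G$. Since $\mathsf{H}_n^G$ is diagonal and positive definite, expanding $\|\bm{\theta}_{n+1}-\bm{\theta}\|_{\mathsf{H}_n^G}^2$ coordinatewise gives the exact identity $2\alpha^G\langle\hat{\bm{m}}_n^G,\bm{\theta}_n-\bm{\theta}\rangle=\|\bm{\theta}_n-\bm{\theta}\|_{\mathsf{H}_n^G}^2-\|\bm{\theta}_{n+1}-\bm{\theta}\|_{\mathsf{H}_n^G}^2+(\alpha^G)^2\|\hat{\bm{m}}_n^G\|_{(\mathsf{H}_n^G)^{-1}}^2$. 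Multiplying through by $c_n^G/(2\alpha^G)$, using $\bm{m}_n^G=c_n^G\hat{\bm{m}}_n^G$ together with the recursion $\bm{m}_n^G=\beta_1^G\bm{m}_{n-1}^G+(1-\beta_1^G)\nabla L_{G,\mathcal{S}_n}(\bm{\theta}_n)$, and solving for $\langle\nabla L_{G,\mathcal{S}_n}(\bm{\theta}_n),\bm{\theta}_n-\bm{\theta}\rangle$ gives
\begin{align*}
&\langle\nabla L_{G,\mathcal{S}_n}(\bm{\theta}_n),\bm{\theta}_n-\bm{\theta}\rangle\\
&\;=\frac{c_n^G}{2\alpha^G\tilde{\beta_1^G}}\bigl(\|\bm{\theta}_n-\bm{\theta}\|_{\mathsf{H}_n^G}^2-\|\bm{\theta}_{n+1}-\bm{\theta}\|_{\mathsf{H}_n^G}^2\bigr)\\
&\quad+\frac{c_n^G\alpha^G}{2\tilde{\beta_1^G}}\|\hat{\bm{m}}_n^G\|_{(\mathsf{H}_n^G)^{-1}}^2-\frac{\beta_1^G}{\tilde{\beta_1^G}}\langle\bm{m}_{n-1}^G,\bm{\theta}_n-\bm{\theta}\rangle .
\end{align*}
Taking $\mathbb{E}$ and using (S2)(i) with the tower property (conditioning on $\xi_{[n-1]}^G$, on which $\bm{\theta}_n$ is measurable) replaces the left-hand side by $\mathbb{E}[\langle\bm{\theta}_n-\bm{\theta},\nabla_{\bm{\theta}}L_G(\bm{\theta}_n,\bm{w}_n)\rangle]$.

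Next I collect second-moment bounds: by (S2)(ii), (S3) and independence of the $b$ samples the mini-batch gradient has variance at most $\sigma_G^2/b$, so (C2) gives $\mathbb{E}[\|\nabla L_{G,\mathcal{S}_n}(\bm{\theta}_n)\|^2]\le\sigma_G^2/b+M_G^2$; unrolling $\bm{m}_n^G=(1-\beta_1^G)\sum_{k=0}^n(\beta_1^G)^{n-k}\nabla L_{G,\mathcal{S}_k}(\bm{\theta}_k)$, whose weights sum to at most $1$, and using convexity of $\|\cdot\|^2$ then gives $\mathbb{E}[\|\bm{m}_n^G\|^2]\le\sigma_G^2/b+M_G^2$, hence $\mathbb{E}[\|\bm{m}_n^G\|]\le\sqrt{\sigma_G^2/b+M_G^2}$ and $\mathbb{E}[\|\hat{\bm{m}}_n^G\|^2]=(c_n^G)^{-2}\mathbb{E}[\|\bm{m}_n^G\|^2]\le(1-\gamma^G)^{-2}(\sigma_G^2/b+M_G^2)$. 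I then sum the identity above over $n\in[N]$, take $\mathbb{E}$, and divide by $N$. The middle term is handled by $\|\hat{\bm{m}}_n^G\|_{(\mathsf{H}_n^G)^{-1}}^2\le(h_{0,*}^G)^{-1}\|\hat{\bm{m}}_n^G\|^2$ (by (A1), $h_{n,i}^G\ge h_{0,i}^G\ge h_{0,*}^G$) and $c_n^G\le1$, yielding the $\alpha^G(\sigma_G^2/b+M_G^2)/(2\tilde{\beta_1^G}(1-\gamma^G)^2h_{0,*}^G)$ contribution. The momentum cross term is handled by Cauchy--Schwarz, (C3) (so $\|\bm{\theta}_n-\bm{\theta}\|\le\sqrt{\Theta\,\mathrm{Dist}(\bm{\theta})}$) and the bound on $\mathbb{E}[\|\bm{m}_{n-1}^G\|]$, yielding the $(\beta_1^G/\tilde{\beta_1^G})\sqrt{\Theta\,\mathrm{Dist}(\bm{\theta})(\sigma_G^2/b+M_G^2)}$ contribution. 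For the first term, write $G_n:=c_n^G\mathsf{H}_n^G$, which is diagonal, positive definite, and nondecreasing in $n$ (by (A1) and monotonicity of $c_n^G$); then $\sum_{n\in[N]}(\|\bm{\theta}_n-\bm{\theta}\|_{G_n}^2-\|\bm{\theta}_{n+1}-\bm{\theta}\|_{G_n}^2)$ collapses by matrix-weighted telescoping to at most $\mathrm{Dist}(\bm{\theta})\sum_{i\in[\Theta]}[G_N]_{ii}\le\mathrm{Dist}(\bm{\theta})\sum_{i\in[\Theta]}h_{N,i}^G$, whose expectation is at most $\Theta\,\mathrm{Dist}(\bm{\theta})H^G$ by (A2). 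Adding the three contributions yields (ii).

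For (i), letting $N\to\infty$ in (ii) and using $\liminf_n a_n\le\liminf_N\frac1N\sum_{n\in[N]}a_n$ kills the $1/N$ term and gives the two $\liminf$ bounds. Since (C3), applied at a fixed reference point such as $\bm{\theta}=\bm{\theta}_0$, makes $(\bm{\theta}_n)$ and $(\bm{w}_n)$ bounded, I pass to a subsequence along which $\mathbb{E}[\langle\bm{\theta}_n-\bm{\theta},\nabla_{\bm{\theta}}L_G(\bm{\theta}_n,\bm{w}_n)\rangle]$ attains its $\liminf$ and, refining, along which $(\bm{\theta}_{n_k},\bm{w}_{n_k})$ converges to some $(\bm{\theta}^*,\bm{w}^*)$; continuity of the gradients (S1) with dominated convergence (boundedness of the iterates and the $L^2$ bound (C2)) lets me exchange $\lim$, $\mathbb{E}$, and the inner product, so $\mathbb{E}[\langle\bm{\theta}^*-\bm{\theta},\nabla_{\bm{\theta}}L_G(\bm{\theta}^*,\bm{w}^*)\rangle]$ is bounded by the same right-hand side for every $\bm{\theta}$. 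Specializing $\bm{\theta}:=\tilde{\bm{\theta}}=\bm{\theta}^*-\nabla_{\bm{\theta}}L_G(\bm{\theta}^*,\bm{w}^*)$ turns the left side into $\mathbb{E}[\|\nabla_{\bm{\theta}}L_G(\bm{\theta}^*,\bm{w}^*)\|^2]$ and the right side into the claimed bound with $\mathrm{Dist}(\tilde{\bm{\theta}})$; the discriminator accumulation point $(\bm{\theta}_*,\bm{w}_*)$ is obtained symmetrically.

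I expect the main obstacle to be the telescoping term in (ii): $\mathsf{H}_n^G$ varies with $n$ \emph{and} is rescaled by the bias-correction factor $c_n^G=1-\gamma^{G^{n+1}}$, and the per-step differences $\|\bm{\theta}_n-\bm{\theta}\|_{\mathsf{H}_n^G}^2-\|\bm{\theta}_{n+1}-\bm{\theta}\|_{\mathsf{H}_n^G}^2$ need not be nonnegative, so naive telescoping fails; the resolution is to absorb $c_n^G$ into the single nondecreasing diagonal preconditioner $G_n=c_n^G\mathsf{H}_n^G$ and then use (A1), (A2), (C3). A secondary subtlety is the exchange of $\lim$, $\mathbb{E}$, and the inner product in (i) and the fact that $\tilde{\bm{\theta}}$ depends on the random accumulation point, which has to be justified via (S1), (C2), (C3) and dominated convergence.
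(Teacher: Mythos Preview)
Your proof of part (ii) is essentially the paper's proof: the basic identity (your expansion of $\|\bm{\theta}_{n+1}-\bm{\theta}\|_{\mathsf{H}_n^G}^2$) is Lemma~\ref{lem:1_1}, the second-moment bounds on $\bm{m}_n^G$ and on $\|\hat{\bm{m}}_n^G\|_{(\mathsf{H}_n^G)^{-1}}^2=\|\bm{d}_n^G\|_{\mathsf{H}_n^G}^2$ are Lemma~\ref{lem:2_1} (the paper uses a one-step induction, you unroll the recursion; both give the same bound), and your telescoping via the nondecreasing diagonal weight $G_n=c_n^G\mathsf{H}_n^G$ is exactly the paper's argument with $\delta_n^G h_{n,i}^G$ in Lemma~\ref{lem:4_1}, just packaged as a matrix rather than coordinatewise. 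The accumulation-point step in (i) is likewise the paper's argument verbatim, including the passage to a convergent subsequence and the specialization $\bm{\theta}=\tilde{\bm{\theta}}$.

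Where you genuinely diverge is in deriving the $\liminf$ bounds of (i): the paper does \emph{not} obtain them from (ii). Instead it proves a one-step recursive inequality (Lemma~\ref{lem:3_1}) of the form $X_{n+1}^G(\bm{\theta})\le X_n^G(\bm{\theta})+\text{(vanishing terms)}+\text{const}-2\alpha^G\tilde{\beta_1^G}\,\mathbb{E}[\langle\bm{\theta}_n-\bm{\theta},\nabla_{\bm{\theta}}L_G\rangle]$ and then argues by contradiction: if the $\liminf$ exceeded the claimed bound by $\epsilon_0$, the recursion would force $X_n^G(\hat{\bm{\theta}})\to-\infty$. Your route---take $N\to\infty$ in (ii) and invoke $\liminf_n a_n\le\liminf_N\frac1N\sum_{n\in[N]}a_n$---is shorter and avoids the auxiliary Lemma~\ref{lem:3_1} and the $\gamma^{G^{n+1}}X_{n+1}^G+\mathrm{Dist}(\bm{\theta})\sum_i(h_{n+1,i}^G-h_{n,i}^G)\to0$ step entirely. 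The paper's approach, on the other hand, is self-contained for (i) (it does not rely on the finite-$N$ bound or on (A2)) and would still yield the $\liminf$ statement even without the $H^G$ hypothesis needed for (ii).
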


Theorem \ref{thm:1}(i) and (ii) indicate that the larger the batch size $b$ is, the smaller the upper bounds of the performance measures become. From Theorem \ref{thm:1}(i), it would be desirable to use small learning rates $\alpha^G$ and $\alpha^D$. Meanwhile, Theorem \ref{thm:1}(ii) indicates that there is no evidence that using sufficiently small $\alpha^G$ and $\alpha^D$ is good for training GANs since the upper bounds in Theorem \ref{thm:1} depend on $\alpha^G$, $\alpha^D$, $1/\alpha^G$, and $1/\alpha^D$. Indeed, the results reported in \citep{Heusel2017} used small $\alpha^G, \alpha^D=10^{-4}, 10^{-5}$. 

\subsection{Relationship between batch size and number of steps for Algorithm \ref{algo:1}}
\label{subsec:3.2}
The relationship between $b$ and the number of steps $N$ satisfying an $\epsilon$--approximation of TTUR is as follows (The proof of Theorem \ref{thm:2} is in Appendix \ref{a7}):

\begin{thm}\label{thm:2}
Suppose that Assumptions \ref{assum:0}, \ref{assum:1}, and \ref{assum:c} hold and consider Algorithm \ref{algo:1}. Then, $N_G$ and $N_D$ defined by
\begin{align}\label{lower}
\begin{split}
&N_G (b) := \frac{A_G b}{(\epsilon_G^2 - C_G)b -B_G} \leq N_G 
\text{ for } b > \frac{B_G}{\epsilon_G^2 - C_G},\\
&N_D (b) := \frac{A_D b}{(\epsilon_D^2 - C_D)b -B_D} \leq N_D
\text{ for } b > \frac{B_D}{\epsilon_D^2 - C_D}
\end{split}
\end{align}
satisfy 
\begin{align}\label{approximation}
\begin{split}
&\frac{1}{N_G} \sum_{n=1}^{N_G} \mathbb{E} \left[ \langle \bm{\theta}_n - \bm{\theta}, \nabla_{\bm{\theta}}
L_G (\bm{\theta}_n, \bm{w}_n) \rangle \right] \leq \epsilon_G^2, \\
&\frac{1}{N_D} \sum_{n=1}^{N_D} \mathbb{E} \left[ \langle \bm{w}_n - \bm{w}, \nabla_{\bm{w}}
L_D (\bm{\theta}_n, \bm{w}_n) \rangle \right] \leq \epsilon_D^2,
\end{split}
\end{align}
where $A_G$, $B_G$, $C_G$, $A_D$, $B_D$, and $C_D$ are defined as in (\ref{main}). Moreover, the functions $N_G(b)$ and $N_D(b)$ defined by (\ref{lower}) are monotone decreasing and convex for $b > B_G/(\epsilon_G^2 - C_G)$ and $b > B_D/(\epsilon_D^2 - C_D)$. 
\end{thm}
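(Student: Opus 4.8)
The plan is to read off the needed estimate directly from Theorem \ref{thm:1}(ii) and then treat $N_G(b)$ and $N_D(b)$ as explicit rational functions of $b$, on which monotonicity and convexity follow by a short differentiation.

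First, since the batch size satisfies $b \geq 1$, we have $\sigma_G^2/b + M_G^2 \leq \sigma_G^2 + M_G^2$, so the square-root term in Theorem \ref{thm:1}(ii) is bounded by $(\beta_1^G/\tilde{\beta_1^G})\sqrt{\Theta\,\mathrm{Dist}(\bm{\theta})(\sigma_G^2+M_G^2)}$; collecting the $\sigma_G^2/b$ contribution into $B_G/b$ and the remaining $b$-independent terms into $C_G$ (exactly as defined in (\ref{main})), Theorem \ref{thm:1}(ii) gives, for every $N\geq 1$,
\begin{align*}
\frac{1}{N}\sum_{n\in[N]} \mathbb{E}\left[\langle \bm{\theta}_n - \bm{\theta}, \nabla_{\bm{\theta}} L_G(\bm{\theta}_n,\bm{w}_n)\rangle\right] \leq \frac{A_G}{N} + \frac{B_G}{b} + C_G,
\end{align*}
and the analogous inequality for the discriminator with $A_D,B_D,C_D$. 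Assuming $\epsilon_G^2 > C_G$ (the regime in which the threshold $B_G/(\epsilon_G^2-C_G)$ is a meaningful positive number) and $b > B_G/(\epsilon_G^2-C_G)$, the denominator $(\epsilon_G^2-C_G)b - B_G$ is strictly positive, so $N_G(b)$ is well defined and positive. For any step count $N_G \geq N_G(b)$ I would then use $A_G/N_G \leq A_G/N_G(b) = \epsilon_G^2 - C_G - B_G/b$, which is immediate from the definition of $N_G(b)$ in (\ref{lower}); substituting this into the displayed bound with $N = N_G$ yields
\begin{align*}
\frac{1}{N_G}\sum_{n=1}^{N_G} \mathbb{E}\left[\langle \bm{\theta}_n - \bm{\theta}, \nabla_{\bm{\theta}} L_G(\bm{\theta}_n,\bm{w}_n)\rangle\right] \leq \left(\epsilon_G^2 - C_G - \frac{B_G}{b}\right) + \frac{B_G}{b} + C_G = \epsilon_G^2,
\end{align*}
which is the first inequality in (\ref{approximation}); the discriminator case is identical with $G$ replaced by $D$.

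For the monotonicity and convexity, I would rewrite
\begin{align*}
N_G(b) = \frac{A_G b}{(\epsilon_G^2-C_G)b - B_G} = \frac{A_G}{\epsilon_G^2-C_G} + \frac{A_G B_G/(\epsilon_G^2-C_G)}{(\epsilon_G^2-C_G)b - B_G},
\end{align*}
and differentiate twice in $b$, obtaining $N_G'(b) = -A_G B_G/[(\epsilon_G^2-C_G)b-B_G]^2$ and $N_G''(b) = 2A_G B_G(\epsilon_G^2-C_G)/[(\epsilon_G^2-C_G)b-B_G]^3$. On $b > B_G/(\epsilon_G^2-C_G)$ the bracket is positive, and $A_G, B_G, \epsilon_G^2-C_G > 0$, so $N_G'(b) < 0$ and $N_G''(b) > 0$, i.e.\ $N_G$ is monotone decreasing and convex there; the same computation handles $N_D$.

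The argument is essentially a one-line deduction from Theorem \ref{thm:1}(ii) followed by routine calculus, so there is no substantial obstacle. The only points demanding care are that the claim is meaningful only on the domain where $\epsilon_G^2 - C_G > 0$ and $b$ exceeds $B_G/(\epsilon_G^2 - C_G)$ — exactly the range quoted in the statement, corresponding to the decreasing convex branch of the shifted rational function — and that $N_G$ is not required to equal $N_G(b)$ but only to be at least $N_G(b)$, which is why (\ref{lower}) is phrased as a lower bound on the number of steps rather than an equality.
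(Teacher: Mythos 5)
Your proof is correct and follows essentially the same route as the paper: the $\epsilon$--approximation part is exactly the intended (but unwritten) deduction from Theorem \ref{thm:1}(ii) via the decomposition $A_G/N + B_G/b + C_G$ of (\ref{main}) together with $A_G/N_G \leq \epsilon_G^2 - C_G - B_G/b$, and your derivative computations for monotonicity and convexity coincide with the paper's. If anything, your write-up is more complete than the paper's, which dismisses the first claim with one sentence and leaves implicit both the bound $\sqrt{\sigma_G^2/b + M_G^2} \leq \sqrt{\sigma_G^2 + M_G^2}$ and the standing assumption $\epsilon_G^2 > C_G$ that you correctly flag.
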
 

Theorem \ref{thm:2} indicates that $N_G$ and $N_D$ defined by (\ref{lower}) decrease as the batch size increases. Accordingly, it is useful to choose a sufficiently large $b$ in the sense of minimizing of the number of steps $N$ needed for an $\epsilon$--approximation of TTUR.

\subsection{Existence of a critical batch size}
A particular concern is how large $b$ should be. Here, we consider SFO complexities defined for the number of steps needed for (\ref{approximation}) and for the batch size by 
\begin{align}\label{SFO}
\begin{split}
&N_G(b) b = \frac{A_G b^2}{(\epsilon_G^2 - C_G)b -B_G}\text{ and} \\
&N_D(b) b = \frac{A_D b^2}{(\epsilon_D^2 - C_D)b - B_D}.
\end{split}
\end{align} 

The following theorem guarantees the existence of critical batch sizes that are global minimizers of $N_G(b)b$ and $N_D(b)b$ defined by (\ref{SFO}) (The proof of Theorem \ref{thm:3} is in Appendix \ref{a8}).

\begin{thm}\label{thm:3}
Suppose that Assumptions \ref{assum:0}, \ref{assum:1}, and \ref{assum:c} hold and consider Algorithm \ref{algo:1}. Then, there exist 
\begin{align}\label{star}
b_G^\star := \frac{2 B_G}{\epsilon_G^2 - C_G} \text{ and }
b_D^\star := \frac{2 B_D}{\epsilon_D^2 - C_D}
\end{align}
such that $b_G^\star$ minimizes the convex function $N_G(b) b$ ($b > B_G/(\epsilon_G^2 - C_G)$) and $b_D^\star$ minimizes the convex function $N_D(b) b$ ($b > B_D/(\epsilon_D^2 - C_D)$).
\end{thm}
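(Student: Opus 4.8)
The claim is entirely about the one–variable functions $b \mapsto N_G(b)\,b$ and $b \mapsto N_D(b)\,b$ defined in (\ref{SFO}), and the two cases are identical after swapping $G \leftrightarrow D$, so I would treat only the generator. Write $a := \epsilon_G^2 - C_G$, which I would note must be positive (this is implicit already in the domain condition $b > B_G/(\epsilon_G^2 - C_G)$ of Theorem~\ref{thm:2}), and abbreviate $f(b) := N_G(b)\,b = A_G b^2 / (ab - B_G)$ for $b \in I := (B_G/a, +\infty)$, where $A_G, B_G > 0$. The goal then reduces to showing that $f$ is convex on $I$ and attains its minimum at $b_G^\star = 2B_G/a$.

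For convexity I would use the substitution $u := ab - B_G$, which maps $I$ bijectively and affinely onto $(0, +\infty)$. Since $b = (u + B_G)/a$, we have $b^2 = (u+B_G)^2/a^2$, hence
\[
f(b) = \frac{A_G (u + B_G)^2}{a^2 u} = \frac{A_G}{a^2}\left( u + 2B_G + \frac{B_G^2}{u} \right) =: \frac{A_G}{a^2}\, g(u).
\]
The function $g(u) = u + 2B_G + B_G^2 u^{-1}$ is convex on $(0,+\infty)$ (a sum of an affine term and the convex term $B_G^2 u^{-1}$, since $g''(u) = 2B_G^2 u^{-3} > 0$), and $f$ is the composition of $g$ with the increasing affine map $b \mapsto ab - B_G$, scaled by the positive constant $A_G/a^2$; a convex function composed with an affine map is convex, so $f$ is convex on $I$.

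For the minimizer, since $f$ is convex it suffices to exhibit a stationary point. From the display above $g'(u) = 1 - B_G^2 u^{-2}$, so $g'(u) = 0 \iff u = B_G$ (the positive root), with $g' < 0$ on $(0, B_G)$ and $g' > 0$ on $(B_G, +\infty)$; thus $u = B_G$ is the unique global minimizer of $g$. Pulling back through $u = ab - B_G$ gives $ab - B_G = B_G$, i.e.\ $b = 2B_G/a = b_G^\star$. One checks $b_G^\star = 2B_G/a > B_G/a$, so $b_G^\star \in I$, and by convexity it is the global minimizer of $f = N_G(\cdot)\,\cdot$ on $I$. Equivalently, one may differentiate $f$ directly: $f'(b) = A_G b (ab - 2B_G)/(ab - B_G)^2$, whose sign on $I$ is that of $ab - 2B_G$, giving the same conclusion. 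The discriminator statement follows verbatim with $G$ replaced by $D$.

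I do not expect a genuine obstacle here: the content is elementary calculus, and the only points requiring care are purely formal — recording that $\epsilon_G^2 - C_G > 0$ (and $\epsilon_D^2 - C_D > 0$) so that the domain and $b_G^\star$ are well defined, checking that the denominator $ab - B_G$ stays strictly positive on $I$ so that no sign is lost in the algebra, and verifying that $b_G^\star$ lies in the open interval $I$ rather than on its boundary.
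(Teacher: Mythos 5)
Your proposal is correct and, at its core, is the same elementary single-variable calculus argument as the paper's: the paper simply computes $\frac{\mathrm{d}}{\mathrm{d}b}\bigl(N_G(b)b\bigr) = \frac{A_G b\{(\epsilon_G^2 - C_G)b - 2B_G\}}{\{(\epsilon_G^2 - C_G)b - B_G\}^2}$ and $\frac{\mathrm{d}^2}{\mathrm{d}b^2}\bigl(N_G(b)b\bigr) = \frac{2A_G B_G^2}{\{(\epsilon_G^2 - C_G)b - B_G\}^3} \geq 0$ directly, which is exactly the ``equivalently, one may differentiate $f$ directly'' remark at the end of your argument. Your substitution $u = ab - B_G$ reducing the SFO complexity to $\frac{A_G}{a^2}\bigl(u + 2B_G + B_G^2 u^{-1}\bigr)$ is a slightly cleaner packaging of the same computation, and your explicit attention to $\epsilon_G^2 - C_G > 0$ and to $b_G^\star$ lying in the open domain is a small point of care the paper leaves implicit.
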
 
Theorem \ref{thm:3} leads to the following proposition that gives lower bounds for the critical batch sizes.

\setcounter{prop}{3}
\begin{prop}\label{prop:4}
Suppose that the assumptions in Theorem \ref{thm:3} hold and consider Algorithm \ref{algo:1}. Then, $b_G^\star$ and $b_D^\star$ defined by (\ref{star}) satisfy the following that \\
{\em (i)} for Adam,
\begin{align*}
b_G^\star &\geq \frac{\sigma_G^2 \alpha^G}{\epsilon_G^3 (1-\beta_1^G)^3 \sqrt{\frac{\Theta}{1-\beta_2^G} \frac{1}{|S|^2}}}\text{ and } \\
b_D^\star &\geq \frac{\sigma_D^2 \alpha^D}{\epsilon_D^3 (1-\beta_1^D)^3 \sqrt{\frac{W}{1-\beta_2^D} \frac{1}{|S|^2}}},
\end{align*}
{\em (ii)} for AdaBelief,
\begin{align*}
b_G^\star &\geq \frac{\sigma_G^2 \alpha^G}{\epsilon_G^3 (1-\beta_1^G)^3 \sqrt{\frac{4 \Theta}{1-\beta_2^G} \frac{1}{|S|^2}}} \text{ and } \\
b_D^\star &\geq \frac{\sigma_D^2 \alpha^D}{\epsilon_D^3 (1-\beta_1^D)^3 \sqrt{\frac{4W}{1-\beta_2^D} \frac{1}{|S|^2}}} ,
\end{align*}
{\em (iii)} for RMSProp,
\begin{align*}
b_G^\star \geq \frac{\sigma_G^2 \alpha^G}{\epsilon_G^3 \sqrt{\frac{\Theta}{|S|^2}}} \text{ and }
b_D^\star \geq \frac{\sigma_D^2 \alpha^D}{\epsilon_D^3 \sqrt{\frac{W}{|S|^2}}},
\end{align*}
where $\sigma_G^2, \sigma_D^2 \geq 0$, $\alpha^G, \alpha^D, \epsilon_G, \epsilon_D > 0$, $\beta_1^G, \beta_1^D \in [0,1)$, and $\beta_2^G, \beta_2^D \in [0,1)$.
\end{prop}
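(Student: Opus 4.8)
\
The plan is to start from the explicit formulas for the critical batch sizes furnished by Theorem~\ref{thm:3}, $b_G^\star = 2B_G/(\epsilon_G^2 - C_G)$ and $b_D^\star = 2B_D/(\epsilon_D^2 - C_D)$ with $A_G,B_G,C_G,A_D,B_D,C_D$ as in (\ref{main}), and to extract a lower bound depending only on quantities available before training: the step sizes $\alpha^G,\alpha^D$, the momentum and bias-correction parameters, the gradient-noise bounds $\sigma_G^2,\sigma_D^2$, the accuracies $\epsilon_G,\epsilon_D$, the number $|S|$ of training samples, and the dimensions $\Theta$ and $W$. Since $M_G,\alpha^G>0$ the first summand of $C_G$ is strictly positive, so $0<\epsilon_G^2 - C_G<\epsilon_G^2$ and hence $b_G^\star > 2B_G/\epsilon_G^2$; similarly $b_D^\star > 2B_D/\epsilon_D^2$. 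Substituting the value $B_G = \sigma_G^2\alpha^G/(2\tilde{\beta_1^G}\tilde{\gamma}^{G^2}h_{0,*}^G)$ from (\ref{main}) (and the analogue for $B_D$) then reduces the task to producing, for each of Adam, AdaBelief, and RMSProp, an explicit upper bound on $\tilde{\beta_1^G}\tilde{\gamma}^{G^2}h_{0,*}^G$ and on $\tilde{\beta_1^D}\tilde{\gamma}^{D^2}h_{0,*}^D$ in terms of the allowed data.

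Next I would read off $\tilde{\beta_1^G}$, $\tilde{\gamma}^G$, and the matrix $\mathsf{H}_0^G$ for each optimizer from Table~\ref{table:ex}. RMSProp carries neither first-moment averaging nor bias correction, so effectively $\beta_1^G=0$ and the correction factor is $1$; this is why no $(1-\beta_1^G)$ appears in part~(iii). For Adam and AdaBelief the bias-correction base coincides with the first-moment parameter, i.e.\ $\gamma^G=\beta_1^G$, giving $\tilde{\beta_1^G}\tilde{\gamma}^{G^2}=(1-\beta_1^G)^3$, which is the source of the $(1-\beta_1^G)^3$ in parts~(i) and~(ii). It then remains to bound $h_{0,*}^G=\min_{i\in[\Theta]}h_{0,i}^G$ from above; I would use $\min_i (h_{0,i}^G)^2\le\Theta^{-1}\sum_{i\in[\Theta]}(h_{0,i}^G)^2$ and evaluate the right-hand side from the closed form of the second-moment statistic defining $\mathsf{H}_0^G$: for Adam this equals $(1-\beta_2^G)$ times the squared initial gradient; for AdaBelief the same up to a factor $4$ obtained from $\|g-m\|^2\le2\|g\|^2+2\|m\|^2$ together with $\|m\|\le\max_k\|g_k\|$ (hence the $4$ under the root in part~(ii)); and for RMSProp an analogous expression without the $\beta_2^G$ weight. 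Passing from the squared initial gradient to the full-gradient representation $\nabla L_G(\bm{\theta}_0,\bm{w}_0)=|S|^{-1}\sum_{i}\nabla_{\bm{\theta}}L_G^{(i)}(\bm{\theta}_0,\bm{w}_0)$ and controlling it at the prescribed accuracy level then yields the remaining power of $\epsilon_G$ and the factor $|S|^{-2}$ inside the square roots. Feeding these upper bounds on $\tilde{\beta_1^G}\tilde{\gamma}^{G^2}h_{0,*}^G$ into $b_G^\star>2B_G/\epsilon_G^2$ produces the three inequalities for $b_G^\star$, and those for $b_D^\star$ follow by the substitution recorded in the caption of Table~\ref{notation} (replace $G$, $\mathcal{S}_n$, $\bm{\theta}$, $\bm{w}$ by $D$, $\mathcal{R}_n$, $\bm{w}$, $\bm{\theta}$, and $\Theta$ by $W$).

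The step I expect to be the main obstacle is exactly this upper bound on $h_{0,*}^G$ and $h_{0,*}^D$: one must identify the correct closed form of $\mathsf{H}_0^G$ for each optimizer (being careful whether the bias correction acts inside $\mathsf{H}_n^G$ or only on $\hat{\bm{m}}_n^G$), pass from the coordinatewise minimum to a dimension-normalized $\ell_2$ quantity without dropping the $\sqrt{\Theta}$ factor, and bound the initial gradient in terms of $|S|$ and $\epsilon_G$ so that the exponents of $\epsilon_G$ and of $|S|$ come out precisely as stated. Once those per-optimizer bounds are in hand, the remaining manipulations — positivity of $C_G,C_D$, discarding them, and the final substitutions — are routine.
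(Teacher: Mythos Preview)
Your skeleton agrees with the paper: drop $C_G$ via $0<\epsilon_G^2-C_G<\epsilon_G^2$ to get $b_G^\star>2B_G/\epsilon_G^2=\sigma_G^2\alpha^G/\big(\epsilon_G^2\,\tilde{\beta_1^G}\,\tilde{\gamma}^{G^2}\,h_{0,*}^G\big)$, read off $\gamma^G=\beta_1^G$ for Adam and AdaBelief and $\gamma^G=\beta_1^G=0$ for RMSProp from Table~\ref{table:ex}, and then bound $h_{0,*}^G$ from above. Your derivation of the AdaBelief factor $4$ via $\|g-m\|^2\le2\|g\|^2+2\|m\|^2$ is exactly what the paper does.

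The gap is precisely where you anticipated, the upper bound on $h_{0,*}^G$, and your proposed inequality goes the wrong way to reproduce the stated constants. The paper does \emph{not} use $\min_i(h_{0,i}^G)^2\le\Theta^{-1}\sum_i(h_{0,i}^G)^2$; that step injects a factor $\Theta^{-1}$ and cannot yield the $\sqrt{\Theta}$ that sits in the denominator of the claim. Instead the paper uses (A1) to write $h_{0,*}^G\le h_{n,i}^G=\sqrt{\bar v_{n,i}^G}\le\sqrt{v_{n,i}^G/(1-\beta_2^G)}$ for an \emph{arbitrary} $n,i$, then bounds a single second-moment coordinate by induction on the recursion, $v_{n,i}^G\le\max_{n,i}g_{n,i}^2\le\sum_ig_{n^*,i}^2=\|\nabla L_{G,\mathcal S_{n^*}}(\bm\theta_{n^*})\|^2$. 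The $\Theta$ enters only afterwards, through $\|\nabla L_{G,\mathcal S_{n^*}}(\bm\theta_{n^*})\|^2\approx\|\nabla L_G(\bm\theta_{n^*},\bm w_{n^*})\|^2=|S|^{-2}\|\bm G_{n^*}\|^2\le\Theta\,|S|^{-2}\max_iG_{n^*,i}^2$. Likewise, your remark that for Adam the second-moment sum ``equals $(1-\beta_2^G)$ times the squared initial gradient'' misplaces the $(1-\beta_2^G)$: $\mathsf H_n^G$ is built from the bias-corrected $\bar v_n^G$, and the $1/(1-\beta_2^G)$ in the final bound arises from upper-bounding that correction, $1/(1-\beta_2^{G^n})\le1/(1-\beta_2^G)$, not from the first recursion weight.

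One more thing to be aware of: the paper's argument is not a pure chain of inequalities. It explicitly uses two heuristic approximations, $\sigma_G^2/b\approx0$ for large $b$ and $\max_iG_{n^*,i}^2\approx\epsilon_G^2$, the latter being what supplies the third power of $\epsilon_G$. So Proposition~\ref{prop:4} is presented as an estimation rule rather than a fully rigorous lower bound; you should state those approximations overtly rather than try to replace them by strict inequalities.
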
 
Theorem \ref{thm:3} indicates that critical batch sizes exist in the sense of minimizing the SFO complexities $N_G(b) b$ and $N_D(b) b$. We are interested in verifying whether or not a critical batch size exists for training GANs as is the case of training deep neural networks \cite{shallue2019,zhang2019,iiduka2022}. The next section numerically examines the relationship between the batch size $b$ and the number of steps $N$ and also that between $b$ and the SFO complexity $N b$ to see if there is a critical batch size $b^\star$ at which $N(b) b$ is minimized. Proposition \ref{prop:4} indicates that a lower bound for the critical batch size can be estimated from some hyperparameters. Hence, we would like to check whether the estimated sizes are close to the measured ones (see Section \ref{subsec:4.4}).

\section{Numerical Results}
\label{sec:4}
We measured the number of steps required to achieve a low FID \citep{Heusel2017} score for different batch sizes in several GAN trainings. The experimental environment consisted of NVIDIA DGX A100$\times$8GPU and Dual AMD Rome7742 2.25-GHz, 128 Cores$\times$2CPU. The software environment was Python 3.8.2, Pytorch 1.6.0, and CUDA 11.6. The distribution of $\xi_n^G$ was a uniform one. The clean-fid package \cite{clean-fid} was used to calculate the FID. The code is available at \url{https://github.com/iiduka-researches/GANs}. The TTURs based on Adam, AdaBelief, and RMSProp used $\mathsf{H}_n^G$ and $\mathsf{H}_n^D$ defined in Table \ref{table:ex}. See Table \ref{table:hyper} for the optimizer hyperparameters used in the experiment. The learning rate used in each experiment was determined on the basis of a grid search of 36 combinations of the generator learning rate $\alpha^G$ and discriminator learning rate $\alpha^D$ (see Figure \ref{fig:1} in Appendix \ref{a1}). Appendix \ref{a3} indicates that, when a fixed batch size is used, the FID scores of the TTURs used in the experiments decrease sufficiently as the number of steps increases. 

\subsection{Training DCGAN on the LSUN-Bedroom dataset}
\label{subsec:4.1}
First, we evaluated the performance of TTURs based on RMSProp, AdaBelief, and Adam in training DCGAN \citep{radford2016unsupervised} on the LSUN-Bedroom dataset. Figure \ref{fig:2} plots the number of steps $N$ needed to achieve an FID score lower than 70 versus the batch size $b$. The figure indicates that the number of steps for TTUR based on any optimizer is monotone decreasing and convex with respect to $b$. Figure \ref{fig:3} plots the SFO complexity $Nb$ versus $b$. The figure indicates that $Nb$ for TTUR based on any optimizer is convex with respect to $b$. 

\subsection{Training WGAN-GP on the CelebA dataset}
\label{subsec:4.2}
Next, we evaluated the performance of TTURs based on RMSProp, AdaBelief, and Adam in training WGAN-GP on the CelebA dataset. The original WGAN-GP code updates the discriminator five times for each generator update, whereas the discriminator is updated only once when using TTUR. Figure \ref{fig:4} plots the number of steps $N$ needed to achieve an FID score lower than 50 versus the batch size $b$. The figure indicates that the number of steps for TTUR based on any optimizer is a monotone decreasing and convex function of $b$. Figure \ref{fig:5} plots $Nb$ versus $b$. The figure indicates that $Nb$ for TTUR based on any optimizer is a convex function of $b$. 

\begin{table}[htbp]
\begin{minipage}[t]{0.5\textwidth}
\centering
\caption{Parameters used to train GANs}
\label{table:3}
\begin{tabular}{cccc}
\hline
&Section \ref{subsec:4.1} &Section \ref{subsec:4.2} &Section {\ref{app:big}}\\
\hline
$\Theta$ &$\sepnum{.}{,}{,}{3576704}$ &$\sepnum{.}{,}{,}{3576704}$ &$\sepnum{.}{,}{,}{70433795}$\\
W &$\sepnum{.}{,}{,}{2765568}$ &$\sepnum{.}{,}{,}{2765568}$ &$\sepnum{.}{,}{,}{87982369}$\\
$|S|$ &$\sepnum{.}{,}{,}{3033042}$ &$\sepnum{.}{,}{,}{162770}$ &$\sepnum{.}{,}{,}{1281167}$\\
\hline
\end{tabular}
\end{minipage}
\end{table}

\begin{figure}[htbp]
\begin{tabular}{c}
\begin{minipage}[t]{0.99\hsize}
\centering
\includegraphics[width=1\textwidth]{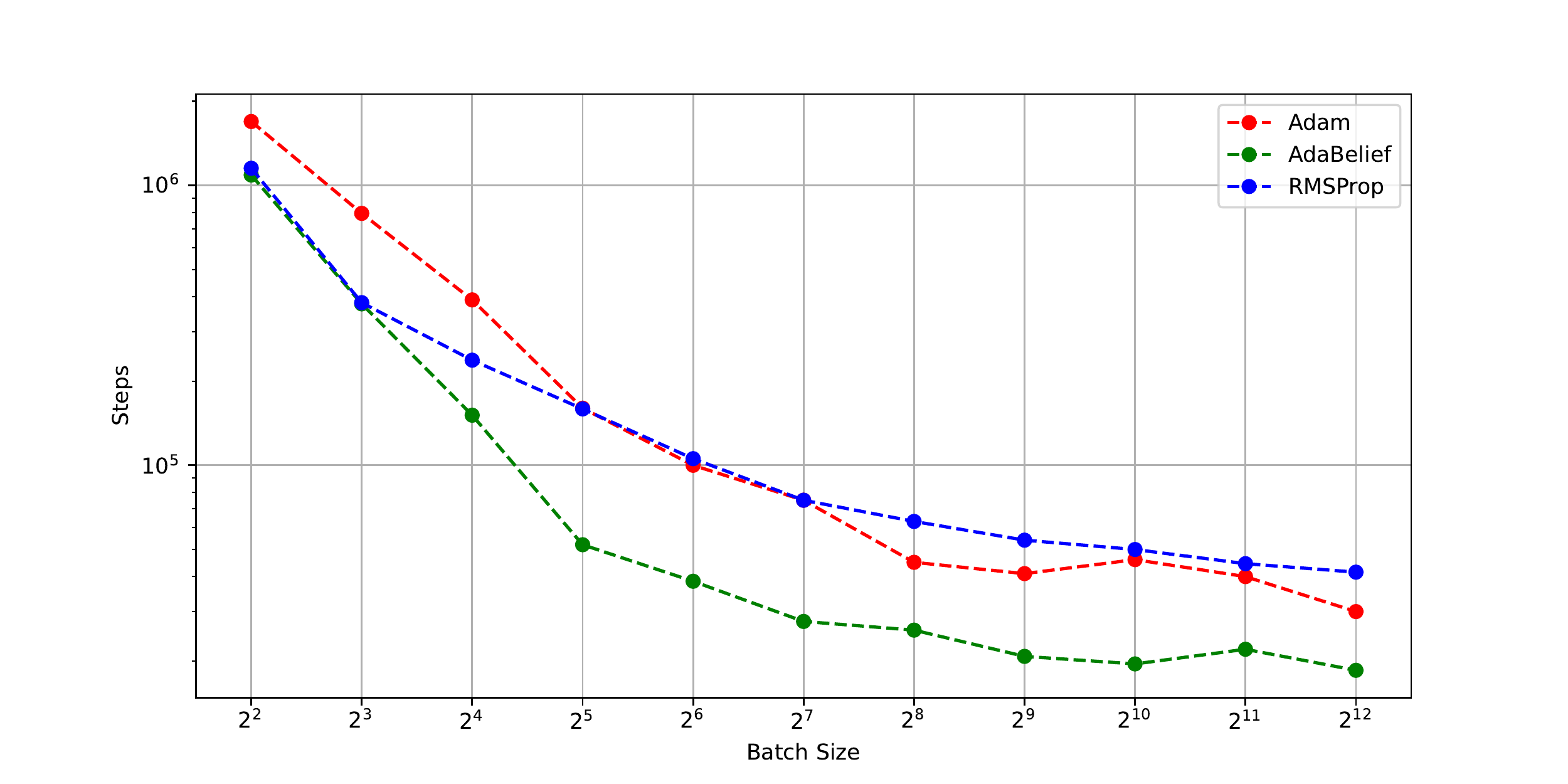}
\caption{Number of steps for TTURs based on Adam, AdaBelief, and RMSProp versus batch size needed to train DCGAN on the LSUN-Bedroom dataset. The average of multiple runs is plotted.}
\label{fig:2}
\end{minipage}
\end{tabular}
\end{figure}

\begin{figure}[htbp]
\begin{tabular}{c}
\begin{minipage}[t]{0.99\hsize}
\centering
\includegraphics[width=1\textwidth]{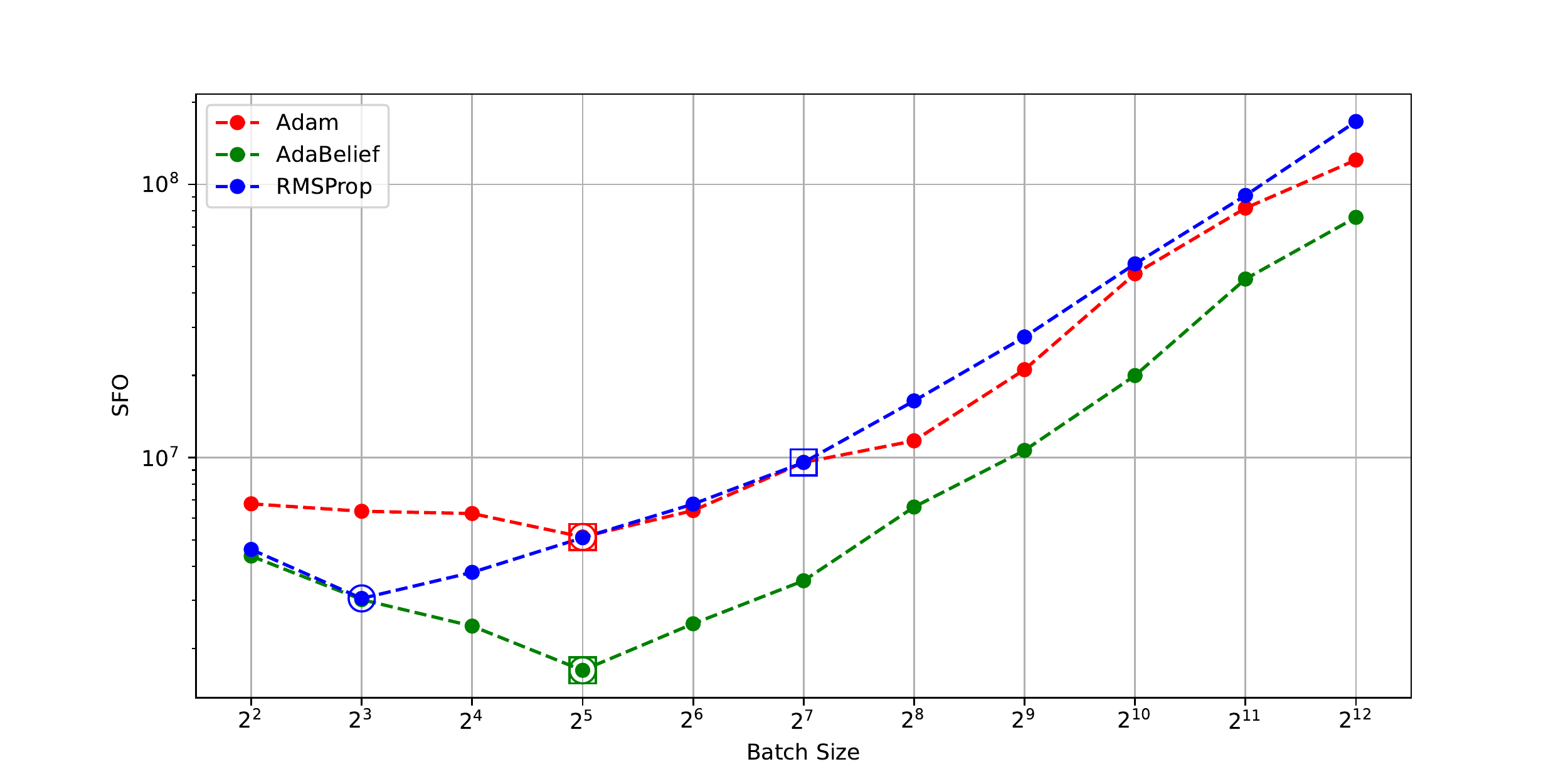}
\caption{SFO complexities for TTURs based on Adam, AdaBelief, and RMSProp versus batch size needed to train DCGAN on the LSUN-Bedroom dataset. The double circle symbol denotes the measured critical batch size that minimizes SFO complexity. The square symbol denotes the estimated critical batch size.}
\label{fig:3}
\end{minipage} 
\end{tabular}
\end{figure}

\begin{figure}[htbp]
\begin{tabular}{c}
\begin{minipage}[t]{0.99\hsize}
\centering
\includegraphics[width=1\textwidth]{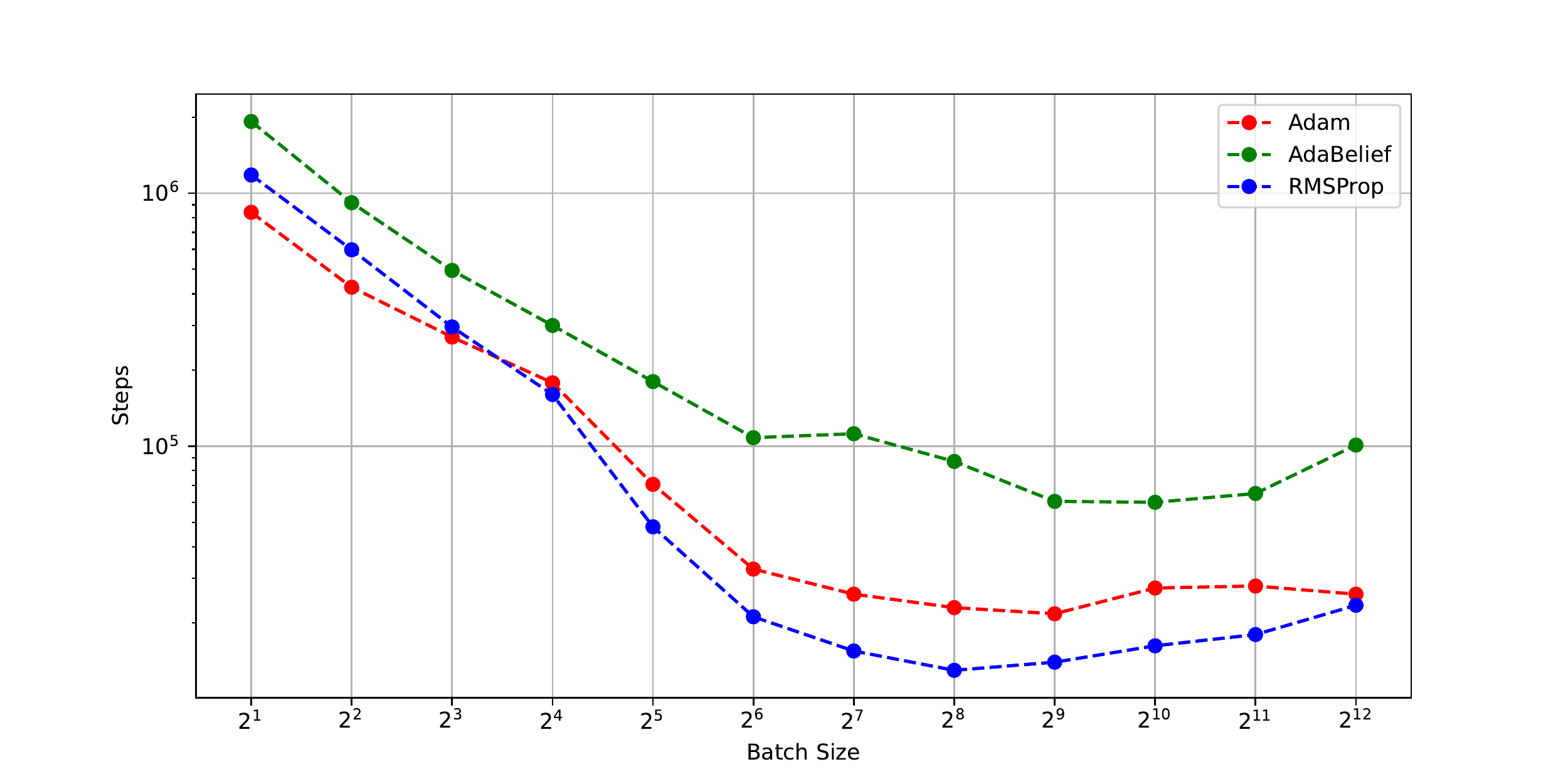}
\caption{Number of steps for TTURs based on Adam, AdaBelief, and RMSProp versus batch size needed to train WGAN-GP on the CelebA dataset. The average of multiple runs is plotted.}
\label{fig:4}
\end{minipage}
\end{tabular}
\end{figure}

\begin{figure}[htbp]
\begin{tabular}{c}
\begin{minipage}[t]{0.99\hsize}
\centering
\includegraphics[width=1\textwidth]{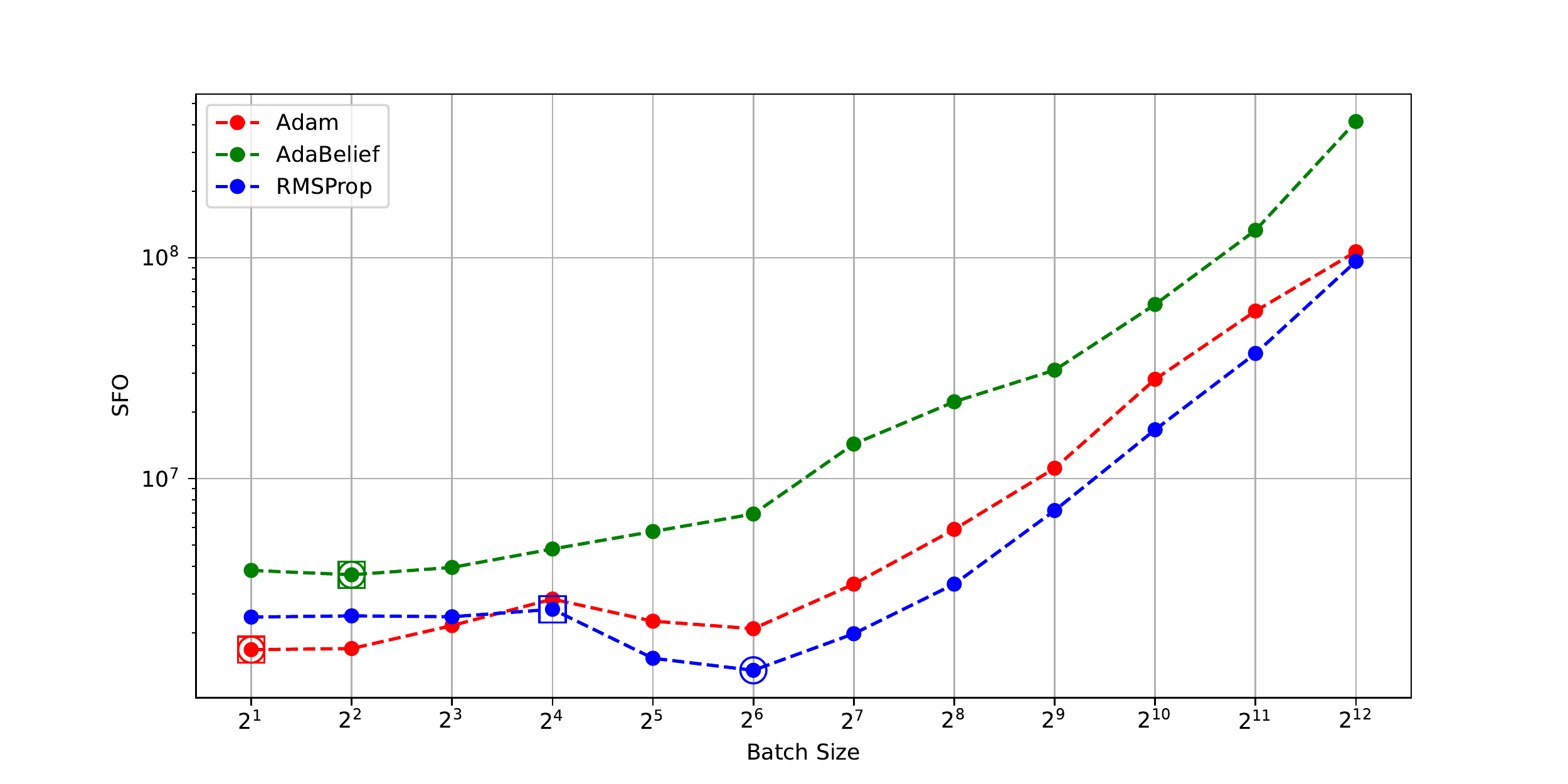}
\caption{SFO complexities for TTURs based on Adam, AdaBelief, and RMSProp versus batch size needed to train WGAN-GP on the CelebA dataset. The double circle symbol denotes the measured critical batch size that minimizes SFO complexity. The square symbol denotes the estimated critical batch size.}
\label{fig:5}
\end{minipage} 
\end{tabular}
\end{figure}

\begin{figure}[htbp]
\begin{tabular}{c}
\begin{minipage}[t]{0.99\hsize}
\centering
\includegraphics[width=1\textwidth]{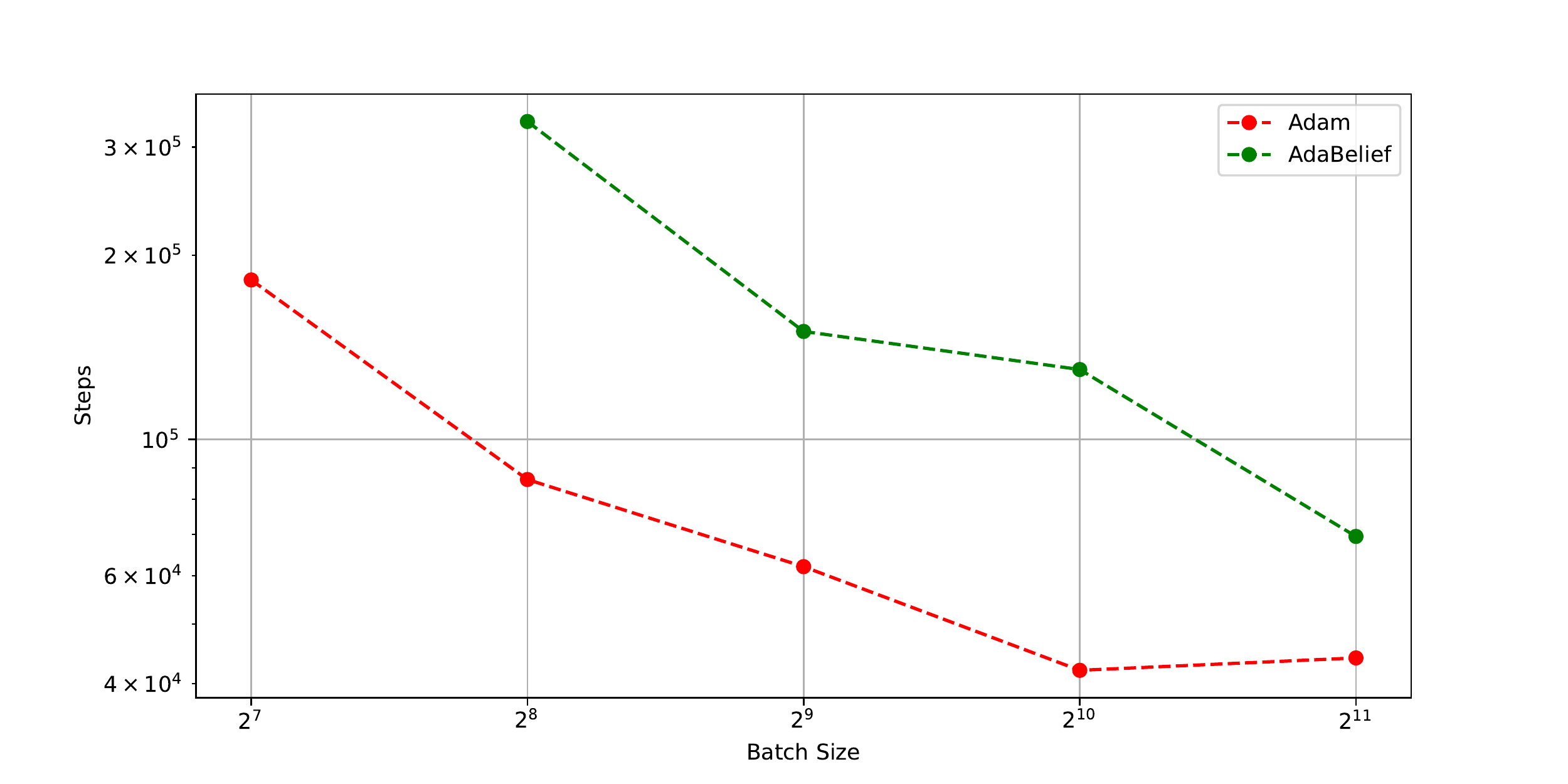}
\caption{Number of steps for TTURs based on Adam and AdaBelief versus batch size needed to train BigGAN on the ImageNet dataset. The average of multiple runs is plotted.}
\label{fig:big1}
\end{minipage}
\end{tabular}
\end{figure}

\begin{figure}[htbp]
\begin{tabular}{c}
\begin{minipage}[t]{0.99\hsize}
\centering
\includegraphics[width=1\textwidth]{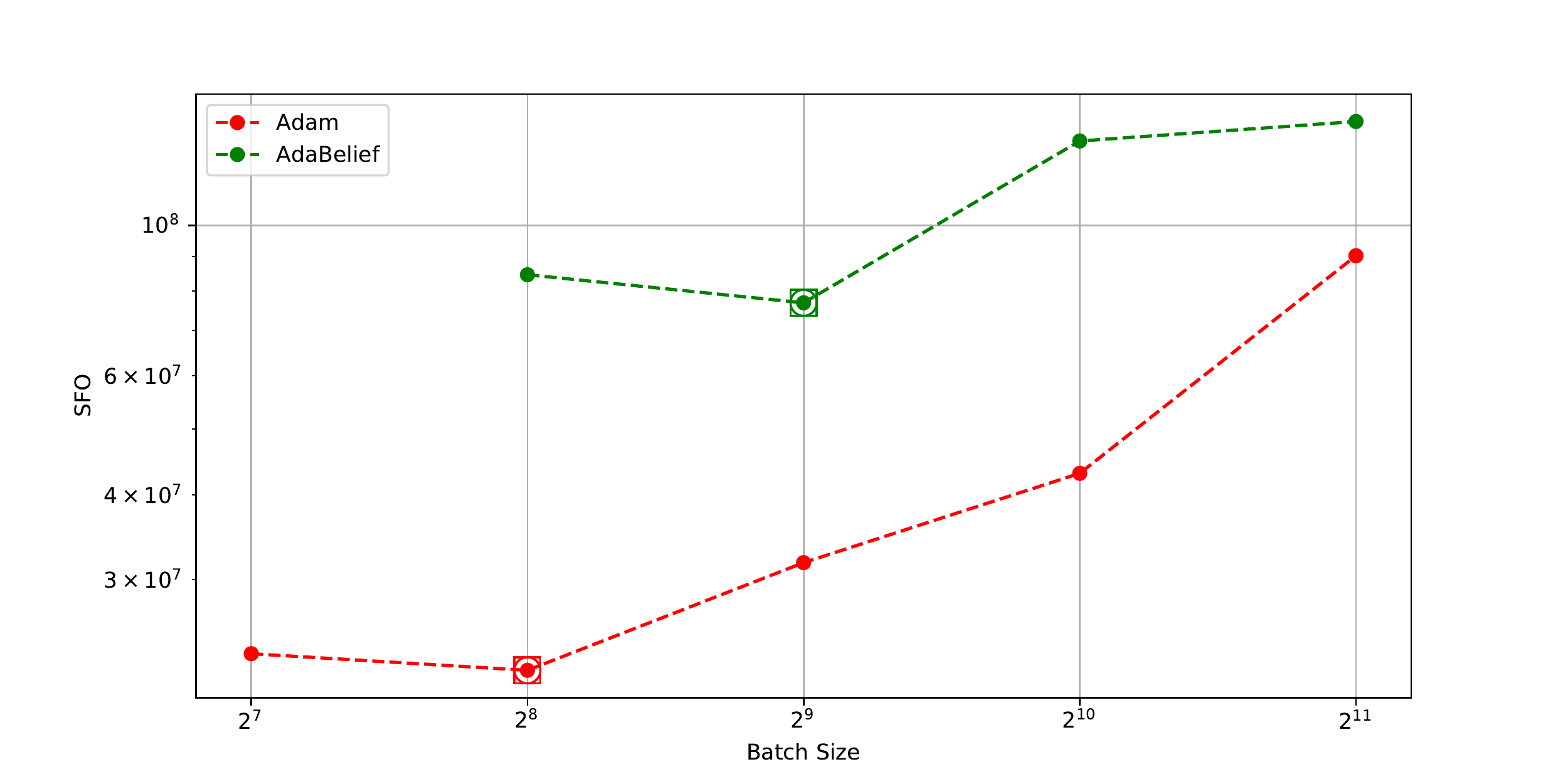}
\caption{SFO complexities for TTURs based on Adam and AdaBelief versus batch size needed to train BigGAN on the ImageNet dataset. The double circle symbol denotes the measured critical batch size that minimizes SFO complexity. The square symbol denotes the estimated critical batch size.}
\label{fig:big2}
\end{minipage} 
\end{tabular}
\end{figure}

\begin{table*}
\begin{minipage}[t]{0.99\textwidth}
\caption{Measured and estimated critical batch sizes}
\label{table:4}
\centering
\begin{tabular}{ccc|cc|cc}
\hline
&\multicolumn{2}{c|}{Section \ref{subsec:4.1}} &\multicolumn{2}{c|}{Section \ref{subsec:4.2}} &\multicolumn{2}{c}{Section \ref{app:big}}\\
\cline{2-3}
\cline{4-5}
\cline{6-7}
&measured &estimated &measured &estimated &measured &estimated\\
\hline
Adam &$2^5$ &$2^5$ &$2^1$ &$2^1$ &$2^8$ &$2^8$\\
AdaBelief &$2^5$ &$2^5$ &$2^2$ &$2^2$ &$2^9$ &$2^9$\\
RMSProp &$2^3$ &$2^7$ &$2^6$ &$2^4$ &- &-\\
\hline
\end{tabular}
\end{minipage}
\end{table*}

\subsection{Training BigGAN on the ImageNet dataset}
\label{app:big}
We evaluated the performance of TTURs based on AdaBelief and Adam in training BigGAN \citep{brock2018large} on the ImageNet dataset. Figure \ref{fig:big1} plots the number of steps $N$ needed to achieve an FID score lower than 25 versus the batch size $b$. The figure indicates that the number of steps for TTUR based on AdaBelief and Adam is monotone decreasing and convex with respect to $b$. Figure \ref{fig:big2} plots the SFO complexity $Nb$ versus $b$. The figure indicates that $Nb$ for TTUR based on AdaBelief and Adam is convex with respect to $b$.

We can conclude that the results in Sections \ref{subsec:4.1}, \ref{subsec:4.2}, and \ref{app:big} support the theoretical results (Theorems \ref{thm:2} and \ref{thm:3}).

\subsection{Estimation of lower bound of critical batch size} 
\label{subsec:4.4}
Proposition \ref{prop:4} indicates that a lower bound on the critical batch size can be estimated with some parameters. The parameters used in the experiments are shown in Table \ref{table:3}; see Section \ref{a:hyper} for the settings of $\alpha^G$, $\beta_1^G$, and so on. Figure \ref{fig:3} indicates that the measured critical batch sizes minimizing the SFO complexities of Adam, AdaBelief, and RMSProp are $2^{5} = 32$, $2^{5}=32$, and $2^{3}=8$, respectively. For batch sizes below $2^{7}=128$, there is no significant difference in SFO complexity, but it is clear that the measured critical batch size is less than $2^7$. In the previous studies \cite{shallue2019,zhang2019}, the critical batch sizes for training deep neural networks are large, such as $2^{12}$, while the critical batch sizes for training GANs are small. According to Figure \ref{fig:3}, in the DCGAN on the LSUN-Bedroom dataset setting, the measured critical batch size for Adam is $2^5$; using this and Proposition \ref{prop:4}(i) to back-calculate $\sigma_G^2$/$\epsilon_G^3$ gives $\sigma_G^2$/$\epsilon_G^3$ $=788.7$. We can use this ratio and Proposition \ref{prop:4}(ii), (iii) to estimate a lower bound of $47.9$ for AdaBelief and $126.5$ for RMSProps (see Table \ref{table:4}).

Figure \ref{fig:5} indicates that the measured critical batch sizes for Adam, AdaBelief, and RMSProp are $2$, $2^{2}=4$, and $2^{6}=64$, respectively. As in Figure \ref{fig:3}, there is no significant difference in SFO complexity for batch sizes less than $2^{6}$, but it is clear that the critical batch size is less than $2^{6}$. As expected, it is smaller than the critical batch sizes of deep neural networks. In the same way as above, the estimated lower bounds on the WGAN-GP on the CelebA dataset are $1.7$ for Adam, $4.25$ for AdaBelief, and $20.3$ for RMSProp (see Table \ref{table:4}).

According to Figure \ref{fig:big2}, in the BigGAN on the ImageNet dataset setting, the measured critical batch size for Adam is $2^8$; using this and Proposition \ref{prop:4}(i) to back calculate $\sigma_G^2$/$\epsilon_G^3$ gives $\sigma_G^2$/$\epsilon_G^3$ $=530303.8$. We can use this ratio and Proposition \ref{prop:4}(ii) to estimate a lower bound of $511.99$ for AdaBelief (see Table \ref{table:4}).

Proposition \ref{prop:4}(i) and (ii) indicate that the estimated critical batch sizes of Adam and AdaBelief strongly depend on the values of $\beta_1$ and $\beta_2$. Accordingly, the estimated critical batch sizes of Adam and AdaBelief were found to be the same as the measured ones. Meanwhile, Proposition \ref{prop:4}(iii) indicates that the critical batch size of RMSProp is completely independent of the values of $\beta_1$ and $\beta_2$. In particular, $\beta_1$ in RMSProp is always 0 and $\beta_2$ in RMSProp is not used to estimate the critical batch size (see the proof of Proposition \ref{prop:4} in Appendix \ref{app:prop4} for details). The independence of $\beta_1$ and $\beta_2$ may have caused the failure in estimating the critical batch size of RMSProp.

\section{Conclusion}
\label{sec:5}
We considered a stationary point problem in a GAN and performed a theoretical analysis of TTUR with constant learning rates to find a solution. We evaluated the upper bound of the expectation of the gradient of the loss function of the discriminator and the generator and showed that it is small when small constant learning rates and a large batch size are used. Next, we examined the relationship between the number of steps needed for solving the problem and batch size and showed that the number of steps decreases as the batch size increases. Moreover, we evaluated the SFO complexity of TTUR to check how large the batch size should be and showed that there is a critical batch size minimizing the SFO complexity, which is a convex function of the batch size. We also showed that it is possible to estimate the critical batch size specific to the model-dataset-optimizer combination. Finally, we provided numerical examples to support our theoretical analyzes. In particular, the numerical results showed that TTUR with small constant learning rates can be used to train DCGAN, WGAN-GP, and BigGAN, the number of steps needed to train them is monotone decreasing with the batch size, a critical batch size that minimizes the SFO complexity exists, and the estimated critical batch size is close to the experimentally measured value for DCGAN, WGAN-GP, and BigGAN. 

\section*{Acknowledgements}
We are sincerely grateful to Program Chairs, Area Chairs, and the three anonymous reviewers for helping us improve the original manuscript. We would like to thank Hiroki Naganuma (Mila, UdeM) for his help on the PyTorch implementation. This research is partly supported by the computational resources of the DGX A100 named TAIHO at Meiji University. This work was supported by the Japan Society for the Promotion of Science (JSPS) KAKENHI Grant Number 21K11773 awarded to Hideaki Iiduka.

\nocite{langley00}

\bibliography{icml2023_conference}
\bibliographystyle{icml2023}

\newpage
\appendix
\onecolumn
\section{Appendix}
\label{appendix:1}
Unless stated otherwise, all relations between random variables are supported to hold almost surely. Let $S \in \mathbb{S}_{++}^d$. The $S$-inner product of $\mathbb{R}^d$ is defined for all $\bm{x}, \bm{y} \in \mathbb{R}^d$ by $\langle \bm{x},\bm{y} \rangle_S := \langle \bm{x}, S \bm{y} \rangle$ and the $S$-norm is defined by $\|\bm{x}\|_S := \sqrt{\langle \bm{x}, S \bm{x} \rangle}$. 
The Hadamard product of $\mathbb{R}^d$ is defined for all 
$\bm{x} = (x_i)_{i=1}^d \in \mathbb{R}^d$ by $\bm{x} \odot \bm{x} := (x_i^2)_{i=1}^d \in \mathbb{R}^d$.

\subsection{Examples of diagonal matrix in Algorithm \ref{algo:1}}
\label{a2}
\begin{table*}[htbp]
\begin{center}
\caption{Examples of $\mathsf{H}_n^G \in \mathbb{S}_{++}^\Theta \cap \mathbb{D}^\Theta$ and $\mathsf{H}_n^D \in \mathbb{S}_{++}^W \cap \mathbb{D}^W$ (steps 6 and 13) in Algorithm \ref{algo:1} ($\beta_2^G, \beta_2^D \in [0,1)$)}
\label{table:ex}
\begin{tabular}{l|ll}
\hline

& $\mathsf{H}_n^G$ 
& $\mathsf{H}_n^D$ \\ \hline \hline

RMSProp
& $\bm{p}_n^G = \nabla L_{G,\mathcal{S}_n}(\bm{\theta}_n) \odot \nabla L_{G,\mathcal{S}_n}(\bm{\theta}_n)$
& $\bm{p}_n^D = \nabla L_{D,\mathcal{R}_n}(\bm{w}_n) \odot \nabla L_{D,\mathcal{R}_n}(\bm{w}_n)$ \\

\citep{rmsprop}
& $\bm{v}_n^G = \beta_2^G \bm{v}_{n-1}^G + (1- \beta_2^G) \bm{p}_n^G$
& $\bm{v}_n^D = \beta_2^D \bm{v}_{n-1}^D + (1- \beta_2^D) \bm{p}_n^D$ \\

($\gamma^G = \gamma^D = \beta_1^G = \beta_1^D$)
& $\mathsf{H}_n^G = \mathsf{diag} (\sqrt{v_{n,i}^G})$ 
& $\mathsf{H}_n^D = \mathsf{diag} (\sqrt{v_{n,i}^D})$\\ 
\hline

Adam 
& $\bm{p}_n^G = \nabla L_{G,\mathcal{S}_n}(\bm{\theta}_n) \odot \nabla L_{G,\mathcal{S}_n}(\bm{\theta}_n)$
& $\bm{p}_n^D = \nabla L_{D,\mathcal{R}_n}(\bm{w}_n) \odot \nabla L_{D,\mathcal{R}_n}(\bm{w}_n)$ \\

\citep{Heusel2017}
& $\bm{v}_n^G = \beta_2^G \bm{v}_{n-1}^G + (1- \beta_2^G) \bm{p}_n^G$
& $\bm{v}_n^D = \beta_2^D \bm{v}_{n-1}^D + (1- \beta_2^D) \bm{p}_n^D$ \\

\citep{adam}
& $\bar{\bm{v}}_n^G = \frac{\bm{v}_n^G}{1- \beta_2^{G^n}}$ 
& $\bar{\bm{v}}_n^D = \frac{\bm{v}_n^D}{1- \beta_2^{D^n}}$\\

($v_{n,i}^G \leq v_{n+1,i}^G$)
& $\mathsf{H}_n^G = \mathsf{diag} (\sqrt{\bar{v}_{n,i}^G})$ 
& $\mathsf{H}_n^D = \mathsf{diag} (\sqrt{\bar{v}_{n,i}^D})$\\ 

($v_{n,i}^D \leq v_{n+1,i}^D$)
& \\
($\gamma^G = \gamma^D = \beta_1^G = \beta_1^D$)
& \\
\hline

AMSGrad \cite{reddi2018} 
& $\bm{p}_n^G = \nabla L_{G, \mathcal{S}_n}(\bm{\theta}_n) \odot \nabla L_{G, \mathcal{S}_n}(\bm{\theta}_n)$ 
& $\bm{p}_n^D = \nabla L_{D, \mathcal{R}_n}(\bm{w}_n) \odot \nabla L_{D, \mathcal{R}_n}(\bm{w}_n)$ \\

\citep{chen2019}
& $\bm{v}_n^G = \beta_2^G \bm{v}_{n-1}^G + (1-\beta_2^G) \bm{p}_n^G$ 
& $\bm{v}_n^D = \beta_2^D \bm{v}_{n-1}^D + (1-\beta_2^D) \bm{p}_n^D$\\

($\gamma^G = \gamma^D = 0$)
& $\hat{\bm{v}}_n^G = (\max \{ \hat{v}_{n-1,i}^G, v_{n,i}^G \})_{i=1}^\Theta$ 
& $\hat{\bm{v}}_n^D = (\max \{ \hat{v}_{n-1,i}^D, v_{n,i}^D \})_{i=1}^W$\\

& $\mathsf{H}_n^G = \mathsf{diag} (\sqrt{\hat{v}_{n,i}^G})$ 
& $\mathsf{H}_n^D = \mathsf{diag} (\sqrt{\hat{v}_{n,i}^D})$ \\ \hline

AdaBelief 
& $\tilde{\bm{p}}_n^G = \nabla L_{G, \mathcal{S}_n}(\bm{\theta}_n) - \bm{m}_n^G$ 
& $\tilde{\bm{p}}_n^D = \nabla L_{D, \mathcal{R}_n}(\bm{w}_n) - \bm{m}_n^D$ \\

\citep{ada} 
& $\tilde{\bm{s}}_n^G = \tilde{\bm{p}}_n^G \odot \tilde{\bm{p}}_n^G$ 
& $\tilde{\bm{s}}_n^D = \tilde{\bm{p}}_n^D \odot \tilde{\bm{p}}_n^D$ \\

($s_{n,i}^G \leq s_{n+1,i}^G$)
& $\bm{s}_n^G = \beta_2^G \bm{v}_{n-1}^G + (1-\beta_2^G) \tilde{\bm{s}}_n^G$ 
& $\bm{s}_n^D = \beta_2^D \bm{v}_{n-1}^D + (1-\beta_2^D) \tilde{\bm{s}}_n^D$\\

($s_{n,i}^D \leq s_{n+1,i}^D$)
& $\hat{\bm{s}}_n^G = \frac{\bm{s}_n^G}{1- \beta_2^{G^n}}$ 
& $\hat{\bm{s}}_n^D = \frac{\bm{s}_n^D}{1- \beta_2^{D^n}}$ \\

($\gamma^G = \gamma^D = \beta_1^G = \beta_1^D$)
& $\mathsf{H}_n^G = \mathsf{diag} (\sqrt{\hat{s}_{n,i}^G})$ 
& $\mathsf{H}_n^D = \mathsf{diag} (\sqrt{\hat{s}_{n,i}^D})$\\ \hline
\end{tabular}
\end{center}
\end{table*}

\subsection{Hyperparameters of optimizers}
\label{a:hyper}
\begin{table*}[htbp]
\begin{center}
\caption{Hyperparameters of the optimizer used in the experiments in Sections \ref{subsec:4.1}, \ref{subsec:4.2}, and \ref{app:big}.}
\label{table:hyper}
\begin{tabular}{c|c|l|l|c|c|c}
\hline

&optimizer
& $\alpha^D$ 
& $\alpha^G$
& $\beta_1^G = \beta_1^D$ 
& $\beta_2^G = \beta_2^D$ 
& $\beta_1$ and $\beta_2$'s reference\\ \hline \hline

&Adam
& $0.0003$
& $0.0001$
& $0.5$
& $0.999$ 
& \citep{radford2016unsupervised} \\

Section \ref{subsec:4.1}
&AdaBelief
& $0.00003$
& $0.0003$
& $0.5$
& $0.999$ 
& \citep{ada} \\

&RMSProp
& $0.00003$
& $0.0001$
& $0$
& $0.99$ 
& \\
\hline

&Adam
& $0.0003$
& $0.0001$
& $0.5$
& $0.999$ 
& \citep{NIPS2017_892c3b1c} \\

Section \ref{subsec:4.2}
&AdaBelief
& $0.00003$
& $0.0005$
& $0.5$
& $0.999$ 
& \citep{ada} \\

&RMSProp
& $0.0001$
& $0.0003$
& $0$
& $0.99$ 
& \\
\hline

Section \ref{app:big}
&Adam
& $0.0004$
& $0.0001$
& $0$
& $0.999$
& \citep{brock2018large} \\

&AdaBelief
& $0.0005$
& $0.00005$
& $0.5$
& $0.999$
& \\
\hline

\end{tabular}
\end{center}
\end{table*}

\subsection{Grid search}
\label{a1}
The combinations of learning rates used in the experiments in Sections \ref{subsec:4.1} and \ref{subsec:4.2} are determined using a grid search. Figure \ref{fig:1} shows the results of the grid search.

\begin{figure}[htbp]
\begin{tabular}{ccc}
\begin{minipage}[t]{0.33\hsize}
\centering
\includegraphics[width=1\textwidth]{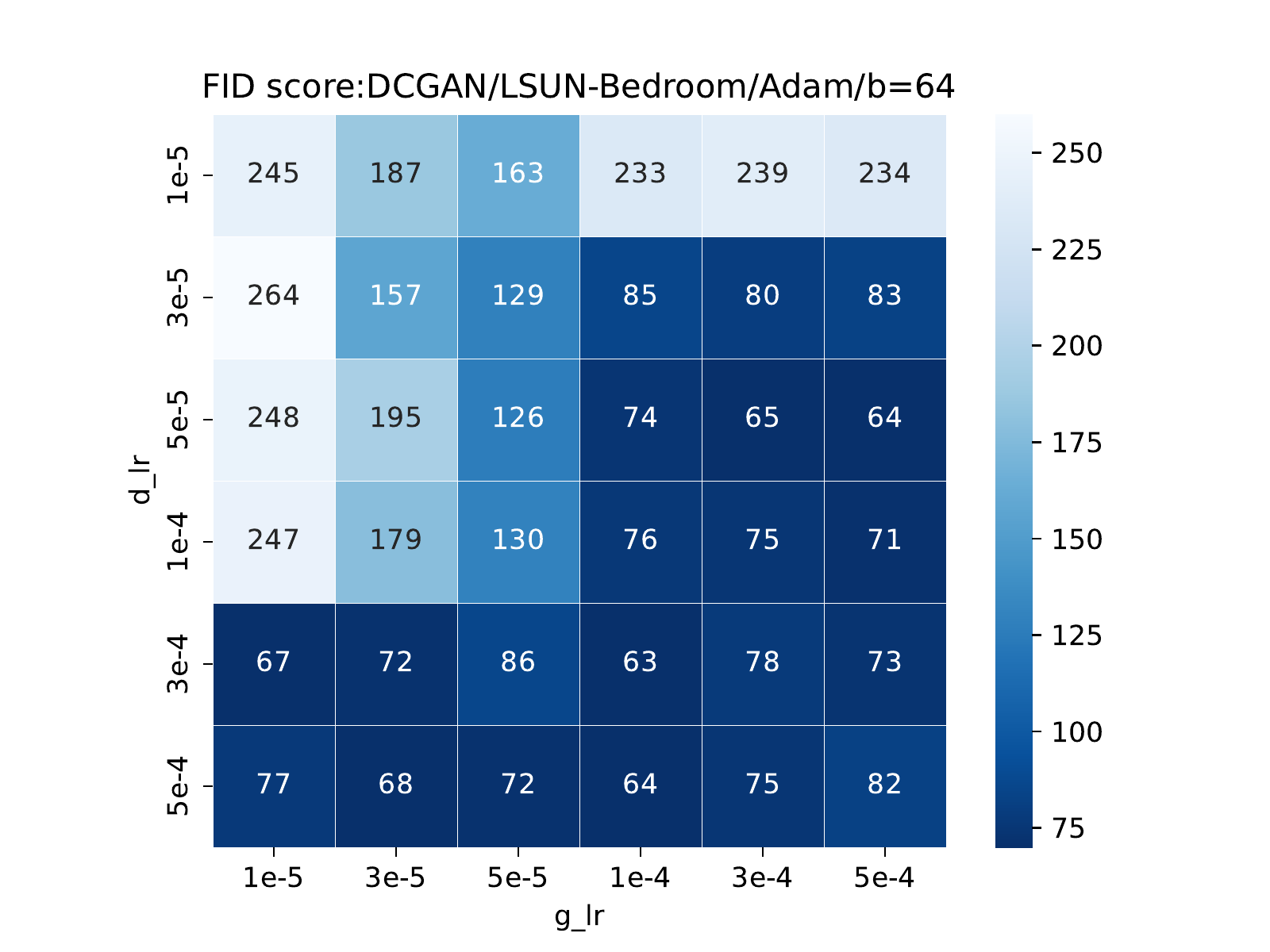}
\end{minipage} 
\begin{minipage}[t]{0.33\hsize}
\centering
\includegraphics[width=1\textwidth]{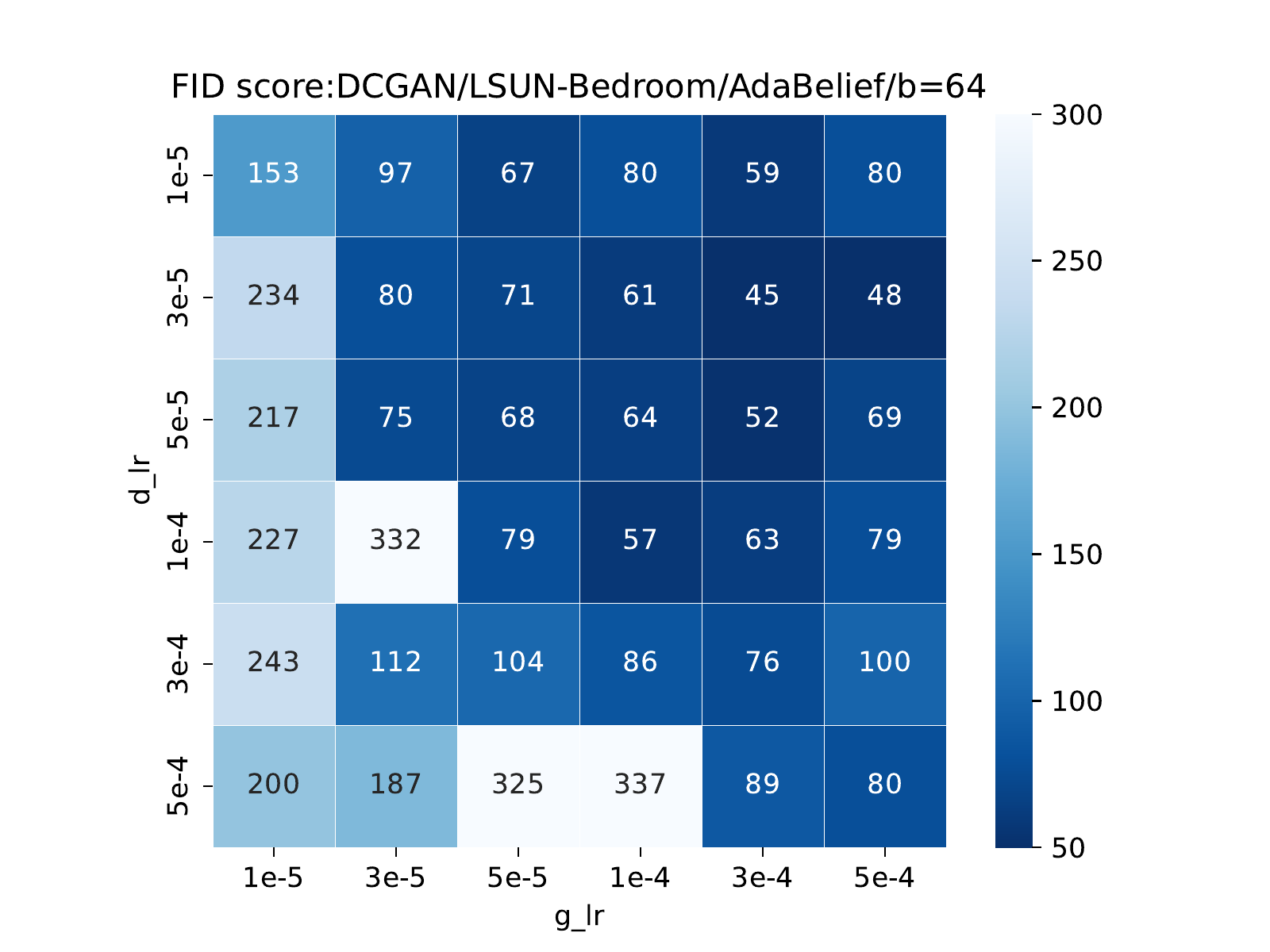}
\end{minipage} 
\begin{minipage}[t]{0.33\hsize}
\centering
\includegraphics[width=1\textwidth]{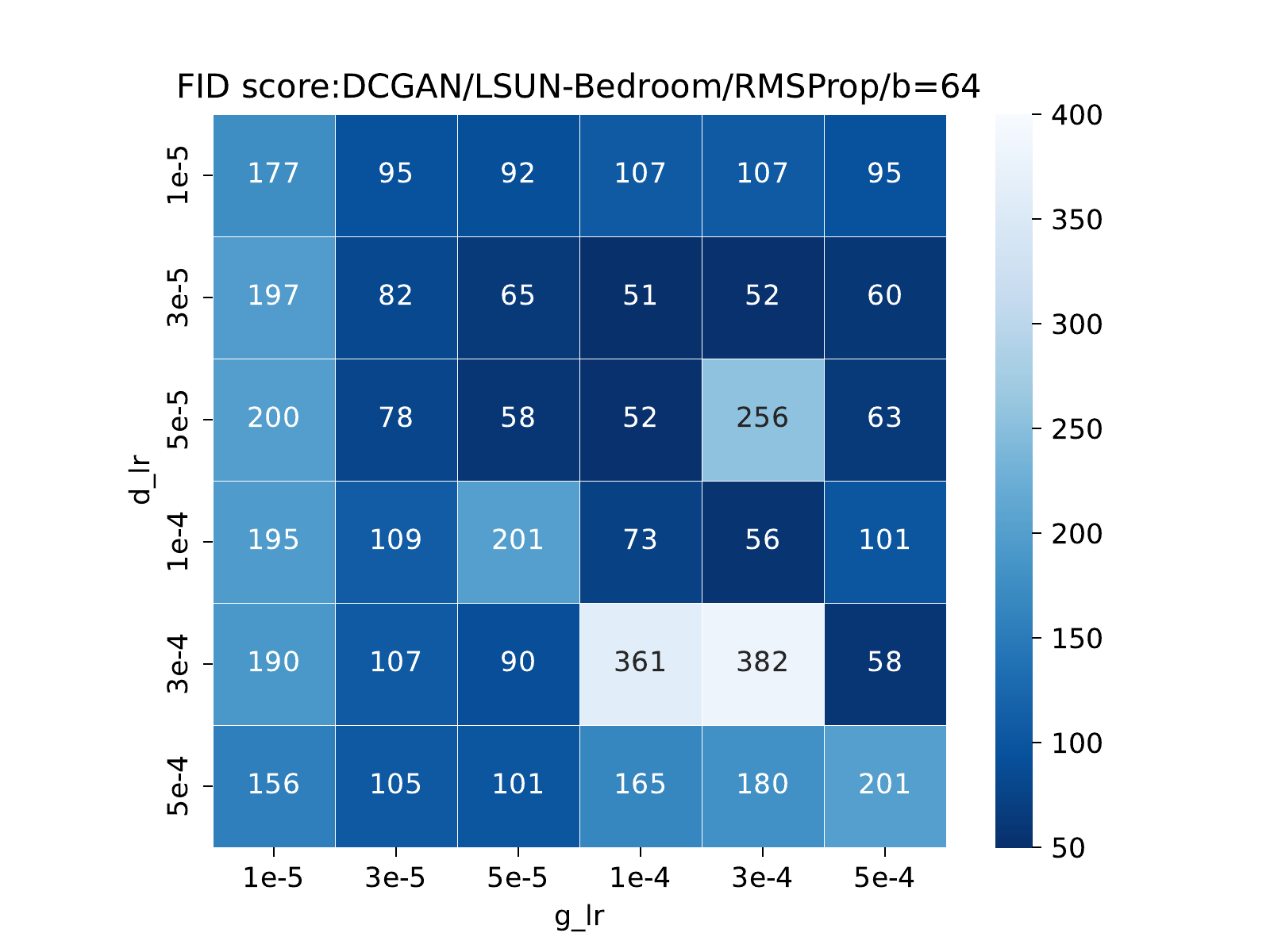}
\end{minipage} \\
\begin{minipage}[t]{0.33\hsize}
\centering
\includegraphics[width=1\textwidth]{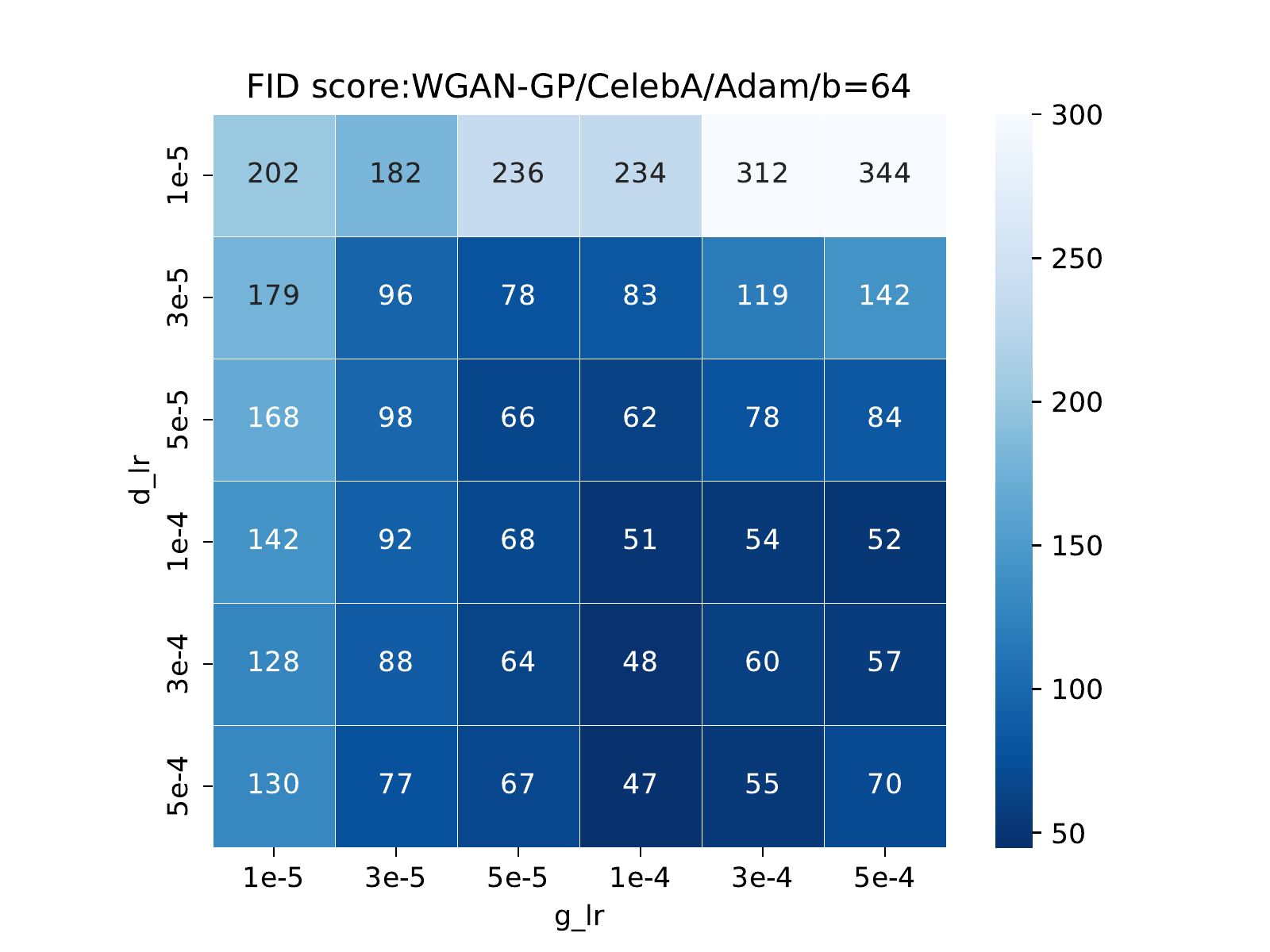}
\end{minipage} 
\begin{minipage}[t]{0.33\hsize}
\centering
\includegraphics[width=1\textwidth]{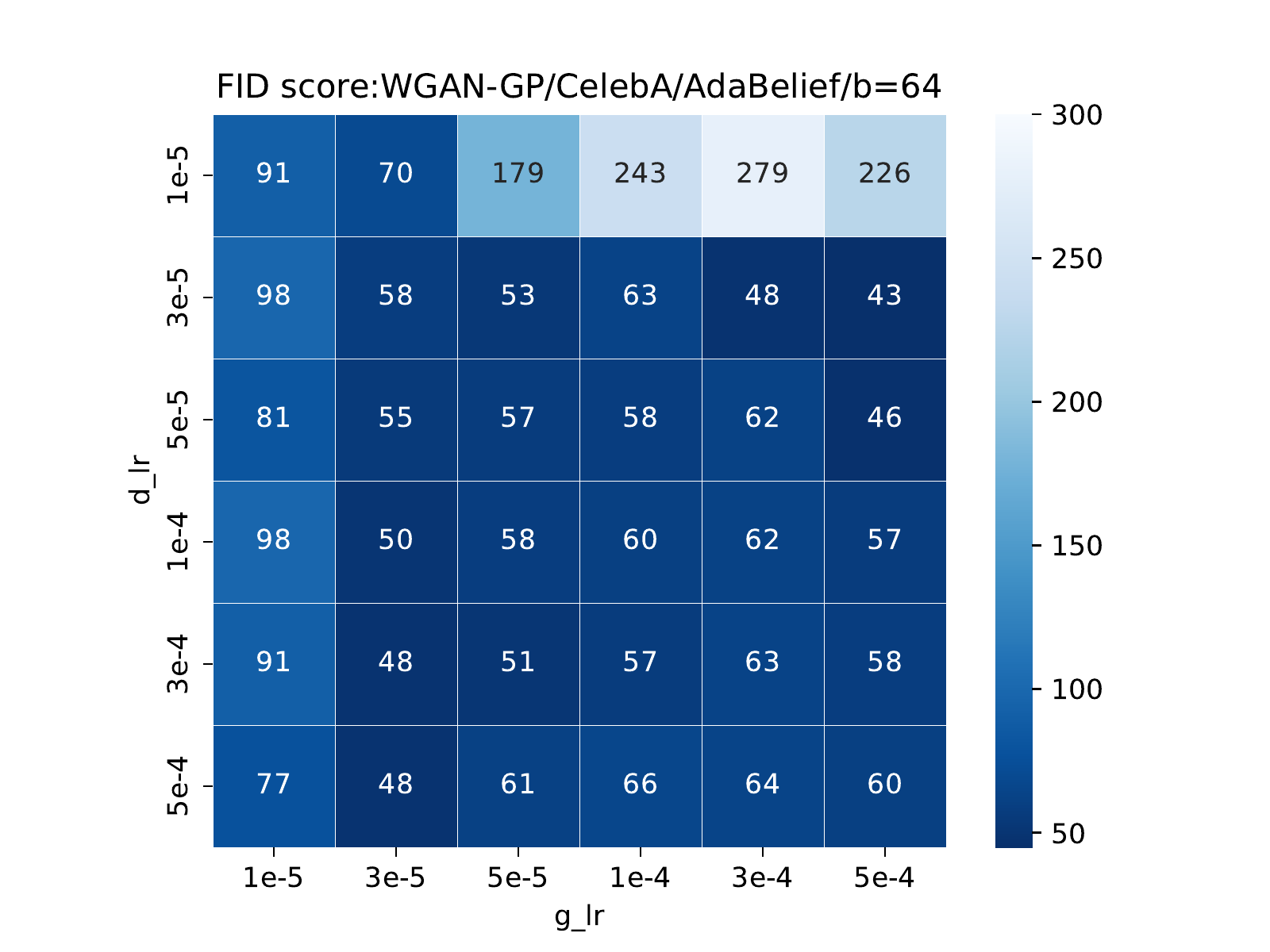}
\end{minipage} 
\begin{minipage}[t]{0.33\hsize}
\centering
\includegraphics[width=1\textwidth]{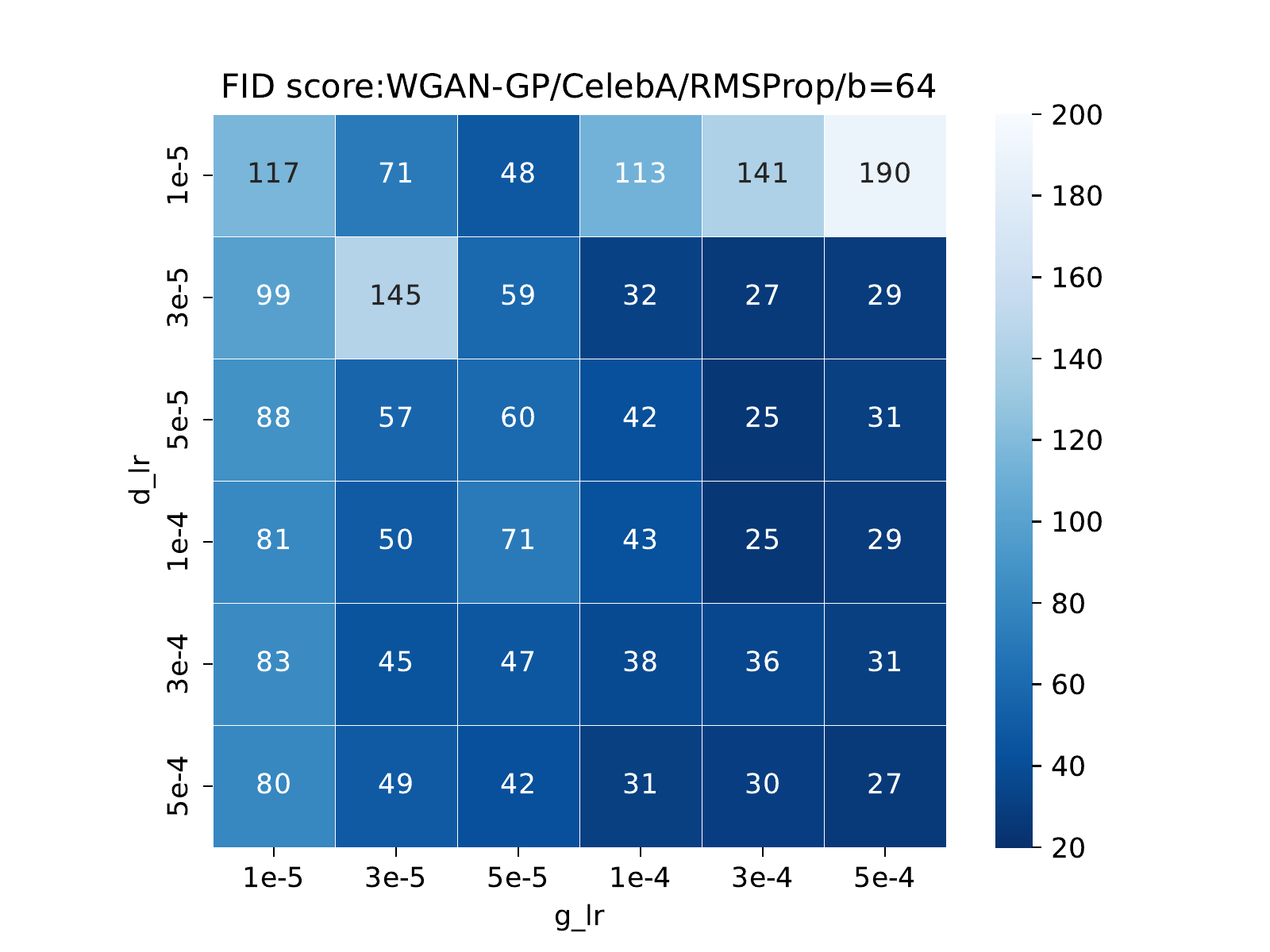}
\end{minipage} 
\end{tabular}
\caption{Analysis of the relationship between combination of learning rates and FID score: discriminator learning rate $\alpha^D$ on the vertical axis and generator learning rate $\alpha^G$ on the horizontal axis. The heatmap colors denote the FID scores; the darker the blue, the lower the FID, meaning that the training of the generator succeeded.}
\label{fig:1}
\end{figure}

\subsection{FID decreases sufficiently}
\label{a3}
We measured the number of steps required to achieve a good FID with different batch sizes. To demonstrate the soundness of the model used in the experiments, Figure \ref{fig:6} shows the decrease in FID. We find that, with DCGAN on the LSUN-Bedroom dataset, the FID score decreases to $41.8$, and with WGAN-GP on the CelebA dataset, it decreases to $24.8$.

\begin{figure}[htbp]
\begin{tabular}{cc}
\begin{minipage}[t]{0.5\hsize}
\centering
\includegraphics[width=1\textwidth]{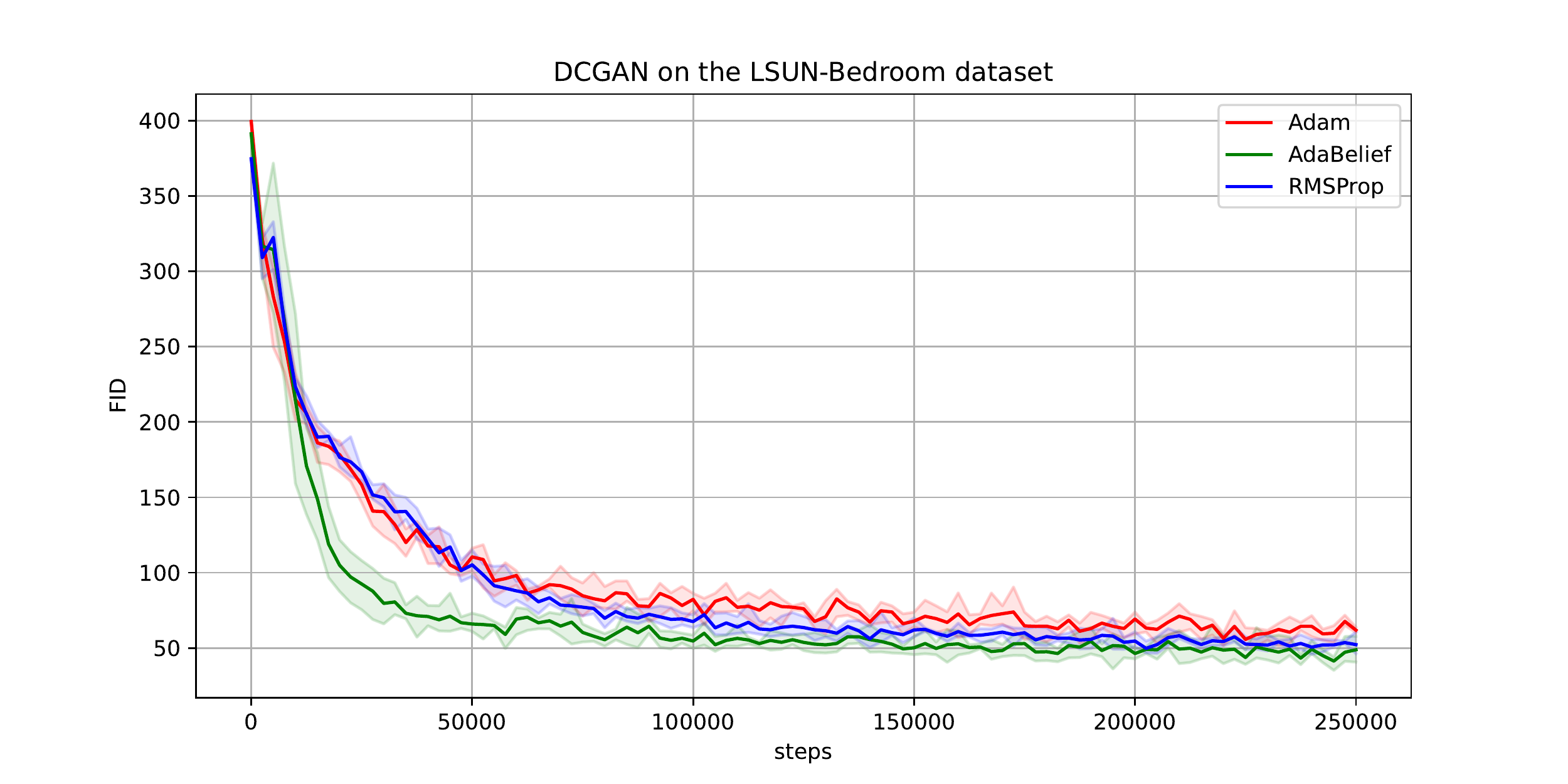}
\end{minipage} 
\begin{minipage}[t]{0.5\hsize}
\centering
\includegraphics[width=1\textwidth]{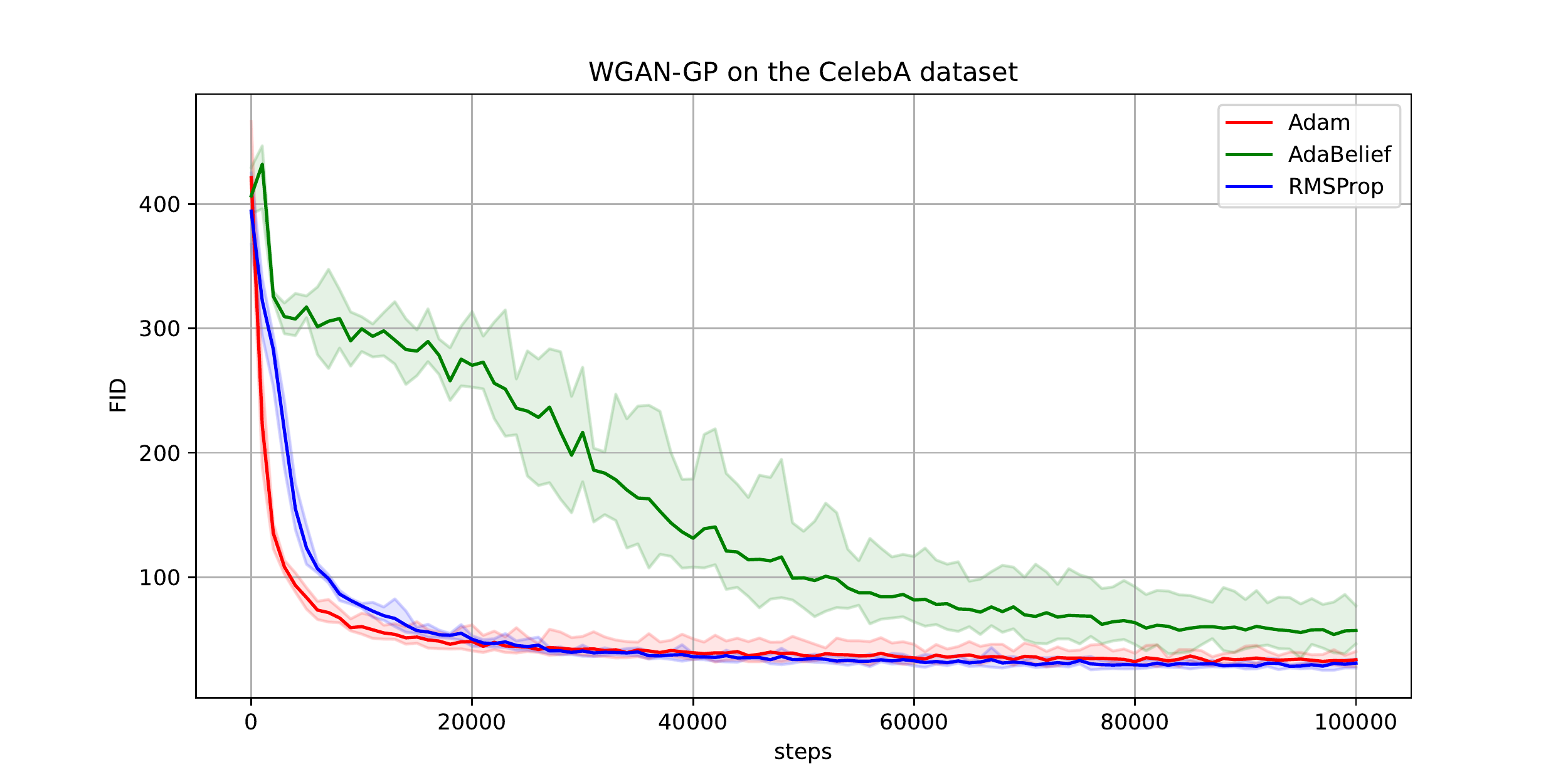}
\end{minipage}
\end{tabular}
\caption{Mean FID (solid line) bounded by the maximum and the minimum over 5 runs (shaded area) for DCGAN on the LSUN-Bedroom dataset and WGAN-GP on the CelebA dataset for three optimizers. For all runs, the batch size is 64 and the learning rate combinations are determined with the same grid search (see Figure \ref{fig:1}) used in Sections \ref{subsec:4.1} and \ref{subsec:4.2}.}
\label{fig:6}
\end{figure}

\subsection{Lemmas}
\begin{lem}\label{lem:1_1}
Suppose that (S1), (S2)(i), and (S3) hold and consider Algorithm \ref{algo:1}. Then, for all $\bm{\theta} \in \mathbb{R}^\Theta$ and all $n\in \mathbb{N}$,
\begin{align*}
\mathbb{E}\left[ \| \bm{\theta}_{n+1} - \bm{\theta} \|_{\mathsf{H}_n^G}^2 \right]
&= 
\mathbb{E}\left[ \| \bm{\theta}_{n} - \bm{\theta} \|_{\mathsf{H}_n^G}^2 \right]
+ \alpha_n^{G^2} \mathbb{E} \left[ \|\bm{d}_n^G \|_{\mathsf{H}_n^G}^2 \right]\\
&\quad + 2 \alpha_n^{G} \left\{
\frac{\beta_1^G}{\tilde{\gamma}_n^G} 
\mathbb{E} \left[ \langle \bm{\theta} - \bm{\theta}_n, \bm{m}_{n-1}^G \rangle \right]
+ 
\frac{\tilde{\beta_1^G}}{\tilde{\gamma}_n^G}
\mathbb{E} \left[ \langle \bm{\theta} - \bm{\theta}_n, \nabla_{\bm{\theta}}
L_G (\bm{\theta}_n, \bm{w}_n) \rangle \right]
\right\}, 
\end{align*}
where $\tilde{\beta_1^G} := 1 - \beta_1^G$ and $\tilde{\gamma}_n^G := 1 - \gamma^{G^{n+1}}$. 
\end{lem}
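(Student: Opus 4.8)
The identity in Lemma~\ref{lem:1_1} is a one-step expansion of the squared $\mathsf{H}_n^G$-norm of the generator iterate along the update $\bm{\theta}_{n+1} = \bm{\theta}_n + \alpha_n^G \bm{d}_n^G$. The plan is to start from the definition $\|\bm{\theta}_{n+1} - \bm{\theta}\|_{\mathsf{H}_n^G}^2 = \langle \bm{\theta}_{n+1} - \bm{\theta}, \mathsf{H}_n^G(\bm{\theta}_{n+1} - \bm{\theta})\rangle$, substitute $\bm{\theta}_{n+1} - \bm{\theta} = (\bm{\theta}_n - \bm{\theta}) + \alpha_n^G \bm{d}_n^G$, and expand the quadratic form. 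Using symmetry of $\mathsf{H}_n^G$ this gives
\begin{align*}
\|\bm{\theta}_{n+1} - \bm{\theta}\|_{\mathsf{H}_n^G}^2
= \|\bm{\theta}_n - \bm{\theta}\|_{\mathsf{H}_n^G}^2
+ \alpha_n^{G^2}\|\bm{d}_n^G\|_{\mathsf{H}_n^G}^2
+ 2\alpha_n^G \langle \bm{\theta}_n - \bm{\theta}, \mathsf{H}_n^G \bm{d}_n^G \rangle.
\end{align*}
The first two terms already match the claim, so the whole content is to rewrite the cross term.

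The next step is to use step~7 of Algorithm~\ref{algo:1}, namely $\mathsf{H}_n^G \bm{d}_n^G = -\hat{\bm{m}}_n^G$, together with the bias-correction definition $\hat{\bm{m}}_n^G = (1-\gamma^{G^{n+1}})^{-1}\bm{m}_n^G = (\tilde{\gamma}_n^G)^{-1}\bm{m}_n^G$, to turn the cross term into $-2\alpha_n^G (\tilde{\gamma}_n^G)^{-1}\langle \bm{\theta}_n - \bm{\theta}, \bm{m}_n^G\rangle = 2\alpha_n^G(\tilde{\gamma}_n^G)^{-1}\langle \bm{\theta} - \bm{\theta}_n, \bm{m}_n^G\rangle$. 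Then I expand the momentum recursion $\bm{m}_n^G = \beta_1^G \bm{m}_{n-1}^G + \tilde{\beta_1^G}\nabla L_{G,\mathcal{S}_n}(\bm{\theta}_n)$ to split this into the two terms $\beta_1^G(\tilde{\gamma}_n^G)^{-1}\langle \bm{\theta} - \bm{\theta}_n, \bm{m}_{n-1}^G\rangle$ and $\tilde{\beta_1^G}(\tilde{\gamma}_n^G)^{-1}\langle \bm{\theta} - \bm{\theta}_n, \nabla L_{G,\mathcal{S}_n}(\bm{\theta}_n)\rangle$, both multiplied by $2\alpha_n^G$. This is still a deterministic identity.

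Finally I take the expectation $\mathbb{E}[\cdot]$ of both sides. Every term is carried through by linearity except the last, where I need to replace $\nabla L_{G,\mathcal{S}_n}(\bm{\theta}_n)$ by $\nabla_{\bm{\theta}} L_G(\bm{\theta}_n, \bm{w}_n)$. This is where the unbiasedness assumptions enter: by (S3) the mini-batch gradient is $\nabla L_{G,\mathcal{S}_n}(\bm{\theta}_n) = b^{-1}\sum_i \mathsf{G}_{\xi_{n,i}^G}(\bm{\theta}_n)$, and by (S2)(i) each term has conditional mean $\nabla_{\bm{\theta}} L_G(\bm{\theta}_n,\bm{w}_n)$. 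Conditioning on $\xi_{[n-1]}^G$ (so that $\bm{\theta}_n$, $\bm{w}_n$, and hence the deterministic vector $\bm{\theta}-\bm{\theta}_n$ are fixed) and using the tower property, $\mathbb{E}[\langle \bm{\theta}-\bm{\theta}_n, \nabla L_{G,\mathcal{S}_n}(\bm{\theta}_n)\rangle] = \mathbb{E}[\langle \bm{\theta}-\bm{\theta}_n, \nabla_{\bm{\theta}} L_G(\bm{\theta}_n,\bm{w}_n)\rangle]$.

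The only genuinely delicate point — the "main obstacle," though it is minor — is the measurability bookkeeping in that last step: one must check that $\bm{\theta}_n$ and $\bm{m}_{n-1}^G$ are $\xi_{[n-1]}^G$-measurable so that the conditional expectation passes through the inner product cleanly, and that $\xi_n^G$ (equivalently the samples $\xi_{n,i}^G$ defining $\mathcal{S}_n$) is independent of that history, both of which are guaranteed by the definitions of $\xi_n^G$ and $\xi_{[n]}^G$ in the notation table. Everything else is routine algebra. I would present the identity line, the substitution of steps~7 and~4, the momentum split, and then the single conditioning argument, and conclude.
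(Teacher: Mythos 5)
Your proposal is correct and follows essentially the same route as the paper's proof: expand the squared $\mathsf{H}_n^G$-norm along the update, rewrite the cross term via $\mathsf{H}_n^G\bm{d}_n^G=-\hat{\bm{m}}_n^G$ and the momentum recursion, and then take expectations using the tower property with (S2)(i) and (S3) to replace the mini-batch gradient by the full gradient. The paper conditions on $\bm{\theta}_n$ where you condition on $\xi_{[n-1]}^G$, but this is the same argument.
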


\begin{proof}
Let $\bm{\theta} \in \mathbb{R}^\Theta$ and $n\in\mathbb{N}$. The definition of $\bm{\theta}_{n+1}$ implies that 
\begin{align*}
\| \bm{\theta}_{n+1} - \bm{\theta} \|_{\mathsf{H}_n^G}^2
= 
\| \bm{\theta}_{n} - \bm{\theta} \|_{\mathsf{H}_n^G}^2
+ 2 \alpha_n^G \langle \bm{\theta}_{n} - \bm{\theta}, \bm{d}_n^G \rangle_{\mathsf{H}_n^G}
+ \alpha_n^{G^2} \|\bm{d}_n^G \|_{\mathsf{H}_n^G}^2.
\end{align*}
Moreover, the definitions of $\bm{d}_n^G$, $\bm{m}_n^G$, and $\hat{\bm{m}}_n^G$ ensure that 
\begin{align*}
\left\langle \bm{\theta}_n - \bm{\theta}, \bm{d}_n^G \right\rangle_{\mathsf{H}_n^G}
&=
\frac{1}{{\tilde{\gamma}}_n^G}
\langle \bm{\theta} - \bm{\theta}_n, \bm{m}_n^G \rangle\\
&=
\frac{\beta_1^G}{{\tilde{\gamma}}_n^G} 
\langle \bm{\theta} - \bm{\theta}_n, \bm{m}_{n-1}^G \rangle
+
\frac{\tilde{\beta_1^G}}{{\tilde{\gamma}}_n^G} 
\langle \bm{\theta} - \bm{\theta}_n, \nabla L_{G,\mathcal{S}_n}(\bm{\theta}_n) \rangle.
\end{align*}
Hence, 
\begin{align}\label{ineq:004}
\begin{split}
\left\|\bm{\theta}_{n+1} - \bm{\theta} \right\|_{\mathsf{H}_n^G}^2
&=
\left\| \bm{\theta}_n -\bm{\theta} \right\|_{\mathsf{H}_n^G}^2
+ \alpha_n^{G^2} \| \bm{d}_n^G \|_{\mathsf{H}_n^G}^2\\
&\quad + 2 \alpha_n^G \left\{
\frac{\beta_1^G}{{\tilde{\gamma}}_n^G} 
\langle \bm{\theta} - \bm{\theta}_n, \bm{m}_{n-1}^G \rangle
+
\frac{\tilde{\beta_1^G}}{{\tilde{\gamma}}_n^G} 
\langle \bm{\theta} - \bm{\theta}_n, \nabla L_{G,\mathcal{S}_n}(\bm{\theta}_n) \rangle 
\right\}.
\end{split}
\end{align}
Conditions (S2)(i) and (S3) guarantee that
\begin{align*}
\mathbb{E}\left[ \mathbb{E} \left[ \langle \bm{\theta} - \bm{\theta}_n, \nabla L_{G,\mathcal{S}_n}(\bm{\theta}_n) \rangle \Big| \bm{\theta}_n \right] \right]
&=
\mathbb{E} \left[ \left\langle \bm{\theta} - \bm{\theta}_n, 
\mathbb{E} \left[\nabla L_{G,\mathcal{S}_n}(\bm{\theta}_n) \Big| \bm{\theta}_n \right] \right\rangle \right]\\
&=
\mathbb{E} \left[ \langle \bm{\theta} - \bm{\theta}_n, 
\nabla_{\bm{\theta}} L_G (\bm{\theta}_n, \bm{w}_n) \rangle \right].
\end{align*}
The lemma follows by taking the expectation with respect to $\xi_n^G$ on both sides of \eqref{ineq:004}.
\end{proof}

A discussion similar to the one proving Lemma \ref{lem:1_1} leads to the following lemma.

\begin{lem}\label{lem:1_2}
Suppose that (S1), (S2)(i), and (S3) hold and consider Algorithm \ref{algo:1}. Then, for all $\bm{w} \in \mathbb{R}^W$ and all $n\in \mathbb{N}$,
\begin{align*}
\mathbb{E}\left[ \| \bm{w}_{n+1} - \bm{w} \|_{\mathsf{H}_n^D}^2 \right]
&= 
\mathbb{E}\left[ \| \bm{w}_{n} - \bm{w} \|_{\mathsf{H}_n^D}^2 \right]
+ \alpha_n^{D^2} \mathbb{E} \left[ \|\bm{d}_n^D \|_{\mathsf{H}_n^D}^2 \right]\\
&\quad + 2 \alpha_n^{D} \left\{
\frac{\beta_1^D}{\tilde{\gamma}_n^D} 
\mathbb{E} \left[ \langle \bm{w} - \bm{w}_n, \bm{m}_{n-1}^D \rangle \right]
+ 
\frac{\tilde{\beta_1^D}}{\tilde{\gamma}_n^D}
\mathbb{E} \left[ \langle \bm{w} - \bm{w}_n, \nabla_{\bm{w}}
L_D (\bm{\theta}_n, \bm{w}_n) \rangle \right]
\right\}, 
\end{align*}
where $\tilde{\beta_1^D} := 1 - \beta_1^D$ and $\tilde{\gamma}_n^D := 1 - \gamma^{D^{n+1}}$. 
\end{lem}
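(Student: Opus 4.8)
The statement to prove is Lemma \ref{lem:1_2}, which is the discriminator-side analogue of Lemma \ref{lem:1_1}. The author explicitly says ``A discussion similar to the one proving Lemma \ref{lem:1_1}'' — so my plan is simply to mirror the proof of Lemma \ref{lem:1_1} with the substitutions $G \mapsto D$, $\bm{\theta} \mapsto \bm{w}$, $\mathsf{H}_n^G \mapsto \mathsf{H}_n^D$, $\mathcal{S}_n \mapsto \mathcal{R}_n$, $\bm{\theta}_n \mapsto \bm{w}_n$, $\bm{d}_n^G \mapsto \bm{d}_n^D$, $\bm{m}_n^G \mapsto \bm{m}_n^D$, $\alpha_n^G \mapsto \alpha_n^D$, $\Theta \mapsto W$, and $L_G(\cdot,\bm{w}_n) \mapsto L_D(\bm{\theta}_n,\cdot)$. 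Below I sketch the steps in order.

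\begin{proof}
Let $\bm{w} \in \mathbb{R}^W$ and $n\in\mathbb{N}$. \textbf{Step 1 (expand the squared $\mathsf{H}_n^D$-norm).} The update rule $\bm{w}_{n+1} := \bm{w}_n + \alpha_n^D \bm{d}_n^D$ from step 14 of Algorithm \ref{algo:1} gives, by bilinearity of $\langle\cdot,\cdot\rangle_{\mathsf{H}_n^D}$,
\begin{align*}
\| \bm{w}_{n+1} - \bm{w} \|_{\mathsf{H}_n^D}^2
= \| \bm{w}_{n} - \bm{w} \|_{\mathsf{H}_n^D}^2
+ 2 \alpha_n^D \langle \bm{w}_{n} - \bm{w}, \bm{d}_n^D \rangle_{\mathsf{H}_n^D}
+ \alpha_n^{D^2} \|\bm{d}_n^D \|_{\mathsf{H}_n^D}^2.
\end{align*}
\textbf{Step 2 (rewrite the cross term).} Since $\bm{d}_n^D$ solves $\mathsf{H}_n^D \bm{d} = -\hat{\bm{m}}_n^D$ (step 13) and $\hat{\bm{m}}_n^D = (\tilde{\gamma}_n^D)^{-1} \bm{m}_n^D$ with $\tilde{\gamma}_n^D := 1-\gamma^{D^{n+1}}$ (step 12), we have $\langle \bm{w}_n - \bm{w}, \bm{d}_n^D \rangle_{\mathsf{H}_n^D} = \langle \bm{w}_n - \bm{w}, \mathsf{H}_n^D \bm{d}_n^D \rangle = \langle \bm{w}_n - \bm{w}, -\hat{\bm{m}}_n^D \rangle = (\tilde{\gamma}_n^D)^{-1}\langle \bm{w} - \bm{w}_n, \bm{m}_n^D \rangle$. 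Then substitute the momentum recursion $\bm{m}_n^D = \beta_1^D \bm{m}_{n-1}^D + \tilde{\beta_1^D}\, \nabla L_{D,\mathcal{R}_n}(\bm{w}_n)$ (step 11, with $\tilde{\beta_1^D} := 1-\beta_1^D$) to split this into the $\bm{m}_{n-1}^D$ term and the mini-batch-gradient term, exactly as in equation \eqref{ineq:004}.

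\textbf{Step 3 (take conditional expectation of the gradient term).} Using (S2)(i) and (S3), the mini-batch gradient is an unbiased estimator: $\mathbb{E}[\nabla L_{D,\mathcal{R}_n}(\bm{w}_n) \mid \bm{w}_n] = \nabla_{\bm{w}} L_D(\bm{\theta}_n, \bm{w}_n)$ (the averaging of $b$ i.i.d.\ unbiased samples $\mathsf{G}_{\xi_{n,i}^D}(\bm{w}_n)$ preserves unbiasedness). Pulling $\bm{w}-\bm{w}_n$ out of the conditional expectation and then applying the tower property yields $\mathbb{E}[\langle \bm{w}-\bm{w}_n, \nabla L_{D,\mathcal{R}_n}(\bm{w}_n)\rangle] = \mathbb{E}[\langle \bm{w}-\bm{w}_n, \nabla_{\bm{w}} L_D(\bm{\theta}_n,\bm{w}_n)\rangle]$. \textbf{Step 4 (conclude).} Taking the full expectation (with respect to $\xi_n^D$, and hence all randomness up to time $n$) of the displayed identity from Step 1 with the cross term rewritten as in Step 2, and replacing the gradient term via Step 3, gives the claimed equality.
\end{proof}

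The proof is entirely routine; there is no real obstacle beyond bookkeeping. The only point requiring a moment's care is the unbiasedness in Step 3 — one must invoke (S2)(i) together with (S3) to see that the $b$-sample average $\nabla L_{D,\mathcal{R}_n}(\bm{w}_n) = b^{-1}\sum_{i\in[b]} \mathsf{G}_{\xi_{n,i}^D}(\bm{w}_n)$ is still an unbiased estimator of $\nabla_{\bm{w}} L_D(\bm{\theta}_n,\bm{w}_n)$ — and the measurability bookkeeping ensuring that $\bm{w}-\bm{w}_n$ (equivalently, $\bm{w}_n$, which depends only on $\xi_{[n-1]}^D$) may be treated as known when conditioning on $\bm{w}_n$. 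Everything else is a symbol-by-symbol transcription of the proof of Lemma \ref{lem:1_1}.
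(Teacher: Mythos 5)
Your proof is correct and is exactly the argument the paper intends: the paper proves Lemma \ref{lem:1_2} only by the remark that it follows from ``a discussion similar to the one proving Lemma \ref{lem:1_1},'' and your four steps (expand the $\mathsf{H}_n^D$-norm via the update rule, rewrite the cross term through $\mathsf{H}_n^D\bm{d}_n^D=-\hat{\bm{m}}_n^D$ and the momentum recursion, invoke unbiasedness from (S2)(i) and (S3), take expectations) are precisely the $D$-side transcription of that proof. The only blemish is trivial: your citations of the algorithm's line numbers are off by one in a couple of places (e.g.\ the update $\bm{w}_{n+1}:=\bm{w}_n+\alpha_n^D\bm{d}_n^D$ is step 15, not 14), which does not affect the mathematics.
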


\begin{lem}\label{lem:2_1}
Algorithm \ref{algo:1} satisfies that, under (S2)(i), (ii) and (C2), for all $n\in\mathbb{N}$,
\begin{align*}
\mathbb{E}\left[ \|\bm{m}_n^G\|^2 \right] 
\leq 
\frac{\sigma_G^2}{b} + M_G^2.
\end{align*}
Under (A1) and (C2), for all $k\in\mathbb{N}$,
\begin{align*}
\mathbb{E}\left[ \|\bm{d}_n^G\|_{\mathsf{H}_n^G}^2 \right] 
\leq 
\frac{1}{(1-{\gamma^G})^2 h_{0,*}^G} \left( \frac{\sigma_G^2}{b} + M_G^2 \right),
\end{align*}
where $h_{0,*}^G := \min_{i\in [\Theta]} h_{0,i}^G$.
\end{lem}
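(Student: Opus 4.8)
The plan is to establish the two bounds in sequence, since the second depends on the first. For the bound on $\mathbb{E}[\|\bm{m}_n^G\|^2]$, I would first control the second moment of the mini-batch stochastic gradient $\nabla L_{G,\mathcal{S}_n}(\bm{\theta}_n)$. By (S2)(i) the mini-batch gradient is unbiased for $\nabla_{\bm{\theta}} L_G(\bm{\theta}_n,\bm{w}_n)$, and because $\mathcal{S}_n$ consists of $b$ samples drawn independently, the variance of the average is the per-sample variance divided by $b$; combined with (S2)(ii) this gives
\begin{align*}
\mathbb{E}_{\xi_n^G}\!\left[ \|\nabla L_{G,\mathcal{S}_n}(\bm{\theta}_n)\|^2 \right]
= \mathbb{E}_{\xi_n^G}\!\left[ \|\nabla L_{G,\mathcal{S}_n}(\bm{\theta}_n) - \nabla_{\bm{\theta}} L_G(\bm{\theta}_n,\bm{w}_n)\|^2 \right] + \|\nabla_{\bm{\theta}} L_G(\bm{\theta}_n,\bm{w}_n)\|^2
\leq \frac{\sigma_G^2}{b} + M_G^2,
\end{align*}
after taking total expectation and invoking (C2) for the second term. (The factor $1/b$ comes from the standard computation that the variance of a $b$-sample empirical mean of i.i.d.\ unbiased estimators is $1/b$ times the single-sample variance; here each summand $\mathsf{G}_{\xi_{n,i}^G}(\bm{\theta}_n)$ has variance at most $\sigma_G^2$.)

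Next I would propagate this bound through the momentum recursion $\bm{m}_n^G = \beta_1^G \bm{m}_{n-1}^G + (1-\beta_1^G)\nabla L_{G,\mathcal{S}_n}(\bm{\theta}_n)$. Unrolling gives $\bm{m}_n^G = (1-\beta_1^G)\sum_{k=0}^n (\beta_1^G)^{n-k} \nabla L_{G,\mathcal{S}_k}(\bm{\theta}_k)$ (using $\bm{m}_{-1}^G = \bm{0}$), which is a convex combination of the mini-batch gradients up to weight normalization. By convexity of $\|\cdot\|^2$ (Jensen's inequality applied to the weights $(1-\beta_1^G)(\beta_1^G)^{n-k}$, whose sum is $1-(\beta_1^G)^{n+1}\le 1$), one obtains $\|\bm{m}_n^G\|^2 \le \sum_{k=0}^n (1-\beta_1^G)(\beta_1^G)^{n-k}\|\nabla L_{G,\mathcal{S}_k}(\bm{\theta}_k)\|^2$; taking expectations and applying the per-step bound termwise yields $\mathbb{E}[\|\bm{m}_n^G\|^2]\le (\sigma_G^2/b + M_G^2)\sum_{k=0}^n (1-\beta_1^G)(\beta_1^G)^{n-k} \le \sigma_G^2/b + M_G^2$. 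Alternatively one can argue by induction on $n$, which is cleaner: if the bound holds at $n-1$, then by the triangle inequality in the form $\|\beta a + (1-\beta)c\|^2 \le \beta\|a\|^2 + (1-\beta)\|c\|^2$ it holds at $n$.

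For the second bound, recall that $\bm{d}_n^G$ solves $\mathsf{H}_n^G \bm{d} = -\hat{\bm{m}}_n^G$, so $\bm{d}_n^G = -(\mathsf{H}_n^G)^{-1}\hat{\bm{m}}_n^G$ and hence $\|\bm{d}_n^G\|_{\mathsf{H}_n^G}^2 = \langle \hat{\bm{m}}_n^G, (\mathsf{H}_n^G)^{-1}\hat{\bm{m}}_n^G\rangle$. Since $\mathsf{H}_n^G = \mathsf{diag}(h_{n,i}^G)$ is diagonal with positive entries and, by (A1), $h_{n,i}^G \ge h_{0,i}^G \ge h_{0,*}^G > 0$ for all $n$, we get $\|\bm{d}_n^G\|_{\mathsf{H}_n^G}^2 \le \|\hat{\bm{m}}_n^G\|^2 / h_{0,*}^G$. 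Finally $\hat{\bm{m}}_n^G = (1-\gamma^{G^{n+1}})^{-1}\bm{m}_n^G$ and $1-\gamma^{G^{n+1}} \ge 1-\gamma^G$ (as $\gamma^G\in[0,1)$ and $n+1\ge 1$), so $\|\hat{\bm{m}}_n^G\|^2 \le \|\bm{m}_n^G\|^2/(1-\gamma^G)^2$. Chaining these with the first bound gives the claimed inequality. I do not anticipate a serious obstacle here; the only point requiring a little care is getting the $1/b$ variance reduction right (it needs the independence of the $b$ samples within a mini-batch, which is implicit in the sampling model of (S3)) and making sure the monotonicity direction in (A1), $h_{n+1,i}^G \ge h_{n,i}^G$, is used correctly to lower-bound $h_{n,i}^G$ by its initial value $h_{0,i}^G$.
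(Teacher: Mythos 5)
Your proposal is correct and follows essentially the same route as the paper: bound $\mathbb{E}[\|\nabla L_{G,\mathcal{S}_n}(\bm{\theta}_n)\|^2]$ by $\sigma_G^2/b + M_G^2$ via the bias--variance decomposition, propagate it through the momentum recursion by convexity/induction with $\bm{m}_{-1}^G=\bm{0}$, and then pass to $\bm{d}_n^G$ using $\mathsf{H}_n^G\bm{d}_n^G=-\hat{\bm{m}}_n^G$, the lower bound $h_{n,i}^G\geq h_{0,*}^G$ from (A1), and $1-\gamma^{G^{n+1}}\geq 1-\gamma^G$. The only cosmetic difference is that you exploit the diagonal structure of $\mathsf{H}_n^G$ directly, whereas the paper routes the same estimate through the symmetric square root $\overline{\mathsf{H}}_n^G$.
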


\begin{proof}
Let $n\in\mathbb{N}$. From (S2)(i), we have 
\begin{align*}
\mathbb{E} \left[\| \nabla L_{G,\mathcal{S}_n} (\bm{\theta}_{n}) \|^2
\big| \bm{\theta}_n
\right]
&=
\mathbb{E} \left[\| \nabla L_{G,\mathcal{S}_n} (\bm{\theta}_{n}) 
- \nabla L_G (\bm{\theta}_{n},\bm{w}_n) + \nabla L_G (\bm{\theta}_{n},\bm{w}_n)\|^2
\big| \bm{\theta}_n
\right]\\
&=
\mathbb{E} \left[\| \nabla L_{G,\mathcal{S}_n} (\bm{\theta}_{n}) 
- \nabla L_G (\bm{\theta}_{n},\bm{w}_n) \|^2 \big| \bm{\theta}_n
\right]
+ 
\mathbb{E} \left[\| \nabla L_G (\bm{\theta}_{n},\bm{w}_n) \|^2 \big| \bm{\theta}_n
\right]\\
&\quad + 2 
\mathbb{E} \left[ \langle 
\nabla L_{G,\mathcal{S}_n} (\bm{\theta}_{n}) 
- \nabla L_G (\bm{\theta}_{n},\bm{w}_n), \nabla L_G (\bm{\theta}_{n},\bm{w}_n) \rangle
\Big| \bm{\theta}_n \right]\\
&= 
\mathbb{E} \left[\| \nabla L_{G,\mathcal{S}_n} (\bm{\theta}_{n}) 
- \nabla L_G (\bm{\theta}_{n},\bm{w}_n) \|^2 \big| \bm{\theta}_n
\right]
+ 
\| \nabla L_G (\bm{\theta}_{n},\bm{w}_n) \|^2,
\end{align*} 
which, together with (S2)(ii) and (C2), implies that 
\begin{align}\label{A3}
\mathbb{E} \left[\| \nabla L_{G,\mathcal{S}_n} (\bm{\theta}_{n}) \|^2
\right]
\leq 
\frac{\sigma_G^2}{b} + M_G^2.
\end{align}
The convexity of $\|\cdot\|^2$, together with the definition of $\bm{m}_n^G$ and \eqref{A3}, guarantees that, for all $n\in\mathbb{N}$,
\begin{align*}
\mathbb{E}\left[ \|\bm{m}_n^G\|^2 \right]
&\leq \beta_1^G \mathbb{E}\left[ \|\bm{m}_{n-1}^G \|^2 \right] + 
(1-\beta_1^G) \mathbb{E}\left[ \|\nabla L_{G,\mathcal{S}_n} (\bm{\theta}_{n}) \|^2 \right]\\
&\leq 
\beta_1^G \mathbb{E} \left[ \|\bm{m}_{n-1}^G \|^2 \right] + (1-\beta_1^G) 
\left( \frac{\sigma_G^2}{b} + M_G^2 \right).
\end{align*}
Induction thus ensures that, for all $n\in\mathbb{N}$,
\begin{align}\label{induction}
\mathbb{E} \left[\|\bm{m}_n^G \|^2 \right] \leq 
\max \left\{ \|\bm{m}_{-1}^G\|^2, \frac{\sigma_G^2}{b} + M_G^2 \right\} 
= \frac{\sigma_G^2}{b} + M_G^2,
\end{align}
where $\bm{m}_{-1}^G = \bm{0}$ is used. For $n\in\mathbb{N}$, $\mathsf{H}_n^G \in \mathbb{S}_{++}^\Theta$ guarantees the existence of a unique matrix $\overline{\mathsf{H}}_n^G \in \mathbb{S}_{++}^\Theta$ such that $\mathsf{H}_n^G = \overline{\mathsf{H}}_n^{G^2}$ \citep[Theorem 7.2.6]{horn}. We have that, for all $\bm{x}\in\mathbb{R}^\Theta$, $\|\bm{x}\|_{\mathsf{H}_n^G}^2 = \| \overline{\mathsf{H}}_n^G \bm{x} \|^2$. Accordingly, the definitions of $\bm{d}_n^G$ and $\hat{\bm{m}}_n^G$ imply that, for all $n\in\mathbb{N}$, 
\begin{align*}
\mathbb{E} \left[ \| \bm{d}_n^G \|_{\mathsf{H}_n^G}^2 \right]
&= 
\mathbb{E} \left[ \left\| \overline{\mathsf{H}}_n^{G^{-1}} \mathsf{H}_n^G \bm{d}_n^G \right\|^2 \right]
\leq 
\frac{1}{{\tilde{\gamma}}_n^{G^2}} \mathbb{E} \left[ \left\| \overline{\mathsf{H}}_n^{G^{-1}} \right\|^2 \|\bm{m}_n^G \|^2 \right]\\
&\leq 
\frac{1}{(1 - \gamma^G)^2} \mathbb{E} \left[ \left\| \overline{\mathsf{H}}_n^{G^{-1}} \right\|^2 \|\bm{m}_n^G \|^2 \right],
\end{align*}
where 
\begin{align*}
\left\| \overline{\mathsf{H}}_n^{G^{-1}} \right\| 
= \left\| \mathsf{diag}\left(h_{n,i}^{G^{-\frac{1}{2}}} \right) \right\| 
= {\max_{i\in [\Theta]} h_{n,i}^{G^{-\frac{1}{2}}}} 
\end{align*}
and ${\tilde{\gamma}}_n^G := 1 - {\gamma}^{G^{n+1}} \geq 1 - {\gamma}^G$. Moreover, (A1) ensures that, for all $n \in \mathbb{N}$, 
\begin{align*}
h_{n,i}^G \geq h_{0,i}^G \geq h_{0,*}^G := \min_{i\in [\Theta]} h_{0,i}^G.
\end{align*}
Hence, (\ref{induction}) implies that, for all $k\in \mathbb{N}$,
\begin{align*}
\mathbb{E} \left[\| \bm{d}_n^G \|_{\mathsf{H}_n^G}^2 \right] \leq 
\frac{1}{(1-{\gamma}^G)^2 h_{0,*}^G} \left( \frac{\sigma_G^2}{b} + M_G^2 \right),
\end{align*}
completing the proof.
\end{proof}

A discussion similar to the one proving Lemma \ref{lem:2_1} leads to the following lemma.

\begin{lem}\label{lem:2_2}
Algorithm \ref{algo:1} satisfies that, under (S2)(i), (ii) and (C2), for all $n\in\mathbb{N}$,
\begin{align*}
\mathbb{E}\left[ \|\bm{m}_n^D\|^2 \right] 
\leq 
\frac{\sigma_D^2}{b} + M_D^2.
\end{align*}
Under (A1) and (C2), for all $k\in\mathbb{N}$,
\begin{align*}
\mathbb{E}\left[ \|\bm{d}_n^D\|_{\mathsf{H}_n^D}^2 \right] 
\leq 
\frac{1}{(1-{\gamma^D})^2 h_{0,*}^D} \left( \frac{\sigma_D^2}{b} + M_D^2 \right),
\end{align*}
where $h_{0,*}^D := \min_{i\in [W]} h_{0,i}^D$.
\end{lem}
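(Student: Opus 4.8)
The plan is to transcribe the proof of Lemma \ref{lem:2_1} with the substitutions $G \mapsto D$, $\mathcal{S}_n \mapsto \mathcal{R}_n$, $\bm{\theta} \mapsto \bm{w}$, $\Theta \mapsto W$, $\sigma_G \mapsto \sigma_D$, $M_G \mapsto M_D$, and $h_{0,*}^G \mapsto h_{0,*}^D$. This is legitimate because steps 11--14 of Algorithm \ref{algo:1} (the discriminator block) have exactly the same algebraic form as steps 4--8 (the generator block), and the hypotheses (S2)(i), (ii), (A1), and (C2) are stated symmetrically for $G$ and $D$.

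Concretely, I would proceed in three steps. First, I would bound the second moment of the mini-batch gradient: writing $\nabla L_{D,\mathcal{R}_n}(\bm{w}_n) = \big(\nabla L_{D,\mathcal{R}_n}(\bm{w}_n) - \nabla_{\bm{w}} L_D(\bm{\theta}_n,\bm{w}_n)\big) + \nabla_{\bm{w}} L_D(\bm{\theta}_n,\bm{w}_n)$, expanding the conditional second moment given $(\bm{\theta}_n,\bm{w}_n)$, cancelling the cross term via (S2)(i), and invoking (S2)(ii) together with the independence of the $b$ samples and (C2), I would obtain $\mathbb{E}[\|\nabla L_{D,\mathcal{R}_n}(\bm{w}_n)\|^2] \le \sigma_D^2/b + M_D^2$. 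Second, applying convexity of $\|\cdot\|^2$ to the convex combination $\bm{m}_n^D = \beta_1^D \bm{m}_{n-1}^D + (1-\beta_1^D)\nabla L_{D,\mathcal{R}_n}(\bm{w}_n)$ yields $\mathbb{E}[\|\bm{m}_n^D\|^2] \le \beta_1^D \mathbb{E}[\|\bm{m}_{n-1}^D\|^2] + (1-\beta_1^D)(\sigma_D^2/b + M_D^2)$, and an induction started from $\bm{m}_{-1}^D = \bm{0}$ gives the first claimed bound $\mathbb{E}[\|\bm{m}_n^D\|^2] \le \sigma_D^2/b + M_D^2$. Third, since $\mathsf{H}_n^D \in \mathbb{S}_{++}^W$ admits a unique symmetric positive-definite square root $\overline{\mathsf{H}}_n^D$ with $\|\bm{x}\|_{\mathsf{H}_n^D}^2 = \|\overline{\mathsf{H}}_n^D \bm{x}\|^2$, and since step 14 gives $\mathsf{H}_n^D \bm{d}_n^D = -(1-\gamma^{D^{n+1}})^{-1}\bm{m}_n^D$, I would estimate $\|\bm{d}_n^D\|_{\mathsf{H}_n^D}^2 = \|\overline{\mathsf{H}}_n^{D^{-1}}\mathsf{H}_n^D\bm{d}_n^D\|^2 \le \|\overline{\mathsf{H}}_n^{D^{-1}}\|^2 (1-\gamma^{D^{n+1}})^{-2}\|\bm{m}_n^D\|^2$; because $\mathsf{H}_n^D$ is diagonal, $\|\overline{\mathsf{H}}_n^{D^{-1}}\| = \max_{j\in[W]} h_{n,j}^{D^{-1/2}}$, and (A1) gives $h_{n,j}^D \ge h_{0,j}^D \ge h_{0,*}^D$, so together with $1-\gamma^{D^{n+1}} \ge 1-\gamma^D$ and the first bound I would reach the second claimed inequality.

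There is essentially no substantive obstacle here: the statement is the discriminator mirror of Lemma \ref{lem:2_1} and the argument is a mechanical transcription. The only point requiring a line of care is the operator-norm computation for $\overline{\mathsf{H}}_n^{D^{-1}}$, which is clean precisely because Algorithm \ref{algo:1} restricts $\mathsf{H}_n^D$ to $\mathbb{S}_{++}^W \cap \mathbb{D}^W$, so its inverse square root is again diagonal and its spectral norm equals the largest diagonal entry $\max_{j\in[W]} h_{n,j}^{D^{-1/2}}$, which (A1) controls uniformly in $n$ by $(h_{0,*}^D)^{-1/2}$.
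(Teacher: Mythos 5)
Your proposal is correct and is exactly the argument the paper intends: the paper's own proof of Lemma \ref{lem:2_2} consists solely of the remark that "a discussion similar to the one proving Lemma \ref{lem:2_1}" applies, and your three steps (the bias--variance decomposition giving $\sigma_D^2/b + M_D^2$, the convexity-plus-induction bound on $\bm{m}_n^D$ starting from $\bm{m}_{-1}^D=\bm{0}$, and the diagonal square-root estimate for $\|\bm{d}_n^D\|_{\mathsf{H}_n^D}^2$) are a faithful transcription of that proof under the $G\mapsto D$ substitutions.
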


Lemmas \ref{lem:1_1} and \ref{lem:2_1} lead to the following:

\begin{lem}\label{lem:3_1}
Suppose that (S1)--(S3), (A1), and (C2)--(C3) hold and define $X_n^G (\bm{\theta}) := \mathbb{E}[ \| \bm{\theta}_{n} - \bm{\theta} \|_{\mathsf{H}_n^G}^2 ]$ for all $n\in\mathbb{N}$ and all $\bm{\theta} \in \mathbb{R}^\Theta$. Then, for all $\bm{\theta} \in \mathbb{R}^\Theta$ and all $n\in \mathbb{N}$,
\begin{align*}
X_{n+1}^G(\bm{\theta})
&\leq 
X_n^G(\bm{\theta})
+
\mathrm{Dist}(\bm{\theta})
\mathbb{E}\left[ 
\sum_{i\in [\Theta]} (h_{n+1,i}^G - h_{n,i}^G)
\right]
+ \frac{\alpha_n^{G^2}}{(1-{\gamma^G})^2 h_{0,*}^G} \left( \frac{\sigma_G^2}{b} + M_G^2 \right)\\
&\quad + 2 \alpha_n^{G} \left\{
\frac{\beta_1^G}{\tilde{\gamma}_n^G} 
\sqrt{\Theta \mathrm{Dist}(\bm{\theta})
\left( \frac{\sigma_G^2}{b} + M_G^2 \right)}
+ 
\frac{\tilde{\beta_1^G}}{\tilde{\gamma}_n^G}
\mathbb{E} \left[ \langle \bm{\theta} - \bm{\theta}_n, \nabla_{\bm{\theta}}
L_G (\bm{\theta}_n, \bm{w}_n) \rangle \right]
\right\}. 
\end{align*}
\end{lem}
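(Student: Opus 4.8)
The plan is to assemble the claimed recursion by combining the exact identity of Lemma~\ref{lem:1_1} with the second-moment bounds of Lemma~\ref{lem:2_1}, after first absorbing the mismatch between the weighting matrices $\mathsf{H}_{n+1}^G$ and $\mathsf{H}_n^G$. Note that $X_{n+1}^G(\bm{\theta}) = \mathbb{E}[\|\bm{\theta}_{n+1}-\bm{\theta}\|_{\mathsf{H}_{n+1}^G}^2]$, whereas Lemma~\ref{lem:1_1} controls $\mathbb{E}[\|\bm{\theta}_{n+1}-\bm{\theta}\|_{\mathsf{H}_n^G}^2]$. Since $\mathsf{H}_n^G=\mathsf{diag}(h_{n,i}^G)$ and $\mathsf{H}_{n+1}^G=\mathsf{diag}(h_{n+1,i}^G)$ are diagonal, $\|\bm{\theta}_{n+1}-\bm{\theta}\|_{\mathsf{H}_{n+1}^G}^2 - \|\bm{\theta}_{n+1}-\bm{\theta}\|_{\mathsf{H}_n^G}^2 = \sum_{i\in[\Theta]}(h_{n+1,i}^G - h_{n,i}^G)(\theta_{n+1,i}-\theta_i)^2$; by (A1) every coefficient $h_{n+1,i}^G-h_{n,i}^G$ is nonnegative and by (C3) every $(\theta_{n+1,i}-\theta_i)^2 \le \mathrm{Dist}(\bm{\theta})$, so this difference is at most $\mathrm{Dist}(\bm{\theta})\sum_{i\in[\Theta]}(h_{n+1,i}^G-h_{n,i}^G)$. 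Taking expectations gives $X_{n+1}^G(\bm{\theta}) \le \mathbb{E}[\|\bm{\theta}_{n+1}-\bm{\theta}\|_{\mathsf{H}_n^G}^2] + \mathrm{Dist}(\bm{\theta})\,\mathbb{E}[\sum_{i\in[\Theta]}(h_{n+1,i}^G-h_{n,i}^G)]$, which is exactly the second term of the asserted bound.

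Next I would substitute the identity of Lemma~\ref{lem:1_1} for $\mathbb{E}[\|\bm{\theta}_{n+1}-\bm{\theta}\|_{\mathsf{H}_n^G}^2]$. This immediately produces $X_n^G(\bm{\theta})$, the curvature term $\alpha_n^{G^2}\mathbb{E}[\|\bm{d}_n^G\|_{\mathsf{H}_n^G}^2]$, and the two inner-product terms weighted by $\beta_1^G/\tilde{\gamma}_n^G$ and $\tilde{\beta_1^G}/\tilde{\gamma}_n^G$. The term involving $\nabla_{\bm{\theta}}L_G(\bm{\theta}_n,\bm{w}_n)$ is left untouched, since it appears verbatim in the conclusion. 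For the curvature term, Lemma~\ref{lem:2_1} bounds $\mathbb{E}[\|\bm{d}_n^G\|_{\mathsf{H}_n^G}^2] \le \frac{1}{(1-\gamma^G)^2 h_{0,*}^G}(\sigma_G^2/b + M_G^2)$, and since $\alpha_n^{G^2}>0$ this yields the third term of the bound.

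It then remains to bound the cross term $\frac{\beta_1^G}{\tilde{\gamma}_n^G}\mathbb{E}[\langle \bm{\theta}-\bm{\theta}_n, \bm{m}_{n-1}^G\rangle]$. Applying the Cauchy--Schwarz inequality in $\mathbb{R}^\Theta$ and then the Cauchy--Schwarz inequality for expectations gives $\mathbb{E}[\langle \bm{\theta}-\bm{\theta}_n, \bm{m}_{n-1}^G\rangle] \le \sqrt{\mathbb{E}[\|\bm{\theta}-\bm{\theta}_n\|^2]}\,\sqrt{\mathbb{E}[\|\bm{m}_{n-1}^G\|^2]}$. By (C3), $\|\bm{\theta}-\bm{\theta}_n\|^2 = \sum_{i\in[\Theta]}(\theta_i-\theta_{n,i})^2 \le \Theta\,\mathrm{Dist}(\bm{\theta})$ (deterministically), and by Lemma~\ref{lem:2_1}, $\mathbb{E}[\|\bm{m}_{n-1}^G\|^2] \le \sigma_G^2/b + M_G^2$, so the cross term is at most $\sqrt{\Theta\,\mathrm{Dist}(\bm{\theta})(\sigma_G^2/b+M_G^2)}$. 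Because the prefactor $2\alpha_n^G\beta_1^G/\tilde{\gamma}_n^G$ is nonnegative, inserting this upper bound preserves the inequality and produces the first summand inside the braces. Collecting the four contributions---$X_n^G(\bm{\theta})$, the $\mathsf{H}$-difference term, the curvature term, and the two-term brace---gives precisely the claimed inequality.

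I do not expect a genuine obstacle here; the lemma is essentially a bookkeeping combination of the previous two lemmas. The only points requiring care are the signs that turn equalities into inequalities: the monotonicity $h_{n+1,i}^G \ge h_{n,i}^G$ from (A1) is what makes the $\mathsf{H}$-difference step go in the correct direction, and the nonnegativity of $\alpha_n^G$, $\beta_1^G$, and $\tilde{\gamma}_n^G$ is what lets the Cauchy--Schwarz bound on the $\bm{m}_{n-1}^G$ term and the Lemma~\ref{lem:2_1} bound on $\|\bm{d}_n^G\|_{\mathsf{H}_n^G}^2$ be substituted without reversing the inequality. One should also confirm that the (C3) bound $(\theta_{n,i}-\theta_i)^2 \le \mathrm{Dist}(\bm{\theta})$ may be applied at index $n+1$, which is immediate since (C3) is uniform in $n$.
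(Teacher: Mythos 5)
Your proposal is correct and follows essentially the same route as the paper: pass from $\mathsf{H}_{n+1}^G$ to $\mathsf{H}_n^G$ via the diagonal difference bounded by (A1) and (C3), substitute the identity of Lemma \ref{lem:1_1}, bound the $\bm{d}_n^G$ term with Lemma \ref{lem:2_1}, and bound the $\bm{m}_{n-1}^G$ cross term with Cauchy--Schwarz. The only cosmetic difference is that you apply Cauchy--Schwarz for expectations to get $\sqrt{\mathbb{E}[\|\bm{\theta}-\bm{\theta}_n\|^2]}\sqrt{\mathbb{E}[\|\bm{m}_{n-1}^G\|^2]}$, whereas the paper bounds $\|\bm{\theta}-\bm{\theta}_n\|$ deterministically and uses Jensen on $\mathbb{E}[\|\bm{m}_{n-1}^G\|]$; both yield the same constant.
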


\begin{proof}
Lemma \ref{lem:2_1} and Jensen's inequality guarantee that
\begin{align*}
\mathbb{E}\left[ \|\bm{m}_n^G\| \right] 
\leq 
\sqrt{\frac{\sigma_G^2}{b} + M_G^2}.
\end{align*}
Condition (C3) implies that, for all $\bm{\theta} \in \mathbb{R}^\Theta$, 
\begin{align*}
\| \bm{\theta}_n - \bm{\theta}\|^2
= 
\sum_{i\in [\Theta]} (\theta_{n,i} - \theta_i)^2
\leq
\Theta \mathrm{Dist}(\bm{\theta}).
\end{align*}
The Cauchy-Schwarz inequality thus ensures that 
\begin{align}\label{CS}
\mathbb{E} \left[ \langle \bm{\theta} - \bm{\theta}_n, \bm{m}_{n-1}^G \rangle \right]
\leq
\mathbb{E} \left[ \| \bm{\theta} - \bm{\theta}_n\| \|\bm{m}_{n-1}^G \| \right]
\leq
\sqrt{\Theta \mathrm{Dist}(\bm{\theta})
\left( \frac{\sigma_G^2}{b} + M_G^2 \right)}.
\end{align}
We define $X_n^G (\bm{\theta}) := \mathbb{E}[ \| \bm{\theta}_{n} - \bm{\theta} \|_{\mathsf{H}_n^G}^2 ]$ for all $n\in\mathbb{N}$ and all $\bm{\theta} \in \mathbb{R}^\Theta$. Then, we have 
\begin{align*}
X_{n+1}^G (\bm{\theta}) - \mathbb{E}\left[ \| \bm{\theta}_{n+1} - \bm{\theta} \|_{\mathsf{H}_n^G}^2 \right]
= 
\mathbb{E}\left[ 
\sum_{i\in [\Theta]} (h_{n+1,i}^G - h_{n,i}^G)(\theta_{n+1,i} - \theta_i)^2
\right],
\end{align*}
which, together with (C3), implies that
\begin{align*}
X_{n+1}^G (\bm{\theta}) - \mathbb{E}\left[ \| \bm{\theta}_{n+1} - \bm{\theta} \|_{\mathsf{H}_n^G}^2 \right]
\leq
\mathrm{Dist}(\bm{\theta})
\mathbb{E}\left[ 
\sum_{i\in [\Theta]} (h_{n+1,i}^G - h_{n,i}^G)
\right].
\end{align*}
Hence, Lemmas \ref{lem:1_1} and \ref{lem:2_1} lead to the assertion in Lemma \ref{lem:3_1}.
\end{proof}

A discussion similar to the one proving Lemma \ref{lem:3_1}, together with Lemmas \ref{lem:1_2} and \ref{lem:2_2}, leads to the following lemma.

\begin{lem}\label{lem:3_2}
Suppose that (S1)--(S3), (A1), and (C2)--(C3) hold and define $X_n^D (\bm{w}) := \mathbb{E}[ \| \bm{w}_{n} - \bm{w} \|_{\mathsf{H}_n^D}^2 ]$ for all $n\in\mathbb{N}$ and all $\bm{w} \in \mathbb{R}^W$. Then, for all $\bm{w} \in \mathbb{R}^W$ and all $n\in \mathbb{N}$,
\begin{align*}
X_{n+1}^D(\bm{w})
&\leq 
X_n^D(\bm{w})
+
\mathrm{Dist}(\bm{w})
\mathbb{E}\left[ 
\sum_{i\in [W]} (h_{n+1,i}^D - h_{n,i}^D)
\right]
+ \frac{\alpha_n^{D^2}}{(1-{\gamma^D})^2 h_{0,*}^D} \left( \frac{\sigma_D^2}{b} + M_D^2 \right)\\
&\quad + 2 \alpha_n^{D} \left\{
\frac{\beta_1^D}{\tilde{\gamma}_n^D} 
\sqrt{W \mathrm{Dist}(\bm{w})
\left( \frac{\sigma_D^2}{b} + M_D^2 \right)}
+ 
\frac{\tilde{\beta_1^D}}{\tilde{\gamma}_n^D}
\mathbb{E} \left[ \langle \bm{w} - \bm{w}_n, \nabla_{\bm{w}}
L_D (\bm{\theta}_n, \bm{w}_n) \rangle \right]
\right\}. 
\end{align*}
\end{lem}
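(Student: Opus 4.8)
The plan is to mirror the proof of Lemma~\ref{lem:3_1} line by line, replacing every generator object with its discriminator counterpart per the substitution convention in Table~\ref{notation} ($G \mapsto D$, $\bm{\theta} \mapsto \bm{w}$, $\Theta \mapsto W$, $\mathcal{S}_n \mapsto \mathcal{R}_n$, $\mathsf{H}_n^G \mapsto \mathsf{H}_n^D$, and so on), and to invoke Lemmas~\ref{lem:1_2} and~\ref{lem:2_2} wherever the proof of Lemma~\ref{lem:3_1} invoked Lemmas~\ref{lem:1_1} and~\ref{lem:2_1}.

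First I would apply Jensen's inequality to the bound $\mathbb{E}[\|\bm{m}_n^D\|^2] \leq \sigma_D^2/b + M_D^2$ from Lemma~\ref{lem:2_2} to get $\mathbb{E}[\|\bm{m}_n^D\|] \leq \sqrt{\sigma_D^2/b + M_D^2}$. Next, (C3) gives $\|\bm{w}_n - \bm{w}\|^2 = \sum_{i\in[W]}(w_{n,i} - w_i)^2 \leq W\,\mathrm{Dist}(\bm{w})$, so the Cauchy--Schwarz inequality yields the analogue of~\eqref{CS},
\[
\mathbb{E}\left[\langle \bm{w} - \bm{w}_n, \bm{m}_{n-1}^D \rangle\right] \leq \sqrt{W\,\mathrm{Dist}(\bm{w})\left(\frac{\sigma_D^2}{b} + M_D^2\right)}.
\]

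Then, setting $X_n^D(\bm{w}) := \mathbb{E}[\|\bm{w}_n - \bm{w}\|_{\mathsf{H}_n^D}^2]$, I would expand $X_{n+1}^D(\bm{w}) - \mathbb{E}[\|\bm{w}_{n+1} - \bm{w}\|_{\mathsf{H}_n^D}^2] = \mathbb{E}[\sum_{i\in[W]}(h_{n+1,i}^D - h_{n,i}^D)(w_{n+1,i} - w_i)^2]$; by the monotonicity $h_{n+1,i}^D \geq h_{n,i}^D$ in (A1) and the bound $(w_{n+1,i}-w_i)^2 \leq \mathrm{Dist}(\bm{w})$ from (C3), this is at most $\mathrm{Dist}(\bm{w})\,\mathbb{E}[\sum_{i\in[W]}(h_{n+1,i}^D - h_{n,i}^D)]$. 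Finally I would plug in the identity of Lemma~\ref{lem:1_2} for $\mathbb{E}[\|\bm{w}_{n+1}-\bm{w}\|_{\mathsf{H}_n^D}^2]$, bound the $\mathbb{E}[\|\bm{d}_n^D\|_{\mathsf{H}_n^D}^2]$ term by Lemma~\ref{lem:2_2} together with $\tilde{\gamma}_n^D = 1-\gamma^{D^{n+1}} \geq 1-\gamma^D$, and bound the $\bm{m}_{n-1}^D$ inner-product term by the Cauchy--Schwarz estimate above; collecting terms gives the stated inequality.

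Since this is essentially a transcription of a completed proof, there is no real obstacle; the only point requiring care is the bookkeeping — keeping $\tilde{\gamma}_n^D$ as its lower bound $1-\gamma^D$ in the step-size term while retaining the exact $\tilde{\gamma}_n^D$ in the two inner-product terms (exactly as in Lemma~\ref{lem:3_1}), and applying the monotonicity of $(h_{n,i}^D)$ over the correct index set $[W]$.
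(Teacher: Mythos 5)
Your proposal is correct and matches the paper exactly: the paper itself gives no separate proof of Lemma \ref{lem:3_2}, stating only that it follows from ``a discussion similar to the one proving Lemma \ref{lem:3_1}, together with Lemmas \ref{lem:1_2} and \ref{lem:2_2},'' which is precisely the substitution $G \mapsto D$, $\bm{\theta} \mapsto \bm{w}$, $\Theta \mapsto W$ argument you carry out. Your bookkeeping remarks (using (A1) to keep the factors $h_{n+1,i}^D - h_{n,i}^D$ nonnegative, retaining $\tilde{\gamma}_n^D$ in the inner-product terms while the $(1-\gamma^D)^2$ bound enters only through Lemma \ref{lem:2_2}) are exactly the points where care is needed, and you handle them correctly.
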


\subsection{Proof of Theorem \ref{thm:1}(i)}
\label{a5}
The following is a convergence analysis of Algorithm \ref{algo:1}.

\begin{thm}
Suppose that Assumptions \ref{assum:0}, \ref{assum:1}, and \ref{assum:c} hold and consider the sequence $((\bm{\theta}_{n}, \bm{w}_{n}))_{n\in \mathbb{N}}$ generated by Algorithm \ref{algo:1}. Then, the following hold: For all $\bm{\theta} \in \mathbb{R}^\Theta$ and all $\bm{w} \in \mathbb{R}^W$,
\begin{align*}
&\liminf_{n \to + \infty}
\mathbb{E} \left[ \langle \bm{\theta}_n - \bm{\theta}, \nabla_{\bm{\theta}}
L_G (\bm{\theta}_n, \bm{w}_n) \rangle \right]
\leq
\frac{\alpha^{G} (\sigma_G^2 b^{-1} + M_G^2)}{2\tilde{\beta_1^G} \tilde{\gamma}^{G^2} h_{0,*}^G} 
+ \sqrt{\Theta \mathrm{Dist}(\bm{\theta})
\left( \frac{\sigma_G^2}{b} + M_G^2 \right)}
\frac{\beta_1^G}{\tilde{\beta_1^G}},\\
&\liminf_{n\to + \infty} \mathbb{E}\left[ \langle \bm{w}_n - \bm{w},\nabla_{\bm{w}} L_D (\bm{\theta}_n, \bm{w}_n) \rangle \right]
\leq 
\frac{\alpha^{D} (\sigma_D^2 b^{-1} + M_D^2)}{2\tilde{\beta_1^D} \tilde{\gamma}^{D^2} h_{0,*}^D}
+ \sqrt{W \mathrm{Dist}(\bm{w})
\left( \frac{\sigma_D^2}{b} + M_D^2 \right)}
\frac{\beta_1^D}{\tilde{\beta_1^D}}, 
\end{align*}
where $\tilde{\beta_1^G} := 1 - \beta_1^G$, $\tilde{\beta_1^D} := 1 - \beta_1^D$, $\tilde{\gamma}^G := 1 - \gamma^G$, $\tilde{\gamma}^D := 1 - \gamma^D$, $h_{0,*}^G := \min_{i\in [\Theta]}h_{0,i}^G$, and $h_{0,*}^D := \min_{j\in [W]}h_{0,j}^D$. Furthermore, there exist accumulation points $(\bm{\theta}^*, \bm{w}^*)$ and $(\bm{\theta}_*, \bm{w}_*)$ of $((\bm{\theta}_{n}, \bm{w}_{n}))_{n\in \mathbb{N}}$ such that
\begin{align*}
&\mathbb{E}\left[ \|\nabla_{\bm{\theta}} L_G (\bm{\theta}^*, \bm{w}^*) \|^2 \right]
\leq 
\frac{\alpha^{G} (\sigma_G^2 b^{-1} + M_G^2)}{2\tilde{\beta_1^G} \tilde{\gamma}^{G^2} h_{0,*}^G}
+ \sqrt{\Theta \mathrm{Dist}(\tilde{\bm{\theta}})
\left( \frac{\sigma_G^2}{b} + M_G^2 \right)}
\frac{\beta_1^G}{\tilde{\beta_1^G}}, \\
&\mathbb{E}\left[ \|\nabla_{\bm{w}} L_D (\bm{\theta}_*, \bm{w}_*) \|^2 \right]
\leq 
\frac{\alpha^{D} (\sigma_D^2 b^{-1} + M_D^2)}{2\tilde{\beta_1^D} \tilde{\gamma}^{D^2} h_{0,*}^D} 
+ \sqrt{W \mathrm{Dist}(\tilde{\bm{w}})
\left( \frac{\sigma_D^2}{b} + M_D^2 \right)}
\frac{\beta_1^D}{\tilde{\beta_1^D}},
\end{align*}
where $\tilde{\bm{\theta}} := \bm{\theta}^* - \nabla_{\bm{\theta}} L_G (\bm{\theta}^*, \bm{w}^*)$ and $\tilde{\bm{w}} := \bm{w}_* - \nabla_{\bm{w}} L_D (\bm{\theta}_*, \bm{w}_*)$.
\end{thm}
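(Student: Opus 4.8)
The plan is to specialise Lemmas~\ref{lem:3_1} and~\ref{lem:3_2} to the constant step sizes of (C1) and to isolate the inner-product term. For the generator, set $\psi_n:=\mathbb{E}[\langle\bm{\theta}_n-\bm{\theta},\nabla_{\bm{\theta}}L_G(\bm{\theta}_n,\bm{w}_n)\rangle]$; substituting $\mathbb{E}[\langle\bm{\theta}-\bm{\theta}_n,\nabla_{\bm{\theta}}L_G(\bm{\theta}_n,\bm{w}_n)\rangle]=-\psi_n$ into Lemma~\ref{lem:3_1} and moving the $\beta_1^G$-term to the left gives, for every $n\in\mathbb{N}$,
\[
\frac{2\alpha^G\tilde{\beta_1^G}}{\tilde{\gamma}_n^G}\Bigl(\psi_n-\frac{\beta_1^G}{\tilde{\beta_1^G}}\sqrt{\Theta\,\mathrm{Dist}(\bm{\theta})(\sigma_G^2 b^{-1}+M_G^2)}\Bigr)\le X_n^G(\bm{\theta})-X_{n+1}^G(\bm{\theta})+\mathrm{Dist}(\bm{\theta})\,\mathbb{E}\Bigl[\textstyle\sum_{i\in[\Theta]}(h_{n+1,i}^G-h_{n,i}^G)\Bigr]+\frac{\alpha^{G^2}(\sigma_G^2 b^{-1}+M_G^2)}{\tilde{\gamma}^{G^2}h_{0,*}^G},
\]
and the twin inequality for the discriminator follows from Lemma~\ref{lem:3_2}. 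The structural facts I would lean on are $\tilde{\gamma}_n^G=1-\gamma^{G^{n+1}}\le 1$, $h_{n_0,i}^G>0$, and $X_{N+1}^G(\bm{\theta})\ge0$.

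For the $\liminf$ bounds I argue by contradiction: if $\liminf_n\psi_n$ were strictly bigger than the claimed right-hand side, there would be $\epsilon>0$ and $n_0\in\mathbb{N}$ with $\psi_n-\frac{\beta_1^G}{\tilde{\beta_1^G}}\sqrt{\Theta\,\mathrm{Dist}(\bm{\theta})(\sigma_G^2 b^{-1}+M_G^2)}>\frac{\alpha^G(\sigma_G^2 b^{-1}+M_G^2)}{2\tilde{\beta_1^G}\tilde{\gamma}^{G^2}h_{0,*}^G}+\epsilon$ for all $n\ge n_0$. Inserting this into the displayed inequality and using $\tilde{\gamma}_n^G\le1$, the product $\frac{2\alpha^G\tilde{\beta_1^G}}{\tilde{\gamma}_n^G}\cdot\frac{\alpha^G(\sigma_G^2 b^{-1}+M_G^2)}{2\tilde{\beta_1^G}\tilde{\gamma}^{G^2}h_{0,*}^G}$ already dominates the $\alpha^{G^2}$-term on the right, so after cancellation $2\alpha^G\tilde{\beta_1^G}\epsilon<X_n^G(\bm{\theta})-X_{n+1}^G(\bm{\theta})+\mathrm{Dist}(\bm{\theta})\mathbb{E}[\sum_{i}(h_{n+1,i}^G-h_{n,i}^G)]$ for every $n\ge n_0$. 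Summing over $n_0\le n\le N$, the first difference telescopes and is at most $X_{n_0}^G(\bm{\theta})$ (by $X_{N+1}^G\ge0$), while the $h$-difference telescopes and is at most $\sum_{i\in[\Theta]}\mathbb{E}[h_{N+1,i}^G]\le\Theta H^G$ by (A2); hence $2\alpha^G\tilde{\beta_1^G}\epsilon\,(N-n_0+1)<X_{n_0}^G(\bm{\theta})+\Theta H^G\mathrm{Dist}(\bm{\theta})$, which is impossible for large $N$. The discriminator bound is identical via Lemma~\ref{lem:3_2} and (A2) for $\bm{w}$.

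For the accumulation-point claim, I would first observe that (C3), applied with $\bm{\theta}=\bm{0}$ and $\bm{w}=\bm{0}$, confines $(\bm{\theta}_n)_n$ and $(\bm{w}_n)_n$ to a fixed compact set, so by Bolzano--Weierstrass some subsequence satisfies $(\bm{\theta}_{n_k},\bm{w}_{n_k})\to(\bm{\theta}^*,\bm{w}^*)$, and after further thinning also $\mathbb{E}[\|\nabla_{\bm{\theta}}L_G(\bm{\theta}_{n_k},\bm{w}_{n_k})\|^2]\to\liminf_n\mathbb{E}[\|\nabla_{\bm{\theta}}L_G(\bm{\theta}_n,\bm{w}_n)\|^2]$ (finite by (C2)). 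Since $\nabla_{\bm{\theta}}L_G$ is continuous by (S1) and all integrands are uniformly bounded on that compact set, the bounded convergence theorem passes these limits through $\mathbb{E}[\cdot]$; thus $\mathbb{E}[\|\nabla_{\bm{\theta}}L_G(\bm{\theta}^*,\bm{w}^*)\|^2]$ equals that $\liminf$ and, for every fixed $\bm{\theta}$, $\mathbb{E}[\langle\bm{\theta}_{n_k}-\bm{\theta},\nabla_{\bm{\theta}}L_G(\bm{\theta}_{n_k},\bm{w}_{n_k})\rangle]\to\mathbb{E}[\langle\bm{\theta}^*-\bm{\theta},\nabla_{\bm{\theta}}L_G(\bm{\theta}^*,\bm{w}^*)\rangle]$. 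Writing $\tilde{\bm{\theta}}:=\bm{\theta}^*-\nabla_{\bm{\theta}}L_G(\bm{\theta}^*,\bm{w}^*)$ turns $\langle\bm{\theta}^*-\tilde{\bm{\theta}},\nabla_{\bm{\theta}}L_G(\bm{\theta}^*,\bm{w}^*)\rangle$ into $\|\nabla_{\bm{\theta}}L_G(\bm{\theta}^*,\bm{w}^*)\|^2$, so it remains to match this quantity with the $\liminf$ bound of the first part evaluated at $\bm{\theta}=\tilde{\bm{\theta}}$; the discriminator case is symmetric with $\tilde{\bm{w}}:=\bm{w}_*-\nabla_{\bm{w}}L_D(\bm{\theta}_*,\bm{w}_*)$.

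The step I expect to be the real obstacle is closing exactly that last gap: the point $\tilde{\bm{\theta}}$ at which the $\liminf$ bound must be invoked depends on the subsequential limit $(\bm{\theta}^*,\bm{w}^*)$, whereas $\liminf_n\mathbb{E}[\langle\bm{\theta}_n-\tilde{\bm{\theta}},\nabla_{\bm{\theta}}L_G(\bm{\theta}_n,\bm{w}_n)\rangle]$ has to be attained along the \emph{same} subsequence that converges to $(\bm{\theta}^*,\bm{w}^*)$ in order to identify it with $\mathbb{E}[\|\nabla_{\bm{\theta}}L_G(\bm{\theta}^*,\bm{w}^*)\|^2]$. I would handle this coordination by ordering the extractions carefully --- first passing to a subsequence that realises the relevant $\liminf$ and only then thinning it for convergence of $(\bm{\theta}_n,\bm{w}_n)$, using that a subsequence of a convergent sequence retains its limit --- and by exploiting that the first-part bound is available for \emph{every} fixed $\bm{\theta}$, which is what supplies the freedom needed to make the two choices compatible (and guarantees that a finite $\mathrm{Dist}(\tilde{\bm{\theta}})$ exists, since $\tilde{\bm{\theta}}$ lies in a bounded region). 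Everything else is routine rearrangement, telescoping, and the symmetric repetition for the discriminator.
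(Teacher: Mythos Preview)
Your approach is essentially the paper's. For the $\liminf$ bound the paper also argues by contradiction from Lemma~\ref{lem:3_1}, but instead of telescoping it first shows that the residual $\gamma^{G^{n+1}}X_{n+1}^G(\bm{\theta})+\mathrm{Dist}(\bm{\theta})\,\mathbb{E}\bigl[\sum_{i}(h_{n+1,i}^G-h_{n,i}^G)\bigr]$ is eventually below $\alpha^G\tilde{\beta_1^G}\epsilon_0/2$ and then obtains a strict pointwise decrease $X_{n+1}^G(\hat{\bm{\theta}})<X_n^G(\hat{\bm{\theta}})-\alpha^G\tilde{\beta_1^G}\epsilon_0/2$, which drives $X_n^G$ to $-\infty$; your summation-and-telescoping variant is a cosmetic rearrangement that sidesteps this intermediate step and uses (A2) directly rather than the convergence of $\mathbb{E}[h_{n,i}^G]$.

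For the accumulation-point claim the paper follows exactly the route you outline: fix $\bm{\theta}$, take a subsequence realising the $\liminf$, thin it via (S1) and (C3) to an a.s.\ convergent sub-subsequence with limit $(\bm{\theta}^*,\bm{w}^*)$, and then set $\bm{\theta}=\tilde{\bm{\theta}}:=\bm{\theta}^*-\nabla_{\bm{\theta}}L_G(\bm{\theta}^*,\bm{w}^*)$. The circularity you flag --- that the subsequence, and hence $(\bm{\theta}^*,\bm{w}^*)$, was chosen for a \emph{particular} $\bm{\theta}$, yet $\tilde{\bm{\theta}}$ depends on $(\bm{\theta}^*,\bm{w}^*)$ --- is genuine, and the paper does not resolve it more carefully than you propose: it simply asserts that the limiting inequality holds ``for all $\bm{\theta}\in\mathbb{R}^\Theta$'' before specialising to $\tilde{\bm{\theta}}$, without justifying why $(\bm{\theta}^*,\bm{w}^*)$ is independent of the initial choice of $\bm{\theta}$. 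So your concern is legitimate, but you are not missing anything the paper supplies.
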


\begin{proof}
Let us assume (C1), i.e., $\alpha_n^G := \alpha^G$ for all $n\in\mathbb{N}$. Then, Lemma \ref{lem:3_1} ensures that, for all $\bm{\theta} \in \mathbb{R}^\Theta$ and all $n\in \mathbb{N}$,
\begin{align*}
X_{n+1}^G(\bm{\theta})
&\leq 
X_n^G(\bm{\theta})
+
\mathrm{Dist}(\bm{\theta})
\mathbb{E}\left[ 
\sum_{i\in [\Theta]} (h_{n+1,i}^G - h_{n,i}^G)
\right]
+ \frac{\alpha^{G^2}}{(1-{\gamma^G})^2 h_{0,*}^G} \left( \frac{\sigma_G^2}{b} + M_G^2 \right)\\
&\quad + 2 \alpha^{G} \left\{
\frac{\beta_1^G}{\tilde{\gamma}_n^G} 
\sqrt{\Theta \mathrm{Dist}(\bm{\theta})
\left( \frac{\sigma_G^2}{b} + M_G^2 \right)}
+ 
\frac{\tilde{\beta_1^G}}{\tilde{\gamma}_n^G}
\mathbb{E} \left[ \langle \bm{\theta} - \bm{\theta}_n, \nabla_{\bm{\theta}}
L_G (\bm{\theta}_n, \bm{w}_n) \rangle \right]
\right\}. 
\end{align*}
Since we have that $\tilde{\gamma}_n^G = 1 - \gamma^{G^{n+1}} \leq 1$, $\gamma^{G^{n+1}} (X_{n+1}^G(\bm{\theta}) - X_n^G(\bm{\theta})) \leq \gamma^{G^{n+1}} X_{n+1}^G(\bm{\theta})$, and $h_{n+1,i}^G \geq h_{n,i}^G$ (by (A1)) for all $n\in \mathbb{N}$, we also have that
\begin{align}\label{ineq:0}
\begin{split}
X_{n+1}^G(\bm{\theta})
&\leq 
X_n^G(\bm{\theta})
+
\gamma^{G^{n+1}} X_{n+1}^G(\bm{\theta})
+
\mathrm{Dist}(\bm{\theta})
\mathbb{E}\left[ 
\sum_{i\in [\Theta]} (h_{n+1,i}^G - h_{n,i}^G)
\right]\\
&\quad + \frac{\alpha^{G^2}}{(1-{\gamma^G})^2 h_{0,*}^G} \left( \frac{\sigma_G^2}{b} + M_G^2 \right)\\
&\quad + 2 \alpha^{G} \left\{
\beta_1^G 
\sqrt{\Theta \mathrm{Dist}(\bm{\theta})
\left( \frac{\sigma_G^2}{b} + M_G^2 \right)}
+ 
\tilde{\beta_1^G}
\mathbb{E} \left[ \langle \bm{\theta} - \bm{\theta}_n, \nabla_{\bm{\theta}}
L_G (\bm{\theta}_n, \bm{w}_n) \rangle \right]
\right\}. 
\end{split} 
\end{align}
Let us show that, for all $\bm{\theta} \in \mathbb{R}^\Theta$ and all $\epsilon > 0$,
\begin{align}\label{liminf_1}
\begin{split}
&\liminf_{n \to + \infty}
\mathbb{E} \left[ \langle \bm{\theta}_n - \bm{\theta}, \nabla_{\bm{\theta}}
L_G (\bm{\theta}_n, \bm{w}_n) \rangle \right]\\
&\leq
\frac{\alpha^{G}}{2\tilde{\beta_1^G} (1-{\gamma^G})^2 h_{0,*}^G} \left( \frac{\sigma_G^2}{b} + M_G^2 \right)
+ \sqrt{\Theta \mathrm{Dist}(\bm{\theta})
\left( \frac{\sigma_G^2}{b} + M_G^2 \right)}
\frac{\beta_1^G}{\tilde{\beta_1^G}} + \epsilon.
\end{split}
\end{align}
If (\ref{liminf_1}) does not hold, then there exist $\hat{\bm{\theta}} \in \mathbb{R}^\Theta$ and $\epsilon_0 > 0$ such that
\begin{align*}
&\liminf_{n \to + \infty}
\mathbb{E} \left[ \langle \bm{\theta}_n - \hat{\bm{\theta}}, \nabla_{\bm{\theta}}
L_G (\bm{\theta}_n, \bm{w}_n) \rangle \right]\\
&>
\frac{\alpha^{G}}{2\tilde{\beta_1^G} (1-{\gamma^G})^2 h_{0,*}^G} \left( \frac{\sigma_G^2}{b} + M_G^2 \right)
+ \sqrt{\Theta \mathrm{Dist}(\hat{\bm{\theta}})
\left( \frac{\sigma_G^2}{b} + M_G^2 \right)}
\frac{\beta_1^G}{\tilde{\beta_1^G}} + \epsilon_0.
\end{align*}
Then, there exists $n_0 \in \mathbb{N}$ such that, for all $n \geq n_0$,
\begin{align*}
&
\mathbb{E} \left[ \langle \bm{\theta}_n - \hat{\bm{\theta}}, \nabla_{\bm{\theta}}
L_G (\bm{\theta}_n, \bm{w}_n) \rangle \right]\\
&>
\frac{\alpha^{G}}{2\tilde{\beta_1^G} (1-{\gamma^G})^2 h_{0,*}^G} \left( \frac{\sigma_G^2}{b} + M_G^2 \right)
+ \sqrt{\Theta \mathrm{Dist}(\hat{\bm{\theta}})
\left( \frac{\sigma_G^2}{b} + M_G^2 \right)}
\frac{\beta_1^G}{\tilde{\beta_1^G}} + \frac{\epsilon_0}{2}.
\end{align*}
Meanwhile, the conditions $\gamma^G \in [0,1)$, (C3), and (A1)--(A2) guarantee that there exists $n_1 \in \mathbb{N}$ such that, for all $n \geq n_1$,
\begin{align*}
\gamma^{G^{n+1}} X_{n+1}^G(\hat{\bm{\theta}})
+
\mathrm{Dist}(\hat{\bm{\theta}})
\mathbb{E}\left[ 
\sum_{i\in [\Theta]} (h_{n+1,i}^G - h_{n,i}^G)
\right] \leq \frac{\alpha^G \tilde{\beta_1^G} \epsilon_0}{2}.
\end{align*}
Accordingly, from (\ref{ineq:0}), for all $n \geq n_2 := \max\{n_0,n_1\}$,
\begin{align*}
X_{n+1}^G(\hat{\bm{\theta}})
&< 
X_n^G(\hat{\bm{\theta}})
+
\frac{\alpha^G \tilde{\beta_1^G} \epsilon_0}{2}
+ \frac{\alpha^{G^2}}{(1-{\gamma^G})^2 h_{0,*}^G} \left( \frac{\sigma_G^2}{b} + M_G^2 \right)\\
&\quad + 2 \alpha^{G} 
\beta_1^G 
\sqrt{\Theta \mathrm{Dist}(\hat{\bm{\theta}})
\left( \frac{\sigma_G^2}{b} + M_G^2 \right)}
- 
2 \alpha^{G} \tilde{\beta_1^G}
\Bigg\{
\frac{\alpha^{G}}{2\tilde{\beta_1^G} (1-{\gamma^G})^2 h_{0,*}^G} \left( \frac{\sigma_G^2}{b} + M_G^2 \right)\\
&\quad 
+ \sqrt{\Theta \mathrm{Dist}(\hat{\bm{\theta}})
\left( \frac{\sigma_G^2}{b} + M_G^2 \right)}
\frac{\beta_1^G}{\tilde{\beta_1^G}} + \frac{\epsilon_0}{2}
\Bigg\}\\
&= 
X_n^G(\hat{\bm{\theta}}) - \frac{\alpha^G \tilde{\beta_1^G} \epsilon_0}{2}\\
&< 
X_{n_2}^G(\hat{\bm{\theta}}) - \frac{\alpha^G \tilde{\beta_1^G} \epsilon_0}{2}
(n+1 - n_2).
\end{align*}
Note that the right-hand side of the above inequality approaches minus infinity as $n$ approaches positive infinity, producing a contradiction. Therefore, (\ref{liminf_1}) holds, which implies that, for all $\bm{\theta} \in \mathbb{R}^\Theta$, 
\begin{align}\label{liminf_1_1}
\begin{split}
&\liminf_{n \to + \infty}
\mathbb{E} \left[ \langle \bm{\theta}_n - \bm{\theta}, \nabla_{\bm{\theta}}
L_G (\bm{\theta}_n, \bm{w}_n) \rangle \right]\\
&\quad \leq
\frac{\alpha^{G}}{2\tilde{\beta_1^G} (1-{\gamma^G})^2 h_{0,*}^G} \left( \frac{\sigma_G^2}{b} + M_G^2 \right)
+ \sqrt{\Theta \mathrm{Dist}(\bm{\theta})
\left( \frac{\sigma_G^2}{b} + M_G^2 \right)}
\frac{\beta_1^G}{\tilde{\beta_1^G}}.
\end{split}
\end{align}
A discussion similar to the one showing (\ref{liminf_1_1}), together with Lemma \ref{lem:3_2}, leads to the finding that, for all $\bm{w} \in \mathbb{R}^W$, 
\begin{align}\label{liminf_2}
\begin{split}
&\liminf_{n \to + \infty}
\mathbb{E} \left[ \langle \bm{w}_n - \bm{w}, \nabla_{\bm{w}}
L_D (\bm{\theta}_n, \bm{w}_n) \rangle \right]\\
&\quad \leq
\frac{\alpha^{D}}{2\tilde{\beta_1^D} (1-{\gamma^D})^2 h_{0,*}^D} \left( \frac{\sigma_D^2}{b} + M_D^2 \right)
+ \sqrt{W \mathrm{Dist}(\bm{w})
\left( \frac{\sigma_D^2}{b} + M_D^2 \right)}
\frac{\beta_1^D}{\tilde{\beta_1^D}}.
\end{split}
\end{align}
Let $\bm{\theta} \in \mathbb{R}^\Theta$. From (\ref{liminf_1_1}), there exists a subsequence $((\bm{\theta}_{n_i}, \bm{w}_{n_i}))_{i\in\mathbb{N}}$ of $((\bm{\theta}_n, \bm{w}_n))_{n\in\mathbb{N}}$ such that 
\begin{align*}
&\lim_{i \to + \infty}
\mathbb{E} \left[ \langle \bm{\theta}_{n_i} - \bm{\theta}, \nabla_{\bm{\theta}}
L_G (\bm{\theta}_{n_i}, \bm{w}_{n_i}) \rangle \right]\\
&\quad \leq
\frac{\alpha^{G}}{2\tilde{\beta_1^G} (1-{\gamma^G})^2 h_{0,*}^G} \left( \frac{\sigma_G^2}{b} + M_G^2 \right)
+ \sqrt{\Theta \mathrm{Dist}(\bm{\theta})
\left( \frac{\sigma_G^2}{b} + M_G^2 \right)}
\frac{\beta_1^G}{\tilde{\beta_1^G}}.
\end{align*}
Conditions (S1) and (C3) guarantee that there exists $((\bm{\theta}_{n_{i_j}}, \bm{w}_{n_{i_j}}))_{j\in\mathbb{N}}$ of $((\bm{\theta}_{n_i}, \bm{w}_{n_i}))_{i\in\mathbb{N}}$ such that $((\bm{\theta}_{n_{i_j}}, \bm{w}_{n_{i_j}}))_{j\in\mathbb{N}}$ converges almost surely to $(\bm{\theta}^*, \bm{w}^*) \in \mathbb{R}^\Theta \times \mathbb{R}^W$. Therefore, for all $\bm{\theta} \in \mathbb{R}^\Theta$,
\begin{align*}
&\mathbb{E} \left[ \langle \bm{\theta}^* - \bm{\theta}, \nabla_{\bm{\theta}}
L_G (\bm{\theta}^*, \bm{w}^*) \rangle \right]\\
&\quad \leq
\frac{\alpha^{G}}{2\tilde{\beta_1^G} (1-{\gamma^G})^2 h_{0,*}^G} \left( \frac{\sigma_G^2}{b} + M_G^2 \right)
+ \sqrt{\Theta \mathrm{Dist}(\bm{\theta})
\left( \frac{\sigma_G^2}{b} + M_G^2 \right)}
\frac{\beta_1^G}{\tilde{\beta_1^G}}.
\end{align*}
Hence, letting $\bm{\theta} = \tilde{\bm{\theta}} := \bm{\theta}^* - \nabla_{\bm{\theta}} L_G (\bm{\theta}^*, \bm{w}^*)$ implies that
\begin{align}\label{eq_1}
\begin{split}
&\mathbb{E} \left[ \| \nabla_{\bm{\theta}}
L_G (\bm{\theta}^*, \bm{w}^*) \|^2 \right]\\
&\quad \leq
\frac{\alpha^{G}}{2\tilde{\beta_1^G} (1-{\gamma^G})^2 h_{0,*}^G} \left( \frac{\sigma_G^2}{b} + M_G^2 \right)
+ \sqrt{\Theta \mathrm{Dist}(\tilde{\bm{\theta}})
\left( \frac{\sigma_G^2}{b} + M_G^2 \right)}
\frac{\beta_1^G}{\tilde{\beta_1^G}}.
\end{split}
\end{align}
A discussion similar to the one showing (\ref{eq_1}), together with (\ref{liminf_2}), implies that there exists $(\bm{\theta}_*, \bm{w}_*) \in \mathbb{R}^\Theta \times \mathbb{R}^W$ such that
\begin{align*}
&\mathbb{E} \left[ \| \nabla_{\bm{w}}
L_D (\bm{\theta}_*, \bm{w}_*) \|^2 \right]\\
&\quad \leq
\frac{\alpha^{D}}{2\tilde{\beta_1^D} (1-{\gamma^D})^2 h_{0,*}^D} \left( \frac{\sigma_D^2}{b} + M_D^2 \right)
+ \sqrt{W \mathrm{Dist}(\tilde{\bm{w}})
\left( \frac{\sigma_D^2}{b} + M_D^2 \right)}
\frac{\beta_1^D}{\tilde{\beta_1^D}},
\end{align*}
where $\tilde{\bm{w}} = \bm{w}_* - \nabla_{\bm{w}} L_D (\bm{\theta}_*, \bm{w}_*)$. This completes the proof. 
\end{proof}

\subsection{Proof of Theorem \ref{thm:1}(ii)}
\label{a6}
Lemmas \ref{lem:1_1} and \ref{lem:2_1} lead to the following lemma:

\begin{lem}\label{lem:4_1}
Suppose that (S1)--(S3), (A1)--(A2), and (C2)--(C3) hold and consider Algorithm \ref{algo:1}, where $(\alpha_n^G)_{n\in\mathbb{N}}$ is monotone decreasing. Then, for all $\bm{\theta} \in \mathbb{R}^\Theta$ and all $N \geq 1$,
\begin{align*}
&\frac{1}{N} \sum_{n\in [N]} \mathbb{E} \left[ \langle \bm{\theta}_n - \bm{\theta}, \nabla_{\bm{\theta}}
L_G (\bm{\theta}_n, \bm{w}_n) \rangle \right]\\
&\quad \leq
\frac{\Theta \mathrm{Dist}(\bm{\theta}) H^G}{2 \alpha_N^G \tilde{\beta_1^G} N}
+
\frac{1}{N}\sum_{n\in [N]}
\frac{\alpha_n^G}{2 \tilde{\beta_1^G}(1-{\gamma}^G)^2 h_{0,*}^G} 
\left( \frac{\sigma_G^2}{b} + M_G^2 \right)
+
\frac{\beta_1^G}{\tilde{\beta_1^G}}
\sqrt{\Theta \mathrm{Dist}(\bm{\theta})
\left( \frac{\sigma_G^2}{b} + M_G^2 \right)},
\end{align*}
where $H^G := \max_{i\in [\Theta]} H_i^G$.
\end{lem}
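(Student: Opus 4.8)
The plan is to turn Lemma~\ref{lem:3_1} into a per-step upper bound on $\mathbb{E}[\langle \bm{\theta}_n - \bm{\theta}, \nabla_{\bm{\theta}} L_G(\bm{\theta}_n,\bm{w}_n)\rangle]$ and then average over $n\in[N]$, the only delicate point being the control of a telescoping-type sum. First I would take the inequality of Lemma~\ref{lem:3_1}, rewrite its inner-product term via $\langle \bm{\theta}-\bm{\theta}_n,\cdot\rangle = -\langle\bm{\theta}_n-\bm{\theta},\cdot\rangle$, move that term to the left-hand side, and divide through by $2\alpha_n^G\tilde{\beta_1^G}/\tilde{\gamma}_n^G$. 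The crucial bookkeeping step is to fold the two ``state'' contributions $X_n^G(\bm{\theta})-X_{n+1}^G(\bm{\theta})$ and $\mathrm{Dist}(\bm{\theta})\,\mathbb{E}[\sum_{i\in[\Theta]}(h_{n+1,i}^G-h_{n,i}^G)]$ into a single modified potential $Z_n^G(\bm{\theta}) := X_n^G(\bm{\theta}) - \mathrm{Dist}(\bm{\theta})\,\mathbb{E}[\sum_{i\in[\Theta]} h_{n,i}^G]$, so that, after using $\tilde{\gamma}_n^G = 1-\gamma^{G^{n+1}}\le 1$ on the variance term, one gets for every $n$
\begin{align*}
\mathbb{E}[\langle \bm{\theta}_n - \bm{\theta}, \nabla_{\bm{\theta}} L_G(\bm{\theta}_n,\bm{w}_n)\rangle]
&\le \frac{\tilde{\gamma}_n^G}{2\alpha_n^G\tilde{\beta_1^G}}\big(Z_n^G(\bm{\theta})-Z_{n+1}^G(\bm{\theta})\big) \\
&\quad + \frac{\alpha_n^G}{2\tilde{\beta_1^G}(1-\gamma^G)^2 h_{0,*}^G}\Big(\tfrac{\sigma_G^2}{b}+M_G^2\Big)
+ \frac{\beta_1^G}{\tilde{\beta_1^G}}\sqrt{\Theta\,\mathrm{Dist}(\bm{\theta})\Big(\tfrac{\sigma_G^2}{b}+M_G^2\Big)}.
\end{align*}
Averaging over $n\in[N]$, the last two terms are already exactly the last two terms of the claimed bound.

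What remains is to show $\frac{1}{N}\sum_{n\in[N]} c_n\big(Z_n^G(\bm{\theta})-Z_{n+1}^G(\bm{\theta})\big) \le \Theta\,\mathrm{Dist}(\bm{\theta})H^G/(2\alpha_N^G\tilde{\beta_1^G}N)$ with $c_n := \tilde{\gamma}_n^G/(2\alpha_n^G\tilde{\beta_1^G})$. I would do this by summation by parts, $\sum_{n\in[N]} c_n(Z_n^G-Z_{n+1}^G) = c_1 Z_1^G + \sum_{n=2}^N (c_n - c_{n-1})Z_n^G - c_N Z_{N+1}^G$, and then collapse it using two facts: (a) $Z_n^G(\bm{\theta})\le 0$ for all $n$, since (C3) gives $(\theta_{n,i}-\theta_i)^2\le\mathrm{Dist}(\bm{\theta})$ and hence $X_n^G(\bm{\theta}) = \mathbb{E}[\sum_i h_{n,i}^G(\theta_{n,i}-\theta_i)^2] \le \mathrm{Dist}(\bm{\theta})\,\mathbb{E}[\sum_i h_{n,i}^G]$; and (b) $c_n$ is nondecreasing in $n$, because $\tilde{\gamma}_n^G$ is nondecreasing (as $\gamma^G\in[0,1)$) and $1/\alpha_n^G$ is nondecreasing (as $(\alpha_n^G)$ is monotone decreasing). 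Then $c_1 Z_1^G\le 0$ and each $(c_n-c_{n-1})Z_n^G\le 0$, leaving $-c_N Z_{N+1}^G = c_N\big(\mathrm{Dist}(\bm{\theta})\,\mathbb{E}[\sum_i h_{N+1,i}^G]-X_{N+1}^G\big) \le c_N\,\mathrm{Dist}(\bm{\theta})\,\mathbb{E}[\sum_i h_{N+1,i}^G] \le c_N\,\Theta\,\mathrm{Dist}(\bm{\theta})H^G$ by (A2), while $c_N\le 1/(2\alpha_N^G\tilde{\beta_1^G})$ by $\tilde{\gamma}_N^G\le 1$. Dividing by $N$ produces the first term of the lemma, and combining the three pieces finishes the proof.

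The main obstacle I expect is getting the constant in the $1/N$ term right: the naive route — telescoping $X_n^G$ and separately bounding the monotone sum $\sum_n(h_{n+1,i}^G-h_{n,i}^G)$ — yields \emph{two} copies of $\Theta\,\mathrm{Dist}(\bm{\theta})H^G/(2\alpha_N^G\tilde{\beta_1^G}N)$, which does not match the statement. The fix, and the only genuine idea in the argument, is to merge them into the single potential $Z_n^G$, whose sign is pinned down by (C3), so that the summation-by-parts boundary term costs just one application of (A2). Everything else (sign tracking, the $\tilde{\gamma}_n^G\le 1$ simplifications, monotonicity of $c_n$) is routine; the discriminator counterpart follows verbatim under the substitution $G\mapsto D$, $\mathcal{S}_n\mapsto\mathcal{R}_n$, $\bm{\theta}\mapsto\bm{w}$, $\Theta\mapsto W$, using Lemmas~\ref{lem:1_2}, \ref{lem:2_2}, and \ref{lem:3_2}.
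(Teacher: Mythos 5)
Your argument is correct and is essentially the paper's own proof in different packaging: the paper performs the same summation by parts with the nondecreasing weights $\delta_n^G := \tilde{\gamma}_n^G/(2\alpha_n^G \tilde{\beta_1^G})$ applied directly to $\mathbb{E}[\|\bm{\theta}_n-\bm{\theta}\|_{\mathsf{H}_n^G}^2]$ (starting from Lemma \ref{lem:1_1} and invoking (A1), (C3), and (A2) exactly where you do), which automatically keeps the $h_{n,i}^G$-increments inside the telescoped quantity and yields the single constant $\Theta\,\mathrm{Dist}(\bm{\theta})H^G/(2\alpha_N^G\tilde{\beta_1^G}N)$. Your shifted potential $Z_n^G$ is an equivalent way of achieving that same fold, so the two arguments coincide step for step.
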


\begin{proof}
Lemma \ref{lem:1_1} implies that, for all $\bm{\theta}\in \mathbb{R}^\Theta$ and all $n\in\mathbb{N}$,
\begin{align*}
&\mathbb{E} \left[ \langle \bm{\theta}_n - \bm{\theta}, \nabla_{\bm{\theta}}
L_G (\bm{\theta}_n, \bm{w}_n) \rangle \right]\\
&= 
\frac{\tilde{\gamma}_n^G}{2 \alpha_n^{G} \tilde{\beta_1^G}}
\left\{
\mathbb{E}\left[ \| \bm{\theta}_{n} - \bm{\theta} \|_{\mathsf{H}_n^G}^2 \right]
-
\mathbb{E}\left[ \| \bm{\theta}_{n+1} - \bm{\theta} \|_{\mathsf{H}_n^G}^2 \right]
\right\}
+ \frac{\alpha_n^{G} \tilde{\gamma}_n^G}{2 \tilde{\beta_1^G}} \mathbb{E} \left[ \|\bm{d}_n^G \|_{\mathsf{H}_n^G}^2 \right]
+ 
\frac{\beta_1^G}{\tilde{\beta_1^G}} 
\mathbb{E} \left[ \langle \bm{\theta} - \bm{\theta}_n, \bm{m}_{n-1}^G \rangle \right], 
\end{align*}
which implies that, for all $\bm{\theta}\in \mathbb{R}^\Theta$ and all $N \geq 1$,
\begin{align*}
&\frac{1}{N} \sum_{n \in [N]} \mathbb{E} \left[ \langle \bm{\theta}_n - \bm{\theta}, \nabla_{\bm{\theta}}
L_G (\bm{\theta}_n, \bm{w}_n) \rangle \right]\\
&= 
\frac{1}{N} \underbrace{\sum_{n \in [N]}
\frac{\tilde{\gamma}_n^G}{2 \alpha_n^{G} \tilde{\beta_1^G}}
\left\{
\mathbb{E}\left[ \| \bm{\theta}_{n} - \bm{\theta} \|_{\mathsf{H}_n^G}^2 \right]
-
\mathbb{E}\left[ \| \bm{\theta}_{n+1} - \bm{\theta} \|_{\mathsf{H}_n^G}^2 \right]
\right\}}_{\Theta_N}
+ 
\frac{1}{N} \underbrace{\sum_{n \in [N]} \frac{\alpha_n^{G} \tilde{\gamma}_n^G}{2 \tilde{\beta_1^G}} \mathbb{E} \left[ \|\bm{d}_n^G \|_{\mathsf{H}_n^G}^2 \right]}_{A_N}\\
&\quad + 
\frac{1}{N} \underbrace{\sum_{n \in [N]} \frac{\beta_1^G}{\tilde{\beta_1^G}} 
\mathbb{E} \left[ \langle \bm{\theta} - \bm{\theta}_n, \bm{m}_{n-1}^G \rangle \right]}_{B_N}. 
\end{align*}
Let $\delta_n^G := \tilde{\gamma}_n^G/(2 \alpha_n^G \tilde{\beta_1^G})$ for all $n\in\mathbb{N}$. Then, we have that 
\begin{align*}
\Theta_N 
&:= 
\delta_1^G \mathbb{E}\left[ \| \bm{\theta}_{1} - \bm{\theta} \|_{\mathsf{H}_1^G}^2 \right]
+
\underbrace{\sum_{n=2}^N \left\{ 
\delta_n^G \mathbb{E}\left[ \| \bm{\theta}_{n} - \bm{\theta} \|_{\mathsf{H}_n^G}^2 \right] 
- \delta_{n-1}^G \mathbb{E}\left[ \| \bm{\theta}_{n} - \bm{\theta} \|_{\mathsf{H}_{n-1}^G}^2 \right]
\right\}}_{\tilde{\Theta}_N}\\
&\quad - \delta_N^G \mathbb{E}\left[ \| \bm{\theta}_{N+1} - \bm{\theta} \|_{\mathsf{H}_{N}^G}^2 \right].
\end{align*}
Since $\overline{\mathsf{H}}_n^G \in \mathbb{S}_{++}^\Theta$ exists such that $\mathsf{H}_n^G = \overline{\mathsf{H}}_n^{G^2}$, we have $\|\bm{x}\|_{\mathsf{H}_n^G}^2 = \| \overline{\mathsf{H}}_n^G \bm{x} \|^2$ for all $\bm{x}\in\mathbb{R}^\Theta$. Accordingly, we have 
\begin{align*}
\tilde{\Theta}_N 
=
\mathbb{E} \left[ 
\sum_{n=2}^N 
\left\{
\delta_n^G \left\| \overline{\mathsf{H}}_n^G (\bm{\theta}_{n} - \bm{\theta}) \right\|^2
-
\delta_{n-1}^G \left\| \overline{\mathsf{H}}_{n-1}^G (\bm{\theta}_{n} - \bm{\theta}) \right\|^2
\right\}
\right].
\end{align*}
Hence, for all $N\geq 2$,
\begin{align}\label{DELTA}
\tilde{\Theta}_N 
= 
\mathbb{E} \left[ 
\sum_{n=2}^N
\sum_{i=1}^\Theta 
\left(
\delta_n^G h_{n,i}^G
-
\delta_{n-1}^G h_{n-1,i}^G
\right)
(\theta_{n,i} - \theta_i)^2
\right].
\end{align}
Since $(\alpha_n^G)_{n\in\mathbb{N}}$ is monotone decreasing, we have that $\delta_{n+1}^G \geq \delta_{n}^G$ ($n\in \mathbb{N}$). Hence, from (A1), we have that, for all $n \geq 1$ and all $i\in [\Theta]$,
\begin{align*}
\delta_n^G h_{n,i}^G - \delta_{n-1}^G h_{n-1,i}^G \geq 0.
\end{align*} 
Moreover, from (C3), $\max_{i \in [\Theta]} \sup_{n\in\mathbb{N}} (\theta_{n,i} - \theta_i)^2 \leq \mathrm{Dist}(\bm{\theta})$. Accordingly, for all $N \geq 2$,
\begin{align*}
\tilde{\Theta}_N
&\leq
\mathrm{Dist}(\bm{\theta})
\mathbb{E} \left[ 
\sum_{n=2}^N
\sum_{i=1}^\Theta 
\left(
\delta_n^G h_{n,i}^G
-
\delta_{n-1}^G h_{n-1,i}^G
\right)
\right]\\
&= 
\mathrm{Dist}(\bm{\theta})
\mathbb{E} \left[ 
\sum_{i=1}^\Theta
\left(
\delta_N^G h_{N,i}^G
-
\delta_1^G h_{1,i}^G
\right)
\right].
\end{align*}
Therefore, $\delta_1^G \mathbb{E} [\| \bm{\theta}_{1} - \bm{\theta}\|_{\mathsf{H}_1^G}^2] \leq \mathrm{Dist}(\bm{\theta}) \delta_1^G \mathbb{E} [ \sum_{i=1}^\Theta h_{1,i}^G]$, and (A2) imply that, for all $N\geq 1$,
\begin{align*}
\Theta_N
&\leq
\delta_1^G \mathrm{Dist}(\bm{\theta}) \mathbb{E} \left[ 
\sum_{i=1}^\Theta h_{1,i}^G \right]
+
\mathrm{Dist}(\bm{\theta})
\mathbb{E} \left[
\sum_{i=1}^\Theta 
\left(
\delta_N^G h_{N,i}^G
-
\delta_1^G h_{1,i}^G
\right)
\right]\\
&=
\delta_N^G \mathrm{Dist}(\bm{\theta})
\mathbb{E} \left[
\sum_{i=1}^\Theta 
h_{N,i}^G
\right]\\
&\leq 
\delta_N^G \mathrm{Dist}(\bm{\theta}) 
\sum_{i=1}^\Theta 
H_{i}^G\\
&\leq 
\delta_N^G \Theta \mathrm{Dist}(\bm{\theta}) H^G,
\end{align*}
where $H^G = \max_{i\in [\Theta]} H_i^G$. From $\delta_n^G := \tilde{\gamma}_n^G/(2 \alpha_n^G \tilde{\beta_1^G})$ and $\tilde{\gamma}_n^G = 1 - {\gamma}^{G^{n+1}} \leq 1$, we have 
\begin{align}\label{L} 
\Theta_N 
\leq 
\frac{\Theta \mathrm{Dist}(\bm{\theta}) H^G}{2 \alpha_N^G \tilde{\beta_1^G}}.
\end{align}
Lemma \ref{lem:2_1} implies that, for all $N\geq 1$,
\begin{align}\label{D}
A_N := 
\sum_{n \in [N]} \frac{\alpha_n^{G} \tilde{\gamma}_n^G}{2 \tilde{\beta_1^G}} \mathbb{E} \left[ \|\bm{d}_n^G \|_{\mathsf{H}_n^G}^2 \right]
\leq 
\sum_{n\in [N]}
\frac{\alpha_n^G}{2 \tilde{\beta_1^G}(1-{\gamma}^G)^2 h_{0,*}^G} 
\left( \frac{\sigma_G^2}{b} + M_G^2 \right).
\end{align}
From (\ref{CS}), we have 
\begin{align}\label{B}
\begin{split}
B_N 
&:= 
\sum_{n\in[N]} \frac{\beta_1^G}{\tilde{\beta_1^G}} \mathbb{E} \left[ \langle \bm{\theta} - \bm{\theta}_n, \bm{m}_{n-1}^G \rangle \right]
\leq
\frac{\beta_1^G N}{\tilde{\beta_1^G}}
\sqrt{\Theta \mathrm{Dist}(\bm{\theta})
\left( \frac{\sigma_G^2}{b} + M_G^2 \right)}.
\end{split}
\end{align}
Therefore, (\ref{L}), (\ref{D}), and (\ref{B}) lead to the assertion in Lemma \ref{lem:4_1}. This completes the proof.
\end{proof}

A discussion similar to the one proving Lemma \ref{lem:4_1}, together with Lemmas \ref{lem:1_2} and \ref{lem:2_2}, leads to the following lemma:

\begin{lem}\label{lem:4_2}
Suppose that (S1)--(S3), (A1)--(A2), and (C2)--(C3) hold and consider Algorithm \ref{algo:1}, where $(\alpha_n^D)_{n\in\mathbb{N}}$ is monotone decreasing. Then, for all $\bm{w} \in \mathbb{R}^W$ and all $N \geq 1$,
\begin{align*}
&\frac{1}{N} \sum_{n\in [N]} \mathbb{E} \left[ \langle \bm{w}_n - \bm{w}, \nabla_{\bm{w}}
L_D (\bm{\theta}_n, \bm{w}_n) \rangle \right]\\
&\quad \leq
\frac{W \mathrm{Dist}(\bm{w}) H^D}{2 \alpha_N^D \tilde{\beta_1^D} N}
+
\frac{1}{N}\sum_{n\in [N]}
\frac{\alpha_n^D}{2 \tilde{\beta_1^D}(1-{\gamma}^D)^2 h_{0,*}^D} 
\left( \frac{\sigma_D^2}{b} + M_D^2 \right)
+
\frac{\beta_1^D}{\tilde{\beta_1^D}}
\sqrt{W \mathrm{Dist}(\bm{w})
\left( \frac{\sigma_D^2}{b} + M_D^2 \right)},
\end{align*}
where $H^D := \max_{j\in [W]} H_j^D$.
\end{lem}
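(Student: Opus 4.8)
The plan is to mirror the proof of Lemma~\ref{lem:4_1} line for line, replacing every generator object by its discriminator counterpart ($\bm{\theta}_n$ by $\bm{w}_n$, $\mathsf{H}_n^G$ by $\mathsf{H}_n^D$, $\bm{m}_n^G$ by $\bm{m}_n^D$, $\bm{d}_n^G$ by $\bm{d}_n^D$, the dimension $\Theta$ by $W$, and the constants $\beta_1^G,\gamma^G,\sigma_G,M_G,h_{0,*}^G,H^G$ by $\beta_1^D,\gamma^D,\sigma_D,M_D,h_{0,*}^D,H^D$), and invoking Lemma~\ref{lem:1_2} and Lemma~\ref{lem:2_2} wherever the proof of Lemma~\ref{lem:4_1} uses Lemma~\ref{lem:1_1} and Lemma~\ref{lem:2_1}. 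First I would take the identity in Lemma~\ref{lem:1_2} and solve it for the quantity of interest, obtaining, for each $n\in\mathbb{N}$ and each $\bm{w}\in\mathbb{R}^W$,
\begin{align*}
\mathbb{E}\left[ \langle \bm{w}_n - \bm{w}, \nabla_{\bm{w}} L_D (\bm{\theta}_n, \bm{w}_n) \rangle \right]
&= \frac{\tilde{\gamma}_n^D}{2 \alpha_n^D \tilde{\beta_1^D}}
\left\{ \mathbb{E}\left[ \| \bm{w}_n - \bm{w} \|_{\mathsf{H}_n^D}^2 \right] - \mathbb{E}\left[ \| \bm{w}_{n+1} - \bm{w} \|_{\mathsf{H}_n^D}^2 \right] \right\} \\
&\quad + \frac{\alpha_n^D \tilde{\gamma}_n^D}{2 \tilde{\beta_1^D}} \mathbb{E}\left[ \|\bm{d}_n^D\|_{\mathsf{H}_n^D}^2 \right]
+ \frac{\beta_1^D}{\tilde{\beta_1^D}} \mathbb{E}\left[ \langle \bm{w} - \bm{w}_n, \bm{m}_{n-1}^D \rangle \right],
\end{align*}
where $\tilde{\gamma}_n^D := 1 - \gamma^{D^{n+1}}$. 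Averaging over $n\in[N]$ splits the bound into a telescoping-type sum $\Theta_N$, a step-size-weighted sum $A_N$ of $\mathbb{E}[\|\bm{d}_n^D\|_{\mathsf{H}_n^D}^2]$, and a momentum sum $B_N$, exactly as in the proof of Lemma~\ref{lem:4_1}.

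Next I would dispose of the two easy terms. For $A_N$, the second estimate in Lemma~\ref{lem:2_2} together with $\tilde{\gamma}_n^D\le 1$ gives $A_N \le \sum_{n\in[N]} \frac{\alpha_n^D}{2\tilde{\beta_1^D}(1-\gamma^D)^2 h_{0,*}^D}\bigl(\sigma_D^2/b + M_D^2\bigr)$. For $B_N$, Jensen's inequality applied to the first estimate in Lemma~\ref{lem:2_2} yields $\mathbb{E}[\|\bm{m}_{n-1}^D\|]\le\sqrt{\sigma_D^2/b+M_D^2}$, while (C3) gives $\|\bm{w}-\bm{w}_n\|^2\le W\,\mathrm{Dist}(\bm{w})$; the Cauchy--Schwarz estimate (the discriminator analogue of the one used in the proof of Lemma~\ref{lem:3_1}) then produces $B_N \le \frac{\beta_1^D N}{\tilde{\beta_1^D}}\sqrt{W\,\mathrm{Dist}(\bm{w})\bigl(\sigma_D^2/b + M_D^2\bigr)}$.

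The delicate step — and the main obstacle — is the term $\Theta_N$, which is not a genuine telescope because both the coefficient $\delta_n^D := \tilde{\gamma}_n^D/(2\alpha_n^D\tilde{\beta_1^D})$ and the matrix $\mathsf{H}_n^D$ vary with $n$. Writing $\mathsf{H}_n^D = \overline{\mathsf{H}}_n^{D^2}$ with $\overline{\mathsf{H}}_n^D\in\mathbb{S}_{++}^W$ and applying Abel summation, $\Theta_N$ reduces to $\delta_1^D\,\mathbb{E}[\|\bm{w}_1-\bm{w}\|_{\mathsf{H}_1^D}^2] + \mathbb{E}\bigl[\sum_{n=2}^N\sum_{j\in[W]}(\delta_n^D h_{n,j}^D - \delta_{n-1}^D h_{n-1,j}^D)(w_{n,j}-w_j)^2\bigr] - \delta_N^D\,\mathbb{E}[\|\bm{w}_{N+1}-\bm{w}\|_{\mathsf{H}_N^D}^2]$. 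Here one uses that $(\alpha_n^D)_{n\in\mathbb{N}}$ monotone decreasing forces $\delta_{n+1}^D\ge\delta_n^D$, which together with (A1) ($h_{n+1,j}^D\ge h_{n,j}^D$) makes every coefficient $\delta_n^D h_{n,j}^D - \delta_{n-1}^D h_{n-1,j}^D$ nonnegative; then (C3) bounds $(w_{n,j}-w_j)^2\le\mathrm{Dist}(\bm{w})$, the inner double sum telescopes so the $\delta_1^D h_{1,j}^D$ contributions cancel, and finally (A2) ($\sup_n\mathbb{E}[h_{n,j}^D]\le H_j^D$) with $\tilde{\gamma}_N^D\le 1$ gives $\Theta_N \le \delta_N^D\,W\,\mathrm{Dist}(\bm{w})H^D \le W\,\mathrm{Dist}(\bm{w})H^D/(2\alpha_N^D\tilde{\beta_1^D})$. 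Dividing the three resulting bounds $\Theta_N,A_N,B_N$ by $N$ and adding yields the stated inequality; the whole argument is the verbatim ``$\bm{w}$-side'' of the proof of Lemma~\ref{lem:4_1}.
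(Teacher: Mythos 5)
Your proposal is correct and is exactly the argument the paper intends: the paper proves Lemma \ref{lem:4_2} only by declaring it "a discussion similar to the one proving Lemma \ref{lem:4_1}, together with Lemmas \ref{lem:1_2} and \ref{lem:2_2}," and your step-by-step transcription (the rearranged identity, the $\Theta_N$/$A_N$/$B_N$ split, the Abel-summation bound using $\delta_{n+1}^D\ge\delta_n^D$, (A1), (C3), and (A2)) is precisely that discussion with every generator object replaced by its discriminator counterpart.
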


\begin{proof}
[Proof of Theorem \ref{thm:1}(ii)] Let $\alpha_n^G := \alpha^G$ and $\alpha_n^D := \alpha^D$ for all $n\in\mathbb{N}$. Lemmas \ref{lem:4_1} and \ref{lem:4_2} thus guarantee that, for all $\bm{\theta} \in \mathbb{R}^\Theta$ and all $N \geq 1$,
\begin{align*}
&\frac{1}{N} \sum_{n\in [N]} \mathbb{E} \left[ \langle \bm{\theta}_n - \bm{\theta}, \nabla_{\bm{\theta}}
L_G (\bm{\theta}_n, \bm{w}_n) \rangle \right]\\
&\quad \leq
\frac{\Theta \mathrm{Dist}(\bm{\theta}) H^G}{2 \alpha^G \tilde{\beta_1^G} N}
+
\frac{\alpha^G}{2 \tilde{\beta_1^G}(1-{\gamma}^G)^2 h_{0,*}^G} 
\left( \frac{\sigma_G^2}{b} + M_G^2 \right)
+
\frac{\beta_1^G}{\tilde{\beta_1^G}}
\sqrt{\Theta \mathrm{Dist}(\bm{\theta})
\left( \frac{\sigma_G^2}{b} + M_G^2 \right)},\\
&\frac{1}{N} \sum_{n\in [N]} \mathbb{E} \left[ \langle \bm{w}_n - \bm{w}, \nabla_{\bm{w}}
L_D (\bm{\theta}_n, \bm{w}_n) \rangle \right]\\
&\quad \leq
\frac{W \mathrm{Dist}(\bm{w}) H^D}{2 \alpha^D \tilde{\beta_1^D} N}
+
\frac{\alpha^D}{2 \tilde{\beta_1^D}(1-{\gamma}^D)^2 h_{0,*}^D} 
\left( \frac{\sigma_D^2}{b} + M_D^2 \right)
+
\frac{\beta_1^D}{\tilde{\beta_1^D}}
\sqrt{W \mathrm{Dist}(\bm{w})
\left( \frac{\sigma_D^2}{b} + M_D^2 \right)},
\end{align*}
which completes the proof.
\end{proof}

\subsection{Proof of Theorem \ref{thm:2}}
\label{a7}
\begin{proof}
[Proof of Theorem \ref{thm:2}] Condition (\ref{lower}) implies that TTUR achieves an $\epsilon$--approximation. We have that, for $b > B_G /(\epsilon_G^2 - C_G)$,
\begin{align*}
&\frac{\mathrm{d} N_G (b)}{\mathrm{d} b}
= 
\frac{- A_G B_G}{\{(\epsilon_G^2 - C_G)b - B_G\}^2} \leq 0,\\
&\frac{\mathrm{d}^2 N_G(b)}{\mathrm{d} b^2}
= 
\frac{2 A_G B_G(\epsilon_G^2 - C_G)}{\{(\epsilon_G^2 - C_G)b - B_G\}^3} \geq 0.
\end{align*}
Hence, $N_G$ is monotone decreasing and convex for $b > B_G /(\epsilon_G^2 - C_G)$. We also have that $N_D$ is monotone decreasing and convex for $b > B_D /(\epsilon_D^2 - C_D)$. This completes the proof.
\end{proof}

\subsection{Proof of Theorem \ref{thm:3}}
\label{a8}
\begin{proof}
[Proof of Theorem \ref{thm:3}] We have that, for $b > B_G/(\epsilon_G^2 - C_G)$, 
\begin{align*}
N_G(b) b = 
\frac{A_G b^2}
{(\epsilon_G^2 - C_G) b - B_G}.
\end{align*}
Hence, 
\begin{align*}
&\frac{\mathrm{d} N_G (b) b}{\mathrm{d} b}
= 
\frac{A_G b \{(\epsilon_G^2 - C_G)b - 2 B_G \}}
{\{(\epsilon_G^2 - C_G) b - B_G \}^2},\\
&\frac{\mathrm{d}^2 N_G (b) b}{\mathrm{d} b^2}
= 
\frac{2 A_G B_G^2}{\{(\epsilon_G^2 - C_G)b - B_G\}^3} \geq 0,
\end{align*}
which implies that $N_G (b) b$ is convex for $b > B_G/(\epsilon_G^2 - C_G)$ and 
\begin{align*}
\frac{\mathrm{d} N_G (b) b}{\mathrm{d} b}
\begin{cases}
< 0 &\text{ if } b < b_G^\star,\\
= 0 &\text{ if } b = b_G^\star = \frac{2B_G}{\epsilon_G^2 - C_G},\\
> 0 &\text{ if } b > b_G^\star.
\end{cases}
\end{align*}
The point $b_G^\star$ attains the minimum value $N_G(b_G^\star) b_G^\star$ of $N_G (b) b$. The discussion for $N_D$ is similar to the one for $N_G$. This completes the proof.
\end{proof}

\subsection{Proof of Proposition \ref{prop:4}}
\label{app:prop4}
Theorem \ref{thm:3} and the definitions of $B_G$, $C_G$, $\tilde{\beta_1^G}$ and $\tilde{\gamma}^G$ ensure that
\begin{align*}
b_G^\star 
= \frac{2 B_G}{\epsilon_G^2 - C_G}
> \frac{2 B_G}{\epsilon_G^2}
= \frac{2}{\epsilon_G^2} \cdot \frac{\sigma_G^2 \alpha^G}{2 \tilde{\beta_1^G} \tilde{\gamma}^{G^2} h_{0,*}^G}
= \frac{\sigma_G^2 \alpha^G}{\epsilon_G^2 (1-\beta_1^G)(1-\gamma^G)^2 h_{0,*}^G} .
\end{align*}
Moreover, (\ref{A3}) and the definition of $L_{G}(\bm{\theta}_{n}, \bm{w}_n)$ ensure that
\begin{align*}
\left\| \nabla L_{G,\mathcal{S}_{n}}(\bm{\theta}_{n}) \right\|^2 
\leq 
\frac{\sigma_G^2}{b} + \left\|\nabla L_{G}(\bm{\theta}_{n}, \bm{w}_n) \right\|^2
\approx 
\left\|\nabla L_{G}(\bm{\theta}_{n}, \bm{w}_n) \right\|^2
=\frac{1}{|S|^2} \left\| \sum_{i=1}^{|S|} \nabla L_{G}^{(i)}(\bm{\theta}_{n}, \bm{w}_n) \right\|^2, 
\end{align*}
where $\sigma_G^2/b \approx 0$ holds when $b$ is sufficiently large. We define $\sum_{i=1}^{|S|} \nabla L_{G}^{(i)}(\bm{\theta}_{n}, \bm{w}_n) := \bm{G}_{n} $ for all $n \in \mathbb{N}$. Then, we have
\begin{align*}
\frac{1}{|S|^2} \left\| \sum_{i=1}^{|S|} \nabla L_{G}^{(i)}(\bm{\theta}_{n}, \bm{w}_n) \right\|^2
=\frac{1}{|S|^2}\sum_{i=1}^{\Theta} G_{n,i}^2
\leq \frac{\Theta}{|S|^2}\max_{i \in [\Theta]} G_{n,i}^2 .
\end{align*}

\begin{proof}
[Proof of Proposition \ref{prop:4}(i)] The definition of $\bm{v}_n^G$ implies that
\begin{align*}
v_{n,i}^G 
= \beta_2^G v_{n-1,i}^G + (1- \beta_2^G) g_{n,i}^2
\leq \max_{n,i} g_{n,i}^2
=: g_{n^*,i^*}^2
\leq \sum_{i=1}^{\Theta} g_{n^*,i}^2
= \left\| \nabla L_{G,\mathcal{S}_{n^*}}(\bm{\theta}_{n^*}) \right\|^2 ,
\end{align*}
where the first inequality can be shown by induction. Therefore, for all $n \in \mathbb{N}$ and all $i \in [\Theta]$,
\begin{align*}
v_{n,i}^G 
\leq \frac{\Theta}{|S|^2} \max_{i \in [\Theta]} G_{n^*,i}^2 .
\end{align*}
From the definition of $\bar{v}_{n,i}^G$ and $\beta_2 \in [0,1)$, we have
\begin{align*}
h_{0,*}^G
:=\min_{i \in [\Theta]} h_{0,i}^G
\leq \sqrt{\bar{v}_{n,i}^G}
= \sqrt{\frac{v_{n,i}^G}{1-{\beta_2^G}^n}}
\leq \sqrt{\frac{v_{n,i}^G}{1-\beta_2^G}}.
\end{align*}
Hence,
\begin{align*}
b_G^\star \geq \frac{\sigma_G^2 \alpha^G}{\epsilon_G^2 (1-\beta_1^G)(1-\gamma^G)^2 \sqrt{\frac{\Theta}{1-\beta_2^G} \frac{1}{|S|^2} \max_{i \in [\Theta]} G_{n^*,i}^2}} .
\end{align*}
Accordingly, using $\gamma^G = \beta_1^G$ and $\max_{i \in [\Theta]} G_{n^*,i}^2 \approx \epsilon_G^2$ implies that
\begin{align*}
b_G^\star \geq \frac{\sigma_G^2 \alpha^G}{\epsilon_G^3 (1-\beta_1^G)^3 \sqrt{\frac{\Theta}{1-\beta_2^G} \frac{1}{|S|^2}}} .
\end{align*}
The discussion for $b_D^\star$ is similar to the one for $b_G^\star$. This completes the proof.
\end{proof}

\begin{proof}
[Proof of Proposition\ref{prop:4}(ii)] The definition of $\bm{s}_n^G$ implies that
\begin{align*}
s_{n,i}^G 
&= \beta_2^G v_{n-1,i}^G + (1- \beta_2^G) (g_{n,i} - m_{n,i})^2
\leq \max_{n,i} (g_{n,i} - m_{n,i})^2
=: (g_{n^*,i^*} - m_{n^*,i^*})^2 \\
&\leq \sum_{i=1}^{\Theta} (g_{n^*,i} - m_{n^*,i})^2
= \left\| \nabla L_{G,\mathcal{S}_{n^*}}(\bm{\theta}_{n^*}) - \bm{m}_{n^*}^G \right\|^2 ,
\end{align*}
where the first inequality can be shown by induction. Hence, we have
\begin{align*}
\left\| \nabla L_{G,\mathcal{S}_{n^*}}(\bm{\theta}_{n^*}) - \bm{m}_{n^*}^G \right\|^2
\leq
2 \left\| \nabla L_{G,\mathcal{S}_{n^*}}(\bm{\theta}_{n^*}) \right\|^2 + 2 \left\| \bm{m}_{n^*}^G \right\|^2
\leq
\frac{4 \sigma_G^2}{b} + 4 \left\|\nabla L_{G}(\bm{\theta}_{n^*}, \bm{w}_{n^*}) \right\|^2
\end{align*}
Therefore, for all $n \in \mathbb{N}$ and all $i \in [\Theta]$,
\begin{align*}
s_{n,i}^G 
\leq \frac{4\Theta}{|S|^2}\max_{i \in [\Theta]} G_{n^*,i}^2 .
\end{align*}
From the definition of $\hat{s}_{n,i}^G$ and $\beta_2 \in [0,1)$, we have
\begin{align*}
h_{0,*}^G
:=\min_{i \in [\Theta]} h_{0,i}^G
\leq \sqrt{\hat{s}_{n,i}^G}
= \sqrt{\frac{s_{n,i}^G}{1-{\beta_2^G}^n}}
\leq \sqrt{\frac{s_{n,i}^G}{1-\beta_2^G}}.
\end{align*}
Hence,
\begin{align*}
b_G^\star \geq \frac{\sigma_G^2 \alpha^G}{\epsilon_G^2 (1-\beta_1^G)(1-\gamma^G)^2 \sqrt{\frac{4\Theta}{1-\beta_2^G} \frac{1}{|S|^2} \max_{i \in [\Theta]} G_{n^*,i}^2}} .
\end{align*}
Accordingly, using $\gamma^G = \beta_1^G$ and $\max_{i \in [\Theta]} G_{n^*,i}^2 \approx \epsilon_G^2$ implies that
\begin{align*}
b_G^\star \geq \frac{\sigma_G^2 \alpha^G}{\epsilon_G^3 (1-\beta_1^G)^3 \sqrt{\frac{4\Theta}{1-\beta_2^G} \frac{1}{|S|^2}}} .
\end{align*}
The discussion for $b_D^\star$ is similar to the one for $b_G^\star$. This completes the proof.
\end{proof}

\begin{proof}
[Proof of Proposition \ref{prop:4}(iii)] From the definition of $\bm{v}_n^G$ and a discussion similar to the proof of Proposition \ref{prop:4}(i), we have
\begin{align*}
v_{n,i}^G
\leq
\frac{\Theta}{|S|^2} \max_{i \in [\Theta]} G_{n^*,i}^2 .
\end{align*}
From the definition of $v_{n,i}^G$, we have
\begin{align*}
h_{0,*}^G
:=\min_{i \in [\Theta]} h_{0,i}^G
\leq
\sqrt{v_{n,i}^G}.
\end{align*}
Hence,
\begin{align*}
b_G^\star \geq \frac{\sigma_G^2 \alpha^G}{\epsilon_G^2 (1-\beta_1^G)(1-\gamma^G)^2 \sqrt{\frac{\Theta}{|S|^2} \max_{i \in [\Theta]} G_{n^*,i}^2}} .
\end{align*}
Accordingly, using $\gamma^G = \beta_1^G = 0$ and $\max_{i \in [\Theta]} G_{n^*,i}^2 \approx \epsilon_G^2$ implies that
\begin{align*}
b_G^\star \geq \frac{\sigma_G^2 \alpha^G}{\epsilon_G^3 \sqrt{\frac{\Theta}{|S|^2}}} .
\end{align*}
The discussion for $b_D^\star$ is similar to the one for $b_G^\star$. This completes the proof.
\end{proof}

\subsection{Relationship between stationary point problem and variational inequality}
\label{a10}
\begin{prop}\label{prop:1}
Suppose that $f: \mathbb{R}^d \to \mathbb{R}$ is continuously differentiable and $\bm{x}^*$ is a stationary point of $f$. Then, $\nabla f(\bm{x}^*) = \bm{0}$ is equivalent to the following variational inequality: for all $\bm{x} \in \mathbb{R}^d$,
\begin{align*}
\langle \nabla f(\bm{x}^*), \bm{x}-\bm{x}^* \rangle \geq 0.
\end{align*}

\begin{proof}
[Proof of Proposition \ref{prop:1}] Suppose that $\bm{x} \in \mathbb{R}^d$ satisfies $\nabla f(\bm{x}) = \bm{0}$. Then, for all $\bm{y} \in \mathbb{R}^d$,
\begin{align*}
\langle \nabla f(\bm{x}), \bm{y}-\bm{x} \rangle 
\geq
0.
\end{align*}
Suppose that $\bm{x} \in \mathbb{R}^d$ satisfies $\langle \nabla f(\bm{x}), \bm{y}-\bm{x} \rangle \geq 0$ for all $\bm{y} \in \mathbb{R}^d$. Let $\bm{y} := \bm{x} - \nabla f(\bm{x})$. Then we have
\begin{align*}
0 \leq
\langle \nabla f(\bm{x}), \bm{y}-\bm{x} \rangle
= -\| \nabla f(\bm{x}) \|^2.
\end{align*} 
Hence,
\begin{align*}
\nabla f(\bm{x}) = \bm{0}.
\end{align*}
\end{proof}
\end{prop}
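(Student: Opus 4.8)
The plan is to prove the two implications of the equivalence separately, with the forward direction being immediate and the reverse direction resting on a single well-chosen test point.

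First I would dispatch the direction ``$\nabla f(\bm{x}^*) = \bm{0}$ implies the variational inequality.'' This is trivial: if $\nabla f(\bm{x}^*) = \bm{0}$, then for every $\bm{x} \in \mathbb{R}^d$ we have $\langle \nabla f(\bm{x}^*), \bm{x} - \bm{x}^* \rangle = \langle \bm{0}, \bm{x} - \bm{x}^* \rangle = 0 \geq 0$, so the inequality holds. Only differentiability of $f$ is needed here, and in fact not even that.

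For the converse, suppose $\langle \nabla f(\bm{x}^*), \bm{x} - \bm{x}^* \rangle \geq 0$ for all $\bm{x} \in \mathbb{R}^d$. Since the feasible region is the whole space, I may substitute any point; the natural choice is $\bm{x} := \bm{x}^* - \nabla f(\bm{x}^*) \in \mathbb{R}^d$, for which $\bm{x} - \bm{x}^* = -\nabla f(\bm{x}^*)$. Plugging in gives $0 \leq \langle \nabla f(\bm{x}^*), -\nabla f(\bm{x}^*) \rangle = -\|\nabla f(\bm{x}^*)\|^2$, hence $\|\nabla f(\bm{x}^*)\|^2 \leq 0$, which forces $\nabla f(\bm{x}^*) = \bm{0}$.

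There is essentially no obstacle: the one idea is the substitution $\bm{x} = \bm{x}^* - \nabla f(\bm{x}^*)$, which is legitimate precisely because the variational inequality is posed over the unconstrained domain $\mathbb{R}^d$ (in a constrained setting one would instead invoke interiority of $\bm{x}^*$, or argue with feasible directions and projections). Note the sign convention in the statement, $\langle \nabla f(\bm{x}^*), \bm{x} - \bm{x}^* \rangle \geq 0$, coincides with Problem~(\ref{prob:1}) after the swap $\bm{x} \leftrightarrow \bm{x}^*$ together with a sign flip; applying the proposition to $f = L_G(\cdot, \bm{w}^\star)$ and to $f = L_D(\bm{\theta}^\star, \cdot)$ then yields the equivalence asserted just before (\ref{prob:1}).
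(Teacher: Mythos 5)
Your proposal is correct and follows essentially the same route as the paper: the forward direction is immediate since the inner product vanishes, and the converse uses exactly the same test point $\bm{x} = \bm{x}^* - \nabla f(\bm{x}^*)$ to deduce $0 \leq -\|\nabla f(\bm{x}^*)\|^2$. The only differences are notational.
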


\subsection{Remarks regarding (C3)}
\label{c3}
We make the following remarks regarding (C3). 

(C3)(i)
Let $L_G^{(i)}(\cdot, \bm{w}) \colon \mathbb{R}^\Theta \to \mathbb{R}$ $(i\in \mathcal{S})$ be convex and $\nabla_{\bm{\theta}} L_G^{(i)}(\cdot, \bm{w}) \colon \mathbb{R}^\Theta \to \mathbb{R}^\Theta$ be Lipschitz continuous with the Lipschitz constant $l_G$. Let $L_D^{(i)}(\bm{\theta},\cdot) \colon \mathbb{R}^W \to \mathbb{R}$ $(i\in \mathcal{R})$ be convex and $\nabla_{\bm{w}} L_D^{(i)}(\bm{\theta},\cdot) \colon \mathbb{R}^W \to \mathbb{R}^W$ be Lipschitz continuous with the Lipschitz constant $l_D$. Then, the sequences $(\bm{\theta}_n)_{n\in\mathbb{N}}$ and $(\bm{w}_n)_{n\in\mathbb{N}}$ generated by SGD with $\alpha^G \leq 1/l_G$ and $\alpha_D \leq 1/l_D$ satisfy (C3).

(C3)(ii)
Let $L_G^{(i)}(\cdot, \bm{w}) \colon \mathbb{R}^\Theta \to \mathbb{R}$ $(i\in \mathcal{S})$ and $L_D^{(i)}(\bm{\theta},\cdot) \colon \mathbb{R}^W \to \mathbb{R}$ $(i\in \mathcal{R})$ be nonconvex. Let $B^G \subset \mathbb{R}^\Theta$ and $B^D \subset \mathbb{R}^W$ be closed balls defined by $B^G := \{ \bm{\theta} \in \mathbb{R}^\Theta \colon \| \bm{\theta} - \bm{c}^G \| \leq r^G \}$ and $B^D := \{ \bm{w} \in \mathbb{R}^W \colon \| \bm{w} - \bm{c}^D \| \leq r^D \}$, where $\bm{c}^G \in \mathbb{R}^\Theta$, $\bm{c}^D \in \mathbb{R}^W$, and $r^G, r^D > 0$ are large radii of $B^G$ and $B^D$. We replace $\bm{\theta}_{n+1} := \bm{\theta}_n + \alpha_n^G \bm{d}^{G}_n$ and $\bm{w}_{n+1} := \bm{w}_n + \alpha_n^D \bm{d}^{D}_n$ in Algorithm \ref{algo:1} with 
\begin{align}\label{c_b}
\bm{\theta}_{n+1} := P_{G} \left(\bm{\theta}_n + \alpha_n^G \bm{d}^{G}_n \right) \text{ and }
\bm{w}_{n+1} := P_{D} \left( \bm{w}_n + \alpha_n^D \bm{d}^{D}_n \right),
\end{align}
where $P_{G}$ is the projection onto $B^G$ and $P_{D}$ is the projection onto $B^D$. Then, the sequences $(\bm{\theta}_n)_{n\in\mathbb{N}} \subset B^G$ and $(\bm{w}_n)_{n\in\mathbb{N}} \subset B^D$ generated by (\ref{c_b}) are bounded. Thus, the nonexpansivity conditions of $P_{G}$ and $P_{D}$ guarantee that Algorithm \ref{algo:1} using (\ref{c_b}) satisfies (\ref{main}) without assuming (C3).

\begin{proof}
[Proof of (C3)(i)] Let $n\in \mathbb{N}$. We define $\phi_n \colon \mathbb{R}^\Theta \to \mathbb{R}$ for all $\bm{\theta} \in \mathbb{R}^\Theta$ by 
\begin{align*}
 \phi_n (\bm{\theta}) 
 := L_{G, \mathcal{S}_n}(\bm{\theta}_n,\bm{w}_n) + \langle \nabla L_{G, \mathcal{S}_n} (\bm{\theta}_n), \bm{\theta}-\bm{\theta}_n \rangle + \frac{1}{2\alpha^G} \|\bm{\theta} - \bm{\theta}_n\|^2.
\end{align*}
The convexity of $L_{G, \mathcal{S}_n}(\cdot,\bm{w}_n)$ implies that, for all $\bm{\theta} \in \mathbb{R}^\Theta$, $L_{G, \mathcal{S}_n}(\bm{\theta},\bm{w}_n) \geq L_{G, \mathcal{S}_n}(\bm{\theta}_n,\bm{w}_n) + \langle \nabla L_{G, \mathcal{S}_n} (\bm{\theta}_n), \bm{\theta}-\bm{\theta}_n \rangle$. Hence, we have that, for all $\bm{\theta} \in \mathbb{R}^\Theta$, 
\begin{align}\label{ineq:sdm_convex_0}
 \phi_n (\bm{\theta})
 \leq 
 L_{G, \mathcal{S}_n}(\bm{\theta},\bm{w}_n)
 + \frac{1}{2\alpha^G} \|\bm{\theta} - \bm{\theta}_n\|^2.
\end{align}
Moreover, from $\nabla_{\bm{\theta}} \phi_n (\bm{\theta}) = \nabla L_{G, \mathcal{S}_n} (\bm{\theta}_n) + (1/\alpha^G)(\bm{\theta} - \bm{\theta}_n)$, $\phi_n$ is strongly convex with a constant $1/\alpha^G$. Accordingly, there exists a unique minimizer $\tilde{\bm{\theta}}_n$ of $\phi_n$ such that 
\begin{align*}
 \bm{0}
 = \nabla_{\bm{\theta}} \phi_n (\tilde{\bm{\theta}}_n) 
 = \nabla L_{G, \mathcal{S}_n} (\bm{\theta}_n)
+ \frac{1}{\alpha^G} (\bm{\theta} - \bm{\theta}_n),
\end{align*}
i.e.,
\begin{align*}
 \tilde{\bm{\theta}}_n = \bm{\theta}_n - \alpha^G \nabla L_{G, \mathcal{S}_n} (\bm{\theta}_n) =: \bm{\theta}_{n+1}.
\end{align*}
The strong convexity of $\phi_n$ and $\nabla_{\bm{\theta}} \phi_n (\bm{\theta}_{n+1}) = \bm{0}$ imply that 
\begin{align}\label{ineq:sdm_convex_1}
\begin{split}
 \phi_n (\bm{\theta})
 &\geq 
 \phi_n (\bm{\theta}_{n+1})
 + 
 \langle \nabla_{\bm{\theta}} \phi_n (\bm{\theta}_{n+1}), \bm{\theta}-\bm{\theta}_{n+1} \rangle 
 +
 \frac{1}{2 \alpha^G}\|\bm{\theta}_{n+1} - \bm{\theta}\|^2\\
 &= 
 \phi_n (\bm{\theta}_{n+1})
 +
 \frac{1}{2 \alpha^G}\|\bm{\theta}_{n+1} - \bm{\theta}\|^2.
\end{split}
\end{align}
The condition $\alpha^G \leq 1/l_G$ and the Lipschitz continuity of $\nabla L_{G, \mathcal{S}_n} (\cdot)$ ensure that
\begin{align}\label{ineq:sdm_convex_2}
\begin{split}
 \phi_n(\bm{\theta}_{n+1})
 &:= 
 L_{G, \mathcal{S}_n}(\bm{\theta}_n,\bm{w}_n) + \langle \nabla L_{G, \mathcal{S}_n} (\bm{\theta}_n), \bm{\theta}_{n+1}-\bm{\theta}_n \rangle + \frac{1}{2\alpha^G} \|\bm{\theta}_{n+1} - \bm{\theta}_n\|^2\\
 &\geq 
 L_{G, \mathcal{S}_n}(\bm{\theta}_n,\bm{w}_n) + \langle \nabla L_{G, \mathcal{S}_n} (\bm{\theta}_n), \bm{\theta}_{n+1}-\bm{\theta}_n \rangle + \frac{l_G}{2} \|\bm{\theta}_{n+1} - \bm{\theta}_n\|^2\\
 &\geq 
 L_{G, \mathcal{S}_n}(\bm{\theta}_{n+1},\bm{w}_n).
\end{split} 
\end{align}
From (\ref{ineq:sdm_convex_0}), (\ref{ineq:sdm_convex_1}), and (\ref{ineq:sdm_convex_2}), 
\begin{align*}
 L_{G, \mathcal{S}_n}(\bm{\theta},\bm{w}_n) + \frac{1}{2 \alpha^G}\|\bm{\theta}_n -\bm{\theta}\|^2
 \geq
 L_{G, \mathcal{S}_n}(\bm{\theta}_{n+1},\bm{w}_n) + \frac{1}{2 \alpha^G}\|\bm{\theta}_{n+1} -\bm{\theta}\|^2,
\end{align*}
which implies that, for all $\bm{\theta} \in \mathbb{R}^\Theta$,
\begin{align*}
 L_{G}(\bm{\theta},\bm{w}_n) + \frac{1}{2 \alpha^G}\|\bm{\theta}_n -\bm{\theta}\|^2
 \geq
 L_{G}(\bm{\theta}_{n+1},\bm{w}_n) + \frac{1}{2 \alpha^G}\|\bm{\theta}_{n+1} -\bm{\theta}\|^2.
\end{align*}
Let $\bm{\theta}^* \in \mathbb{R}^\Theta$ be a minimizer of $L_{G}(\bm{\theta},\bm{w}_n)$. Then, we have 
\begin{align*}
\frac{1}{2 \alpha^G} \left( 
\|\bm{\theta}_{n+1} -\bm{\theta}^*\|^2 - 
\|\bm{\theta}_{n} -\bm{\theta}^*\|^2
\right)
\leq 
L_{G}(\bm{\theta}^*,\bm{w}_n) - L_{G}(\bm{\theta}_{n+1},\bm{w}_n)
\leq 0.
\end{align*}
Since $(\|\bm{\theta}_{n} -\bm{\theta}^*\|^2)_{n\in \mathbb{N}}$ is monotone decreasing, the sequence $(\|\bm{\theta}_{n} -\bm{\theta}^*\|^2)_{n\in \mathbb{N}}$ is bounded. The discriminator can be defined by replacing $G$, $\mathcal{S}_n$, $\bm{\theta}$, and $\bm{w}$ in the generator with $D$, $\mathcal{R}_n$, $\bm{w}$, and $\bm{\theta}$.
\end{proof}

Details of (C3)(ii): Let $\bm{\theta} \in B^G \subset \mathbb{R}^\Theta$ and $n\in\mathbb{N}$. The definition of $\bm{\theta}_{n+1}$ and the nonexpansivity of $P_G$ (i.e., $\|P_G(\bm{\theta}_1) - P_G(\bm{\theta}_2)\|_{\mathsf{H}} \leq \|\bm{\theta}_1 - \bm{\theta}_2\|_{\mathsf{H}}$ $(\bm{\theta}_1,\bm{\theta}_2 \in \mathbb{R}^\Theta, \mathsf{H} \in \mathbb{S}_{++}^\Theta)$) imply that 
\begin{align*}
\| \bm{\theta}_{n+1} - \bm{\theta} \|_{\mathsf{H}_n^G}^2
&=
\| P_G(\bm{\theta}_{n} + \alpha_n^G \bm{d}_n^G) - P_G(\bm{\theta}) \|_{\mathsf{H}_n^G}^2\\
&\leq 
\| (\bm{\theta}_{n} - \bm{\theta} )+ \alpha_n^G \bm{d}_n^G \|_{\mathsf{H}_n^G}^2
=
\| \bm{\theta}_{n} - \bm{\theta} \|_{\mathsf{H}_n^G}^2
+ 2 \alpha_n^G \langle \bm{\theta}_{n} - \bm{\theta}, \bm{d}_n^G \rangle_{\mathsf{H}_n^G}
+ \alpha_n^{G^2} \|\bm{d}_n^G \|_{\mathsf{H}_n^G}^2.
\end{align*}
Hence, a discussion similar to the one proving Theorem \ref{thm:1} ensures that Algorithm \ref{algo:1} using (\ref{c_b}) satisfies (\ref{main}) without assuming (C3).
\end{document}